\definecolor{codegreen}{rgb}{0,0.6,0}
\definecolor{codegray}{rgb}{0.5,0.5,0.5}
\definecolor{codepurple}{rgb}{0.58,0,0.82}
\definecolor{backcolour}{rgb}{1.0,1.0,1.0}
\lstdefinestyle{mystyle}{
    backgroundcolor=\color{backcolour},   
    commentstyle=\color{codegreen},
    keywordstyle=\color{magenta},
    numberstyle=\tiny\color{codegray},
    stringstyle=\color{codepurple},
    basicstyle=\ttfamily\footnotesize,
    breakatwhitespace=false,         
    breaklines=true,                 
    captionpos=b,                    
    keepspaces=true,                 
    numbers=left,                    
    numbersep=5pt,                  
    showspaces=false,                
    showstringspaces=false,
    showtabs=false,                  
    tabsize=2
}
\newcommand{\cmark}{\ding{51}}%
\newcommand{\xmark}{\ding{55}}%
\newcommand{\argmax}[1]{\underset{#1}{\operatorname{argmax}}}
\renewcommand{\mid}[0]{\:\vert\:}
\def\expected{\mathbb{E}}
\def\E{E}
\newcommand{\R}[1]{R}
\newcommand{\G}[1]{G}
\newcommand{\Ad}[1]{A}
\newcommand{\CC}[1]{C}
\def\eg{{\em e.g.,}}
\def\ie{{\em i.e.,}}
\def \cf{{\em c.f.}\xspace}
\newcommand{\figref}[1]{Figure~\ref{#1}}
\newcommand{\algref}[1]{Algorithm~\ref{#1}}
\newcommand{\tabref}[1]{Table~\ref{#1}}
\newcommand{\theoref}[1]{Theorem~\ref{#1}}
\newcommand{\secref}[1]{Section~\ref{#1}}
\newcommand{\propref}[1]{Proposition~\ref{#1}}
\def\twofigref#1#2{Figures \ref{#1} and \ref{#2}}
\def\twosecrefs#1#2{Sections \ref{#1} and \ref{#2}}
\def\eqref#1{Equation~\ref{#1}}
\def\1{\bm{1}}
\def\rvu{{\mathbf{i}}}
\def\rvu{{\mathbf{u}}}
\def\rvv{{\mathbf{v}}}
\DeclareMathAlphabet{\mathsfit}{\encodingdefault}{\sfdefault}{m}{sl}
\SetMathAlphabet{\mathsfit}{bold}{\encodingdefault}{\sfdefault}{bx}{n}
\def\gA{{\mathcal{A}}}
\def\gD{{\mathcal{D}}}
\def\gF{{\mathcal{F}}}
\def\gH{{\mathcal{H}}}
\def\gL{{\mathcal{L}}}
\def\gM{{\mathcal{M}}}
\def\gN{{\mathcal{N}}}
\def\gS{{\mathcal{S}}}
\def\gW{{\mathcal{W}}}
\def\gX{{\mathcal{X}}}
\def\gY{{\mathcal{Y}}}
\def\sM{{\mathbb{M}}}
\def\sR{{\mathbb{R}}}
\newcommand{\KL}{D_{\mathrm{KL}}}
\definecolor{NiceGreen}{HTML}{57bb8a}
\definecolor{DarkRed}{HTML}{ea9089}
\definecolor{LightRed}{HTML}{fdf2f1}
\definecolor{LighterRed}{HTML}{fffbfa}
\definecolor{b1}{HTML}{3498db}
\definecolor{b2}{HTML}{4393C3}
\definecolor{b3}{HTML}{92C5DE}
\definecolor{b4}{HTML}{D1E5F0}
\definecolor{b5}{HTML}{F7F7F7}
\definecolor{NiceBrown}{HTML}{d35400}
\definecolor{Beige}{HTML}{d9b382}
\newcommand{\normaltext}[1]{\text{\normalsize #1}}
\newcommand{\scaletab}[1]{\scalebox{0.97}{#1}}
\newlength{\oldtextfloatsep}\setlength{\oldtextfloatsep}{\textfloatsep}
\newcommand{\NA}{\textendash}
\declaretheorem{lemma}
\tikzset{
  ->,
  >=stealth',
  node distance=3cm,
  every state/.style={thick, fill=gray!10},
  initial text=$ $,
}
\newcommand{\nicebrown}[1]{\textcolor{NiceBrown}{#1}}
\newcommand\revision[1]{\textcolor{black}{#1}}
\newcommand\revisionNew[1]{\textcolor{black}{#1}}
\def\Fpibisim{\gF_{\sim}}
\def\Fbisim{\gF^{e}_{\sim}}
\def\dist{\textsc{Dist}}
\newcommand\comment[1]{}
\title{Contrastive Behavioral Similarity\\ Embeddings for Generalization in\\ Reinforcement Learning}
\author{%
  Rishabh Agarwal\thanks{Also at Mila, Université de Montréal. $^{\mathcal{z}}$Now at DeepMind.} \quad Marlos C. Machado$^{\mathcal{z}}$ \quad Pablo Samuel Castro \quad Marc G. Bellemare \\
  Google Research, Brain Team\\
  \texttt{\{rishabhagarwal, marlosm, psc, bellemare\}@google.com }\\
}
\begin{document}

\maketitle

\vspace{-0.2cm}
\begin{abstract}
\vspace{-0.2cm}
  Reinforcement learning methods trained on few environments rarely learn policies that generalize to unseen environments. 
To improve generalization, we incorporate the inherent sequential structure in reinforcement learning into the representation learning process. This approach is orthogonal to recent approaches, which rarely exploit this structure explicitly.
Specifically, we introduce a theoretically motivated \emph{policy similarity metric}~(PSM) for measuring behavioral similarity between states.  PSM assigns high similarity to states for which the optimal policies in those states as well as in future states are similar. We also present a contrastive representation learning procedure to embed any state similarity metric, which we instantiate with PSM to obtain \emph{policy similarity embeddings}~(PSEs\footnote{Pronounce `pisces'.}).
We demonstrate that PSEs improve generalization on diverse benchmarks, including LQR with spurious correlations, a jumping task from pixels, and Distracting DM Control Suite. Source code is available at \url{agarwl.github.io/pse}.
\end{abstract}

\vspace{-0.2cm}
\section{Introduction}
\vspace{-0.2cm}
\begin{wrapfigure}{r}{0.45\linewidth}
  \centering
  \vspace{-0.72cm}
  \includegraphics[width=0.92\linewidth]{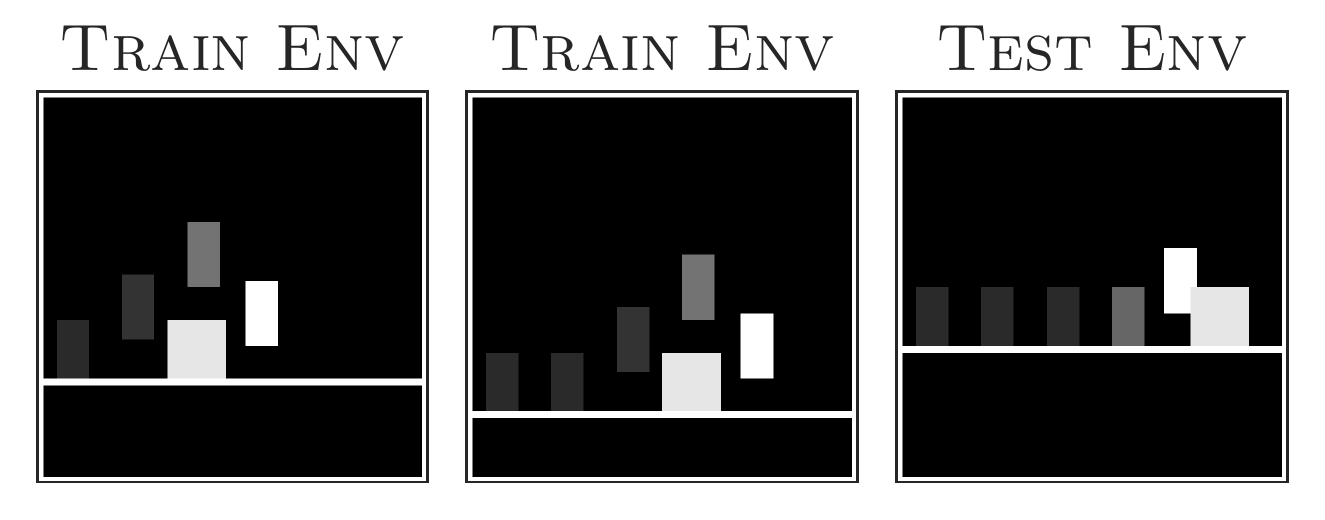} 
  \vspace{-0.3cm}
  \caption{\small{\textbf{Jumping task}: The agent~(white block), learning from pixels, needs to jump over an obstacle~(grey square). The challenge is to generalize to unseen obstacle positions and floor heights in test tasks using a small number of training tasks. We show the agent's trajectories using faded blocks.
  }}\label{fig:intro_jumping}
  \vspace{-0.5cm}
\end{wrapfigure}

Current reinforcement learning~(RL) approaches often learn policies that do not generalize to environments different than those the agent was trained on, even when these environments are semantically equivalent~\citep{Combes18, song2019observational, Cobbe19a}. 
For example, consider a jumping task where an agent, learning from pixels, needs to jump over an obstacle~(\figref{fig:intro_jumping}). Deep RL agents trained on a few of these jumping tasks with different obstacle positions struggle to successfully jump in test tasks where obstacles are at previously unseen locations.

Recent solutions to circumvent poor generalization in RL are adapted from supervised learning, 
and, as such, largely ignore the sequential aspect of RL. 
Most of these solutions 
revolve around enhancing the learning process
, including data augmentation~\citep[\eg][]{Kostrikov20, Lee20}, regularization~\citep{Cobbe19a,Farebrother18}, noise injection~\citep{igl2019generalization}, and diverse training conditions~\citep{tobin2017domain}; they rarely exploit properties of the sequential decision making problem such as similarity in actions across temporal observations. 

Instead, we tackle generalization by incorporating properties of the RL problem into the representation learning process. Our approach exploits the fact that an agent, when operating in environments with similar underlying mechanics, exhibits at least short sequences of behaviors that are similar across these environments. Concretely, the 
agent is optimized to learn an embedding in which states are close when the agent's optimal policies in these states and future states are similar. This notion of proximity is general and it is applicable to observations from different environments. 

Specifically, inspired by bisimulation metrics~\citep{Castro20, Ferns04}, we propose a novel \emph{policy similarity metric}~(PSM). PSM~(\secref{sec:policy_sim}) defines a notion of similarity between states originated from different environments by the proximity of the long-term optimal behavior from these states. 
PSM is reward-agnostic, making it more robust for generalization compared to approaches that rely on reward information. We prove that PSM yields an upper bound on suboptimality of policies transferred from one environment to another~(\theoref{thm:mainTheorem}), which is not attainable with bisimulation.

We employ PSM for representation learning and introduce \emph{policy similarity embeddings}~(PSEs) for deep RL. 
To do so, we present a general contrastive procedure~(\secref{sec:learning_cme}) to learn an embedding based on any state similarity metric. PSEs are the instantiation of this procedure with PSM. PSEs are appealing for generalization as they encode task-relevant invariances by putting behaviorally equivalent states together. This is unlike prior approaches, which
rely on capturing such invariances without being \emph{explicitly} trained to do so, for example, 
through value function similarities across states~\citep[\eg][]{Castro10}, or being robust to fixed transformations of the observation space~\citep[\eg][]{Kostrikov20, Laskin20}.


PSEs lead to better generalization while being orthogonal to how most of the field has been tackling generalization. We illustrate the efficacy and broad applicability of our approach on three existing benchmarks \revision{specifically designed to test generalization}: (i) jumping task from pixels~\citep{Combes18}~(\secref{sec:jumping}), (ii) LQR with spurious correlations~\citep{song2019observational}~(\secref{sec:lqr}),  and (iii) Distracting DM Control Suite~\citep{distractingdm2020}~(\secref{sec:distract_dm}). Our approach improves generalization compared to 
a wide variety of approaches including standard regularization~\citep{Farebrother18, Cobbe19a}, bisimulation~\citep{Castro10, Castro20, zhang2020learning},
out-of-distribution generalization~\citep{Arjovsky19} and state-of-the-art data augmentation~\citep{Kostrikov20, Laskin20, Lee20}.


\vspace{-0.2cm}
\section{Preliminaries}
\label{sec:back}
\vspace{-0.2cm}
We describe an environment as a Markov decision process~(MDP) 
 \citep{puterman1994markov} $\gM=(\revision{\gX}, \gA, R, P, \gamma)$, with a state space $\gX$, an
action space $\gA$, a reward function $R$, transition dynamics $P$, and a discount factor $\gamma \in [0, 1)$. A policy $\pi(\cdot \, \mid \, x)$ maps states $x \in \gX$ to distributions over actions. 
Whenever convenient, we abuse notation and write $\pi(x)$ to describe the probability distribution $\pi( \cdot \, | \, x)$, treating $\pi(x)$ as a vector. 
In RL, the goal is to find an optimal policy $\pi^*$ that maximizes the cumulative expected return $\expected_{a_t \sim \pi(\cdot \, \mid \, x_t)}[ \sum_{t} \gamma^t R(x_t, a_t)]$ starting from an initial state $x_0$. 

We are interested in learning a policy that generalizes across related environments. We formalize this by considering a collection $\rho$ of MDPs, sharing an action space $\gA$ but with disjoint state spaces. We use $\gX$ and $\gY$ to denote the state spaces of specific environments, and write $R_\gX$, $P_\gX$ for the reward and transition functions of the MDP whose state space is $\gX$, and $\pi^{*}_{\gX}$ for its optimal policy, which we assume unique without loss of generality.
For a given policy $\pi$, we further specialize these into $R^{\pi}_{\gX}$ and $P^{\pi}_{\gX}$, the reward and state-to-state transition dynamics arising from following $\pi$ in that MDP.

We write $\gS$ for the union of the state spaces of the MDPs in $\rho$.
Concretely, different MDPs correspond to specific scenarios in a problem class~(\figref{fig:intro_jumping}), and $\gS$ is the space of all possible configurations.
Used without subscripts, $R$, $P$, and $\pi$ refer to the reward and transition function of this ``union MDP'', and a policy defined over $\gS$; this notation simplifies the exposition.
We measure distances between states across environments using pseudometrics\footnote{Pseudometrics are generalization of metrics where the distance between two distinct states can be zero.} on $\gS$; the set of all such pseudometrics is $\sM$, and $\sM_p$ is the set of metrics on probability distributions over $\gS$.

In our setting, the learner has access to a collection of training MDPs $\{ \gM_i \}_{i=1}^N$, drawn from $\rho$. After interacting with these environments, the learner must produce a policy $\pi$ over the entire state space $\gS$, which is then evaluated on unseen MDPs from $\rho$. Similar in spirit to the setting of transfer learning \citep{Taylor09}, here we evaluate the policy's zero-shot performance on $\rho$.

Our policy similarity metric~(\secref{sec:policy_sim}) builds on the concept of $\bm{\pi}$-\textbf{bisimulation}~\citep{Castro20}.
Under the $\pi$-bisimulation metric, the distance between two states, $x$ and $y$, is defined in terms of the difference between the expected rewards obtained when following policy $\pi$. 
The $\pi$-bisimulation metric $d_{\pi}$ satisfies a recursive equation based on the 1-Wasserstein metric $\gW_1:\sM\rightarrow\sM_p$, where $\gW_1(d)(A, B)$ is the minimal cost of transporting probability mass from $A$ to $B$ (two probability distributions on $\gS$) under the base metric $d$ \citep{villani08optimal}. The recursion is
\begin{align}
d_{\pi}(x, y) = |R^{\pi}(x) - R^{\pi}(y)| + \gamma \gW_1(d_{\pi})\big(P^{\pi}(\cdot \, | \, x), P^{\pi}(\cdot \, | \, y)\big) \, , \qquad x, y \in \gS .\label{eq:pi_bisim}
\end{align}
To achieve good generalization properties, we learn an embedding function $z_\theta : \gS \to \mathbb{R}^k$ that reflects the information encoded in the policy similarity metric; this yields a policy similarity embedding~(\secref{sec:learning_cme}). 
We use contrastive methods~\citep{hadsell2006dimensionality, oord2018representation} whose track record for representation learning is well-established.
We adapt \textbf{SimCLR}~\citep{Chen20}, a popular contrastive method for learning embeddings of image inputs.
Given two inputs $x$ and $y$, their \emph{embedding similarity} is $s_\theta(x, y) = sim(z_{\theta}(x), z_{\theta}(y))$, where 
\mbox{$sim(u, v) = \frac{u^{T}v}{\|u\| \|v\|}$} denotes the cosine similarity function. 
SimCLR aims to maximize similarity between augmented versions of an image~(\eg\ crops, colour changes) while minimizing its similarity to other images.
The loss used by SimCLR for two versions $x, y$ of an image, and a set $\gX^\prime$ containing other images is:
\begin{equation}
\ell_\theta(x, y; \gX^\prime) = - \log \frac{\exp(\lambda s_\theta(x, y))}{ \exp(\lambda s_\theta(x, y))  +   \sum_{x^{\prime} \in {\gX^\prime \setminus \{x\}}} \exp(\lambda s_\theta(x^\prime, y))}\label{eq:simclr}
\end{equation}
where $\lambda$ is an inverse temperature hyperparameter. The overall SimCLR loss is then the expected value of $\ell_\theta(x, y; S)$, when $x$, $y$, and $\gX^\prime$ are drawn from some augmented training distribution.

\vspace{-0.2cm}
\section{Policy Similarity Metric}
\label{sec:policy_sim}
\vspace{-0.2cm}


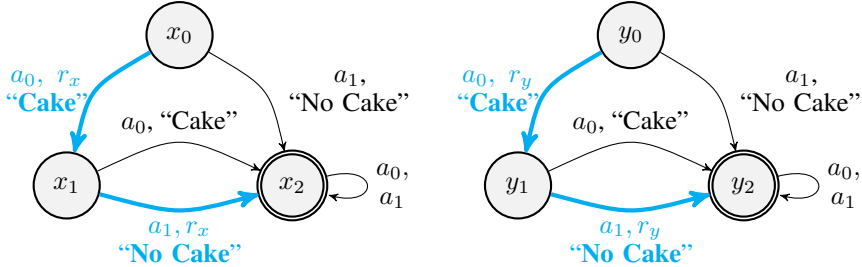
\begin{figure}[!t]
  \centering
  \begin{tikzpicture}
   \node[state] (x1) {$x_1$};
   \node[state, right of=x1, accepting] (x2) {$x_2$};
   \node[state] at (7.5, 2) (y0) {$y_0$};
   \node[state] at (1.5, 2) (x0) {$x_0$};
   \node[state, right of=x2] (y1) {$y_1$};
   \node[state, right of=y1, accepting] (y2) {$y_2$};

   \draw[->, cyan, ultra thick] (x0) .. controls (0.3, 1.3) .. node[left, text width=1.0cm, align=center]{$a_0,\ r_x$\\``{\bf Cake}''} (x1);
   \draw[->] (x0) .. controls (2.7, 1.3) .. node[right, text width=2.0cm, align=center]{$a_1$,\\``No Cake''} (x2);
   \draw[->] (x1) .. controls (1.5, 0.7) .. node[above]{$a_0$, ``Cake''} (x2);
   \draw[->, cyan, ultra thick] (x1) .. controls (1.5, -0.4) .. node[below, text width=2.0cm, align=center]{$a_1, r_x$\\``{\bf No Cake}''} (x2);
   \draw[->] (x2) edge[loop right] node[text width=0.4cm,  align=center]{$a_0,$\\ $a_1$} (x2);

   \draw[->, cyan, ultra thick] (y0) .. controls (6.3, 1.3) .. node[left, text width=1.0cm, align=center]{$a_0,\ r_y$\\``{\bf Cake}''} (y1);
   \draw[->] (y0) .. controls (8.7, 1.3) .. node[right, text width=2.0cm, align=center]{$a_1$,\\``No Cake''} (y2);
   \draw[->] (y1) .. controls (7.5, 0.7) .. node[above]{$a_0$, ``Cake''} (y2);
   \draw[->, cyan, ultra thick] (y1) .. controls (7.5, -0.4) .. node[below, text width=2.0cm, align=center]{$a_1, r_y$\\``{\bf No Cake}''} (y2);
   \draw[->] (y2) edge[loop right] node[text width=0.4cm,  align=center]{$a_0,$\\ $a_1$} (y2);
  \end{tikzpicture}
  \caption{Cyan edges represent actions with a positive reward, which are also the optimal actions. Zero rewards everywhere else. $x_0, y_0$ are the start states while $x_2, y_2$ are the terminal states.}\label{fig:example_bisim}
  \vspace{-0.5cm}
\end{figure}

A useful tool in learning a policy that generalizes is to understand which states result in similar behavior, and which do not. 
To be maximally effective, this similarity should go beyond the immediately chosen action and consider long-term behavior.
In this regards, the $\pi$-bisimulation metrics are interesting as they are based on the full sequence of future rewards received from different states.
However, considering rewards can be both too restrictive (when the policies are the same, but the obtained rewards are not; see \figref{fig:example_bisim}) or too permissive (when the policies are different, but the obtained rewards are not; see \figref{fig:jumping_diff_policy}). In fact, $\pi$-bisimulation metrics actually lead to poor generalization in our experiments~(\twosecrefs{sec:jumping_gen}{sec:jumping_abl_viz}).

To address this issue, we instead consider the similarity between policies themselves. We replace the absolute reward difference by a probability pseudometric between policies, denoted $\dist$. Additionally, since we would like to perform well in unseen environments, we are interested in similarity in \emph{optimal} behavior. We thus use $\pi^*$ as the grounding policy. This yields the \textit{policy similarity metric}~(PSM), for which states are close when the optimal policies in these states and future states are similar.
For a given $\dist$, the PSM $d^{*}:\gS\times \gS\rightarrow\mathbb{R}$ satisfies the recursive equation
\begin{align}\label{eq:f1}
    d^{*}(x, y) = \underbrace{\dist\big(\pi^*(x),  \pi^*(y)\big)}_{\normaltext{(A)}} +  \underbrace{\gamma \gW_1(d^{*})\big(P^{\pi^*}(\cdot \, | \, x), P^{\pi^*}(\cdot \, | \, y)\big)}_{\normaltext{(B)}}.
\end{align}
The $\dist$ term captures the difference in local optimal behavior (A) while $\gW_1$ captures long-term optimal behavior difference (B); the exact weights assigned to the two are given by the discount factor. Furthermore, when $\dist$ is bounded, $d^*$ is guaranteed to be finite. While there are technically multiple PSMs (one for each $\dist$), we omit this distinction whenever clear from context.
A proof of the uniqueness of $d^*$ is given in \propref{thm:fixedpoint}. 

Our main use of PSM will be to compare states across environments. In this context, we identify the terms in Equation \ref{eq:f1} with specific environments for clarity and write (despite its technical inaccuracy)
\begin{align*}
    d^{*}(x, y) = \dist\big(\pi^*_{\gX}(x),  \pi^*_{\gY}(y)\big) +  \gamma \gW_1 \big(d^{*})(P^{\pi^*}_\gX(\cdot \, | \, x), P^{\pi^*}_\gY(\cdot \, | \, y)\big) .
\end{align*}
PSM is applicable to both discrete and continuous action spaces. In our experiments, $\dist$ is the total variation distance~($TV$) when $\gA$ is discrete, and we use the $\ell_1$ distance between the mean actions of the two policies when $\gA$ is continuous. PSM can be computed iteratively using dynamic programming~\citep{ferns2011bisimulation}~(more details in \secref{app:psm_compute}).
Furthermore, when $\pi^*$ is unavailable on training environments, we replace it by an approximation $\hat \pi^*$ to obtain an approximate PSM, \revisionNew{which is close to the exact PSM depending on the suboptimality of $\hat{\pi}^*$~(\propref{prop:mainProposition})}. 

Despite resembling $\pi$-bisimulation metrics in form, the PSM possesses different characteristics which are better suited to the problem of generalizing a learned policy. To illustrate this point, consider the following simple nearest-neighbour scheme: Given a state $y \in \gY$, 
denote its closest match in $\gX$ by $\tilde{x}_y := \arg\min_{x\in \gX}d^*(x, y)$. Suppose that we use this scheme to transfer $\pi^*_\gX$ to $\gM_\gY$, in the sense that we behave according to the policy $\tilde{\pi}(y) = \pi^*_\gX(\tilde{x}_y)$. We can then bound the difference between $\tilde \pi$ and $\pi^*_{\gY}$, something that is not possible if $d^*$ is replaced by a $\pi$-bisimulation metric.


\begin{restatable}{theorem}{mainTheorem}[Bound on policy transfer]\label{thm:mainTheorem}
	For any $y\in \gY$, let $Y^t_{y}\sim P^{\tilde{\pi}}(\cdot \,|\, Y^{t-1}_y)$ define the sequence of random states encountered starting in $Y^0_y=y$ and following policy $\tilde \pi$. We have:
	\vspace{-0.05cm}
  \begin{equation*}
  	\expected_{Y^t_{y}}\left[ \sum_{t\geq 0} \gamma^t TV\left(\tilde{\pi}(Y^t_y), \pi^*(Y^t_y)\right) \right]
  	\leq \frac{1 + \gamma}{1-\gamma} d^*(\tilde{x}_y, y)~.
  \end{equation*}
\end{restatable}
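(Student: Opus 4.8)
The plan is to read the left-hand side as a discounted cost-to-go and compare it against the same cost accumulated under the grounding policy $\pi^*$. Set $g(y) := TV(\tilde{\pi}(y),\pi^*(y))$, the per-state policy defect. Since $\tilde{\pi}(y)=\pi^*_\gX(\tilde{x}_y)$ and $\dist=TV$, evaluating \eqref{eq:f1} at the pair $(\tilde{x}_y,y)$ identifies $g(y)$ with term~(A), so that $g(y)=\dist(\pi^*(\tilde{x}_y),\pi^*(y))\le d^*(\tilde{x}_y,y)$. Writing $(P^{\pi}F)(y):=\expected_{y'\sim P^{\pi}(\cdot\,|\,y)}[F(y')]$, the quantity to bound is exactly $F(y):=\expected^{\tilde{\pi}}\big[\sum_{t\ge0}\gamma^t g(Y^t_y)\big]$, which satisfies the Bellman identity $F(y)=g(y)+\gamma (P^{\tilde{\pi}}F)(y)$.

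First I would control the on-policy analogue $F^*(y):=\expected^{\pi^*}\big[\sum_{t\ge0}\gamma^t g(Y^t_y)\big]$ by the metric, showing $F^*\le\psi$ where $\psi(y):=d^*(\tilde{x}_y,y)=\min_{x\in\gX}d^*(x,y)$. This is where the structure of the PSM does the work. Because $\gW_1(d^*)$ is an infimum over couplings and $\psi(y')=\min_{x}d^*(x,y')\le d^*(x',y')$ for every coupled pair, one gets $\gW_1(d^*)\big(P^{\pi^*}_\gX(\cdot\,|\,\tilde{x}_y),P^{\pi^*}_\gY(\cdot\,|\,y)\big)\ge (P^{\pi^*}\psi)(y)$. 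Substituting into \eqref{eq:f1} yields $\psi(y)\ge g(y)+\gamma (P^{\pi^*}\psi)(y)$, i.e.\ $\psi$ is a super-solution of the $\pi^*$-Bellman equation whose unique fixed point is $F^*$; monotonicity of that $\gamma$-contraction then gives $F^*\le\psi$.

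The crux, and the step I expect to be the main obstacle, is that the trajectory in the statement follows $\tilde{\pi}$ rather than $\pi^*$, so \eqref{eq:f1} cannot be applied along it directly. My plan is to absorb this mismatch into the cost, performed once at the level of the whole value function rather than step by step. Starting from the identity $F(y)=g(y)+\gamma (P^{\pi^*}F)(y)+\gamma\big[(P^{\tilde{\pi}}F)(y)-(P^{\pi^*}F)(y)\big]$, I would bound the bracketed correction by $\lVert F\rVert_\infty\,\lVert P^{\tilde{\pi}}(\cdot\,|\,y)-P^{\pi^*}(\cdot\,|\,y)\rVert_1$. As the dynamics are shared, the last norm is at most $2\,TV(\tilde{\pi}(y),\pi^*(y))=2g(y)$, and since $g\le1$ we have the crude but essential bound $\lVert F\rVert_\infty\le 1/(1-\gamma)$. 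This produces
\[ F(y)\ \le\ g(y)+\tfrac{2\gamma}{1-\gamma}\,g(y)+\gamma (P^{\pi^*}F)(y)\ =\ \tfrac{1+\gamma}{1-\gamma}\,g(y)+\gamma (P^{\pi^*}F)(y), \]
which is precisely where the factor $\tfrac{1+\gamma}{1-\gamma}$ is born.

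Finally, the last display exhibits $F$ as a sub-solution of the $\pi^*$-Bellman equation with inflated cost $\tfrac{1+\gamma}{1-\gamma}\,g$. By the same monotone-contraction comparison used for $\psi$, this gives $F\le\tfrac{1+\gamma}{1-\gamma}F^*$, and combining with $F^*\le\psi=d^*(\tilde{x}_y,y)$ from the first step yields the claim. The reason the argument closes with the clean constant $\tfrac{1+\gamma}{1-\gamma}$, rather than a diameter-dependent one, is that the off-policy correction is taken globally and leans on $\dist=TV\le1$; I would double-check that no intermediate step secretly reintroduces $\operatorname{diam}(d^*)$, since a naive step-by-step coupling of the $\tilde{\pi}$- and $\pi^*$-transitions would do exactly that.
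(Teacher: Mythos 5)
Your proof is correct and arrives at the theorem's bound with the same constant, but it is packaged differently from the paper's argument. The paper proceeds by induction on the truncated sums $TV^n$, interleaving at each step (i) an off-policy correction lemma bounding $\sum_{y'}\big(P^{\tilde{\pi}}(y'|y)-P^{\pi^*}(y'|y)\big)TV^{n}(Y^0_{y'})$ by $\tfrac{2}{1-\gamma}TV(\tilde{\pi}(y),\pi^*(y))$, and (ii) a lemma showing that $\tfrac{1-\gamma}{1+\gamma}TV^n(Y^0_\cdot)$ (together with the zero vector) is feasible for the Kantorovich dual of $\gW_1(d^*)$, which absorbs the on-policy expectation into the $\gW_1$ term of the PSM recursion. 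You instead factor the bound through the on-policy accumulated defect $F^*$, which never appears in the paper: you prove $F^*\le\psi$ by exhibiting $\psi(y)=\min_{x}d^*(x,y)$ as a supersolution of the $\pi^*$-Bellman equation, handling the Wasserstein term on the primal side via the pointwise minorant $d^*(x',y')\ge\psi(y')$ under any coupling rather than by dual feasibility, and then prove $F\le\tfrac{1+\gamma}{1-\gamma}F^*$ by a single global off-policy correction followed by a monotone-contraction comparison. The quantitative heart is identical in both proofs --- the same $\tfrac{2}{1-\gamma}$ correction, the same $1+\tfrac{2\gamma}{1-\gamma}=\tfrac{1+\gamma}{1-\gamma}$ bookkeeping, and the same reliance on $TV\le 1$ so that no diameter of $d^*$ enters (your closing worry is unfounded for exactly the reason you give). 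What your decomposition buys is modularity: it isolates the entire $\tfrac{1+\gamma}{1-\gamma}$ factor as the price of following $\tilde{\pi}$ rather than $\pi^*$, and it yields the sharper unconditional intermediate statement $F^*(y)\le d^*(\tilde{x}_y,y)$, whereas the paper's Wasserstein step needs the inductive hypothesis to certify dual feasibility. What the paper's route buys is that everything stays with finite-horizon truncations, so the boundedness and limit arguments your fixed-point comparisons implicitly require (and which you do justify via $g\le 1$, hence $\lVert F\rVert_\infty\le 1/(1-\gamma)$) never arise.
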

\vspace{-0.2cm}
The proof is in the Appendix~(\secref{app:proofs}). Theorem~\ref{thm:mainTheorem} is non-vacuous whenever $d^*(\tilde{x}_y, y) < 1/(1 + \gamma)$. In particular, $d^*(\tilde{x}_y, y) = 0$ implies that the transferred policy is optimal. Although this scheme is not practical (computing $d^*$ requires knowledge of $\pi^*_\gY$), it shows that meaningful policy generalization can be obtained if we can find a mapping that generalizes across $\gS$. Put another way, PSM gives us a principled way of lifting generalization across inputs (a supervised learning problem) to generalization across environments.
We now describe how to employ PSM for learning representations that put together states in which the agent's long-term optimal behavior is similar. 



\vspace{-0.2cm}
\section{Learning Contrastive Metric Embeddings}
\vspace{-0.2cm}
\label{sec:learning_cme}

\begin{figure}[t]
  \centering
  \includegraphics[width=0.9\columnwidth]{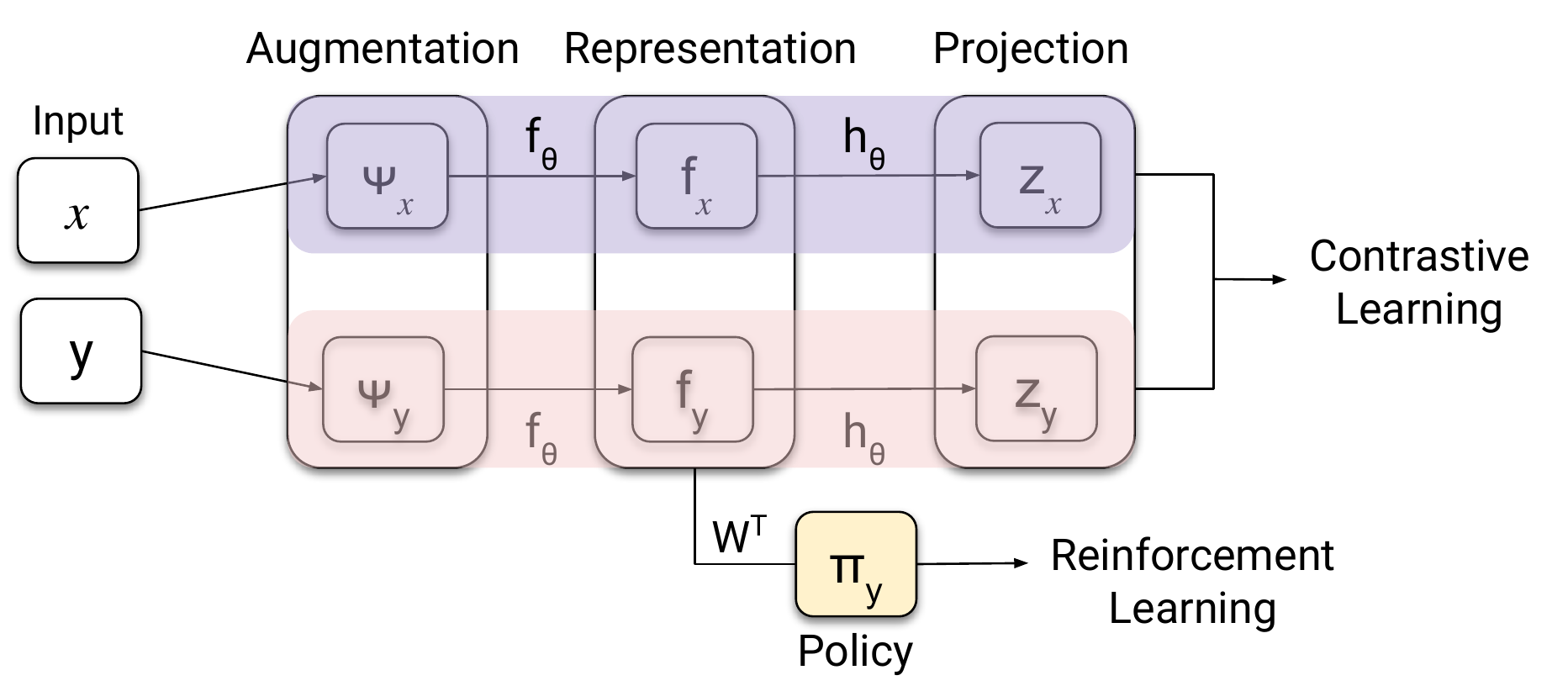}
  \caption{\textbf{Architecture for learning CMEs}. Given an input pair $(x, y)$, we first apply the \revision{(optional)} data augmentation operator $\Psi$ to produce the input \textit{augmentations} $\Psi_x := \Psi(x), \Psi_y := \Psi(y)$.
  When not using data augmentation, $\Psi$ is equal to the identity operator, that is, $\forall x\ \Psi(x) = x $.
  The agent's policy network then outputs the \textit{representations} for these augmentations by applying the encoder $f_\theta$, that is, $f_x = f_\theta(\Psi_x),\ f_y = f_\theta(\Psi_y)$. These representations are 
  projected using a non-linear projector $h_\theta$ to obtain the embedding $z_{\theta}$, that is, $z_\theta(x) = h_\theta(f_x),\ z_\theta(y) = h_\theta(f_y)$. These metric embeddings are trained using the contrastive loss defined in Equation~(4). The policy $\pi_\theta$ is an affine function of the representation, that is, $\pi_\theta(\cdot | y) = W^{T}f_y + b$, where $W, b$ are learned \revision{ weights and biases.}
  The entire network is trained end-to-end jointly using the reinforcement learning~(or imitation learning) loss \revision{in conjunction with the auxiliary contrastive loss.}}\label{fig:arch_cmes}
  \vspace{-0.5cm}
\end{figure}

To generalize a learned policy to new environments, we build on the success of contrastive representations~(\secref{sec:back}). 
Given a state similarity metric $d$, 
we develop a general procedure~(\algref{alg1}) to learn contrastive metric embeddings~(CMEs) for 
$d$. 
We utilize the metric $d$ for defining the set of positive and negative pairs, as well as assigning importance weights to these pairs in the contrastive loss~(Equation~\nicebrown{$4$}).

\vspace{-0.85cm}
\begin{algorithm}[H]
\caption{Contrastive Metric Embeddings~(CMEs)}\label{alg1}
\begin{algorithmic}[1]
  \STATE \textbf{Given}: State embedding $z_{\theta}(\cdot)$, Metric $d(\cdot, \cdot)$ Training environments $\{\gM_i\}_{i=1}^{N}$. \textbf{Hyperparameters}: Temperature $1/\lambda$, Scale $\beta$, Total training steps $K$
  \FOR{each step $k = 1, \ldots, K$}
  \STATE Sample a pair of training MDPs $\gM_\gX, \gM_\gY$
  \STATE Update $\theta$ to minimize $\mathscr{L}_{\mathrm{CME}}$ where
  $\mathscr{L}_{\mathrm{CME}} = \expected_{\gM_\gX, \gM_\gY \, \sim \, \rho} \left[L_\theta(\gM_{\gX}, \gM_{\gY})\right]$
  \ENDFOR
\end{algorithmic}
\end{algorithm}
\vspace{-0.4cm}

We first apply a transformation to convert $d$ to a similarity measure $\Gamma$, bounded in [0, 1] for ``soft'' similarities. In this work, we transform $d$ using the Gaussian kernel with a positive scale parameter $\beta$, that is, $\Gamma(x, y) = \exp(-d(x, y)/\beta)$.  $\beta$ controls the sensitivity of the similarity measure to $d$.

Second, we select the positive and negative pairs given a set of states $\gX^{\prime} \subseteq \gX$ and $\gY$ from MDPs $\gM_{\gX}, \gM_{\gY}$ respectively. For each anchor state $y \in \gY$, we use its nearest neighbor in $\gX^{\prime}$ based on the similarity measure $\Gamma$ to define the positive pairs $\{(\tilde{x}_y, y)\}$, where $\tilde{x}_y = \argmax{x \in \gX^{\prime}}~\Gamma(x, y)$.
 The remaining states in $\gX^{\prime}$,
paired with $y$, are used as negative pairs. This choice of pairs is motivated by Theorem~\ref{thm:mainTheorem}, which shows that if we transfer the optimal policy in $\gM_{\gX}$ to the nearest neighbors defined using PSM, its performance in $\gM_{\gY}$ has suboptimality bounded by PSM.

Next, we define a soft version of the SimCLR contrastive loss~(\eqref{eq:simclr}) for learning the function $z_\theta$, which maps states~(usually high-dimensional) to embeddings. Given a positive state pair $(\tilde{x}_y, y)$, the set $\gX^{\prime}$, and the similarity measure $\Gamma$, the loss~(pseudocode provided in \secref{app:contrastive_loss_code}) is given by
\begin{align*}\label{eq:cme}
\ell_\theta(\tilde{x}_y, y; \gX^{\prime}) = - \log \frac{\Gamma(\tilde{x}_y, y) \exp(\lambda s_\theta(\tilde{x}_y, y))}{ \Gamma(\tilde{x}_y, y) \exp(\lambda s_\theta(\tilde{x}_y, y))  +   \sum_{x^{\prime} \in {\gX^{\prime} \setminus \{{\tilde{x}_y}\}}} (1 - \Gamma(x^\prime, y))\exp(\lambda s_\theta(x^\prime, y))}~~(4),
\end{align*}
where we use the same notation as in \eqref{eq:simclr}.
Following SimCLR, we use a non-linear projection\comment{$h_\theta$} of the representation as $z_\theta$~(\figref{fig:arch_cmes}). The agent's policy is an affine function of the representation. 

The total contrastive loss for $\gM_{\gX}$ and $\gM_{\gY}$ utilizes the optimal trajectories $\tau^{*}_\gX = \{x_t\}_{t=1}^{N}$ and $\tau^{*}_\gY= \{y_t\}_{t=1}^{N}$, where $x_{t+1} \sim P^{\pi^*}_\gX( \cdot \, | \, x_t)$ and 
$y_{t+1} \sim P^{\pi^*}_\gY( \cdot \, | \, y_t)$. We set $\gX^\prime=\tau^{*}_\gX$ and define
\begin{equation*}\label{eq:cme2}
L_\theta(\gM_{\gX}, \gM_{\gY}) = \expected_{y \, \sim \, \tau^{*}_\gY}\left[\ell_\theta(\tilde{x}_y, y; \tau^{*}_\gX)\right] \qquad \mathrm{where}\  \tilde{x}_y = \argmax{x \in \tau^{*}_\gX}~\Gamma(x, y).\\[-0.05cm]
\end{equation*}
We refer to CMEs learned with policy similarity metric as \textit{policy similarity embeddings~(PSEs)}. 
PSEs can trivially be combined with data augmentation by using augmented states when computing state embeddings. We simultaneously learn PSEs with the RL agent by adding $\mathscr{L}_{\mathrm{CME}}$~(\algref{alg1}) as an auxiliary objective during training. Next, we illustrate the benefits from this auxiliary objective. 

\vspace{-0.2cm}
\section{Jumping Task from Pixels: A Case Study}\label{sec:jumping}
\vspace{-0.2cm}




\textbf{Task Description}. The jumping task~\citep{Combes18}~(\figref{fig:intro_jumping}) \revision{captures, using well-defined factors of variations, whether agents can learn the correct invariances required for generalization, directly from image inputs.} The task consists of an agent trying to jump over an obstacle.
The agent has access to two actions: \emph{right} and \emph{jump}. The agent needs to time the jump precisely, at a specific distance from the obstacle, otherwise it will eventually hit the obstacle. Different tasks consist in shifting the floor height and/or the obstacle position. To generalize, the agent needs to be invariant to the floor height while jump based on the obstacle position. The obstacle can be in 26 different locations while the floor has 11 different heights, totaling 286 tasks.

\textbf{Problem Setup}. We split the problem into 18 seen~(training) and 268 unseen~(test) tasks \revisionNew{to stress test generalization using a few changes in the underlying factors of variations seen during training}. \revision{The small number of positive examples\footnote{We have 18 different trajectories with several examples for the action \emph{right}, but only one instance of \emph{jump} action per trajectory, leading to just 18 total instances of the action \emph{jump}.} results in a highly unbalanced classification problem with low amounts of data, making it challenging without additional inductive biases. Thus, we evaluate generalization in regimes with and without data augmentation}. 
\revisionNew{The different grids configurations~(\figref{fig:jumping_task}) capture different types of generalization: the ``wide'' grid tests generalization via ``interpolation'', the ``narrow'' grid tests out-of-distribution generalization via ``extrapolation'', and the random grid instances evaluate generalization similar to supervised learning where train and test samples are drawn i.i.d. from the same distribution.} 

We used RandConv~\citep{Lee20}, a state-of-the-art data augmentation for generalization. For hyperparameter selection, we evaluate all agents on a validation set containing 54 unseen tasks in the ``wide'' grid~(\figref{fig:wide_grid}) and pick the parameters with the best validation performance. \revision{We use these fixed parameters for all grid configurations to show the robustness of PSEs to hyperparameter tuning}. We first compute the optimal trajectories in the training tasks. Using these trajectories, we compute PSM using dynamic programming~(\secref{app:psm_compute}). We train the agent by imitation learning, combined with an auxiliary loss for PSEs~(\secref{sec:learning_cme}). More details are in \secref{app:psm_vs_bisim}.

\begin{table}[t]
 \caption{Percentage~(\%) of test tasks solved by different methods without and with data augmentation. The ``wide'', ``narrow'', and random grids are described in \figref{fig:jumping_task}. We report average performance across 100 runs with different random initializations, with standard deviation between parentheses.}\label{tab:jumping_task_no_data_aug}
 \centering
 \vspace{-0.1cm}
 \scaletab{
	\begin{tabular}{@{}cclll@{}}
	\toprule
	 \multirow{2}{*}{\shortstack{Data\\ Augmentation}} & \multirow{2}{*}{Method}   & \multicolumn{3}{c}{Grid Configuration~(\%)} \\
	 \cmidrule{3-5}
	 & & ``Wide''    & ``Narrow''               & Random  \\ \midrule
	 \multirow{3}{*}{\LARGE \xmark} & Dropout and $\ell_2$ reg.  & 17.8 (2.2)     & 10.2 (4.6)  & 9.3 (5.4)  \\ 
	 & Bisimulation Transfer\nicebrown{$^{4}$}			& 17.9 (0.0)  	 & \textbf{17.9} (0.0) & 30.9 (4.2) \\ 
	 & PSEs                                        & \textbf{33.6} (10.0)      & 9.3 (5.3)     & \textbf{37.7} (10.4)  \\
	 \midrule
	 \multirow{3}{*}{\LARGE \cmark} & RandConv                              & 50.7 (24.2)      & 33.7~(11.8)                   & 71.3~(15.6) \\ 
	 & \revisionNew{ RandConv + $\pi^*$-bisimulation }                                   & \revisionNew{41.4~(17.6)}     & \revisionNew{17.4~(6.7)}                  &  \revisionNew{33.4~(15.6)} \\
	 & RandConv + PSEs                                    & \textbf{87.0}~(10.1)      & \textbf{52.4} (5.8)                  & \textbf{83.4}~(10.1) \\
	 \bottomrule
	\end{tabular}}
\vspace{-2pt}
\end{table}

\begin{figure}[t]
	\centering
	\begin{subfigure}{.365\textwidth}
	\vspace{-0.26cm}
	\includegraphics[width=\columnwidth]{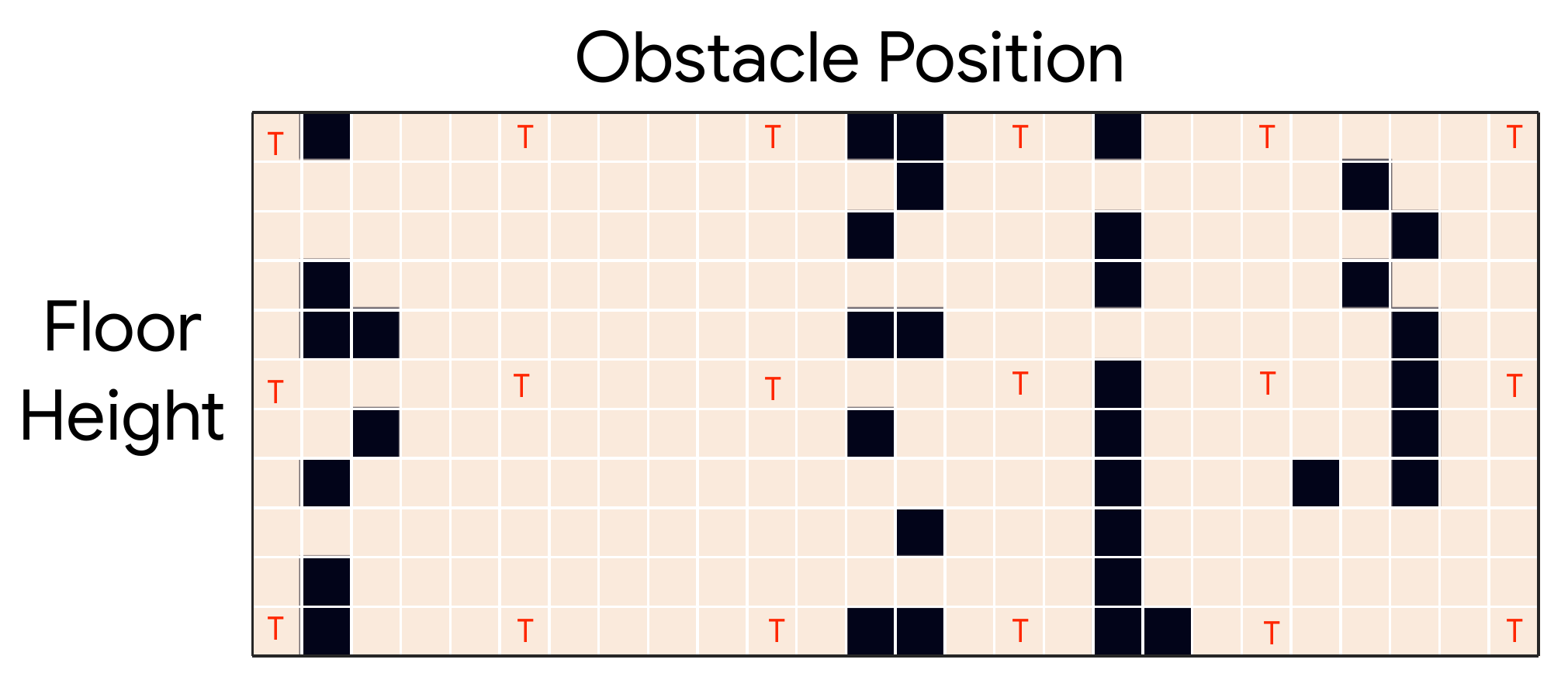}
	\caption{``Wide'' grid}\label{fig:wide_grid}
	\end{subfigure}
	\begin{subfigure}{.31\textwidth}
	\includegraphics[width=\columnwidth]{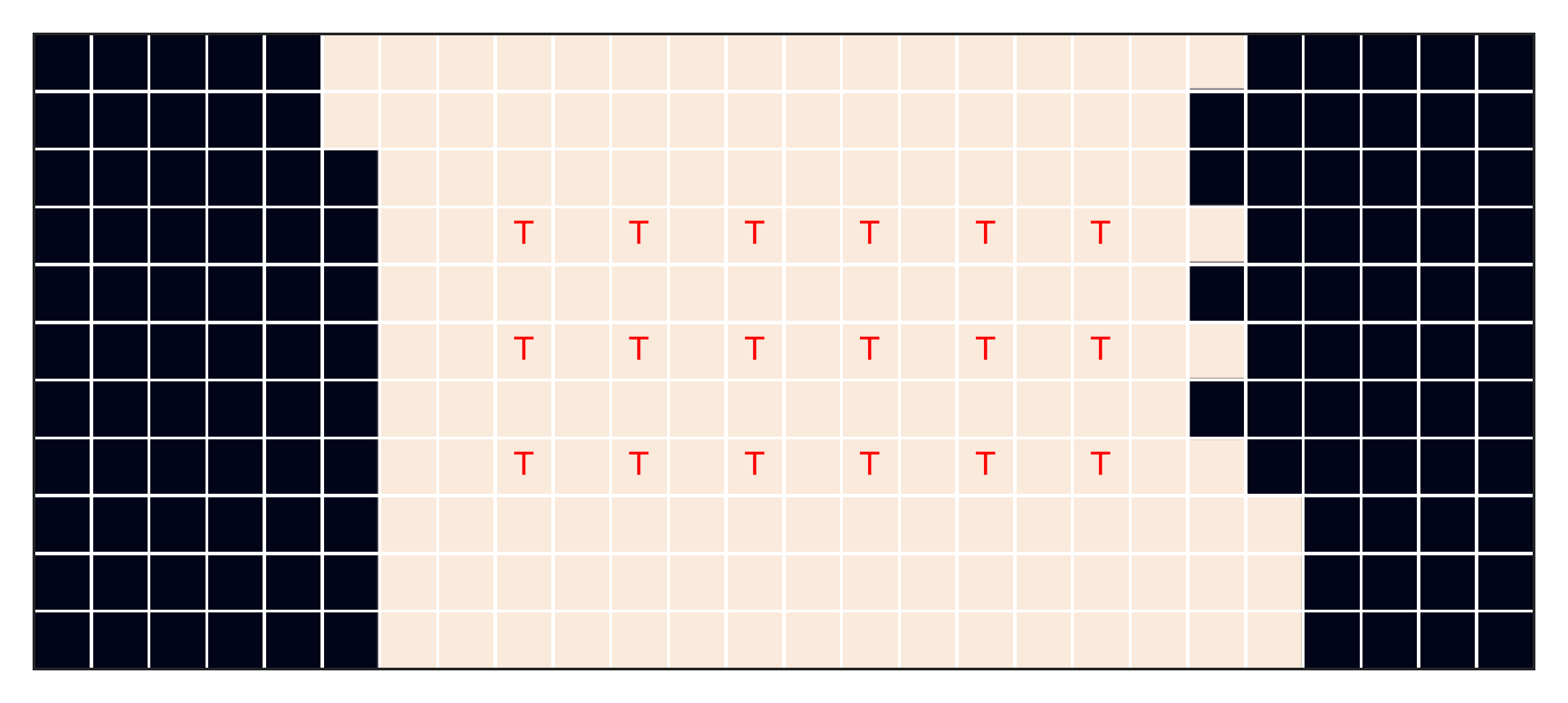}
	\caption{``Narrow'' grid}
	\end{subfigure}
	\begin{subfigure}{.31\textwidth}
	\includegraphics[width=\columnwidth]{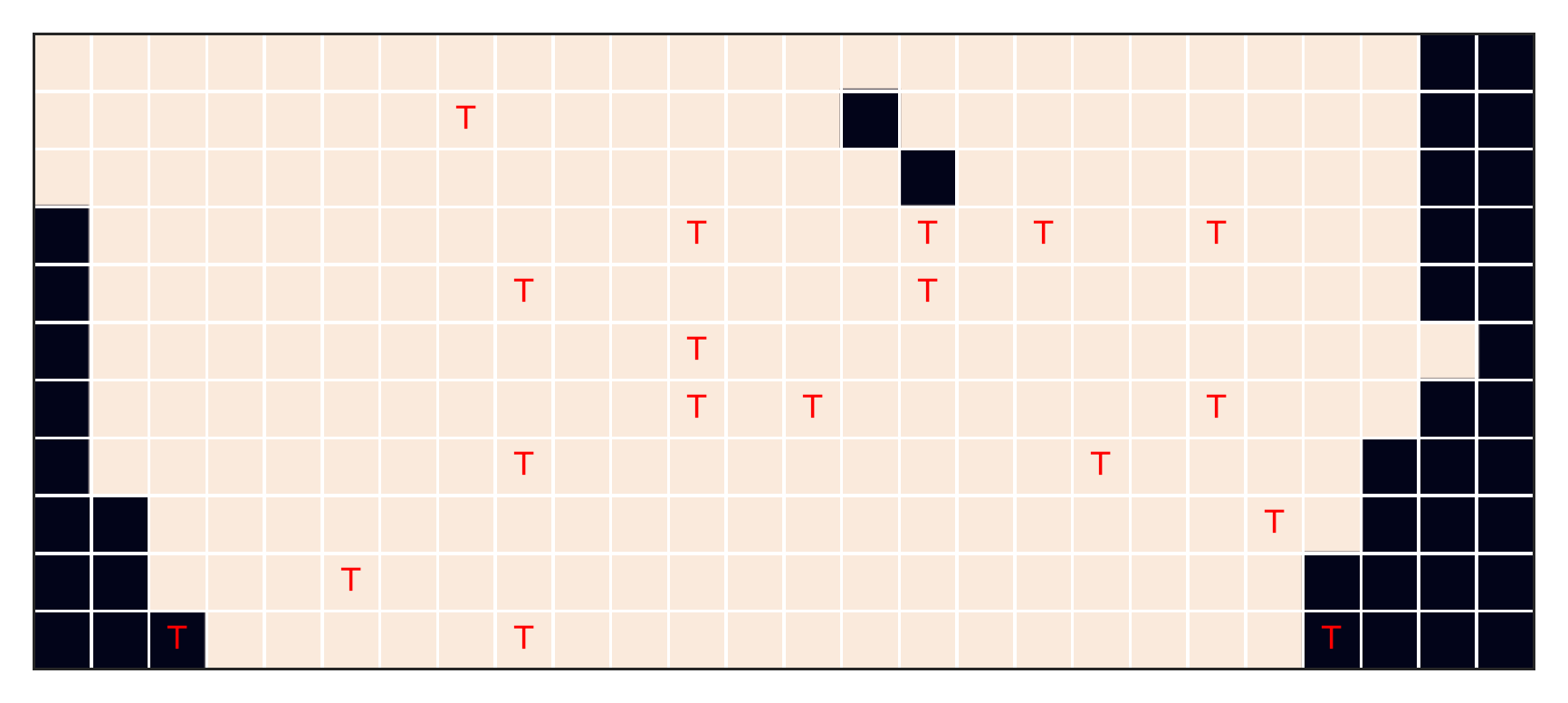}
	\caption{Random grid}
	\end{subfigure}
	\vspace{-6pt}
	\caption{\textbf{Jumping Task}: Visualization of average performance of PSEs with data augmentation across different configurations. We plot the median performance across 100 runs. Each tile in the grid represents a different task (obstacle position/floor height combination). \revision{For each grid configuration, the height varies along the $y$-axis (11 heights) while the obstacle position varies along the $x$-axis (26 locations)}. The red letter \textcolor{red}{$\top$} indicates the training tasks. Random grid depicts only one instance, each run consisted of a different test/train split. \textcolor{Beige}{Beige} tiles are tasks PSEs solved while \textbf{black} tiles are tasks PSEs did not solve when used with data augmentation. These results were chosen across all the 100 runs to demonstrate what the average reported performance looks like.}\label{fig:jumping_task}
	\vspace{-17pt}
\end{figure}

\vspace{-0.35cm}
\subsection{Evaluating Generalization on Jumping Task}\label{sec:jumping_gen}
\vspace{-0.15cm}
We show the efficacy of PSEs compared to common generalization approaches such as 
regularization~\citep[\eg][]{Cobbe19a,Farebrother18}, and data augmentation~\citep[\eg][]{Lee20, Laskin20}, which are quite effective on pixel-based RL tasks. We also contrast PSEs with bisimulation transfer~\citep{Castro10}, a \revision{tabular state-based} transfer approach based on bisimulation metrics \revision{which does not do any learning} and \revisionNew{bisimulation preserving representations~\citep{zhang2020learning}}, showing the advantages of PSM over a prevalent state similarity metric. 

We first investigated how well PSEs generalize over existing methods without incorporating additional domain knowledge during training. Table~\ref{tab:jumping_task_no_data_aug} summarizes, in the setting without data augmentation, the performance of these methods in different train/test splits (\cf \figref{fig:jumping_task} for a detailed description). 
PSEs, with only 18 examples, already leads to better performance than standard regularization. 


PSEs also outperform bisimulation transfer in the ``wide'' and random grids. Although bisimulation transfer is \revision{\textbf{impractical}}\footnote{\revision{Bisimulation transfer assumes oracle access to dynamics and rewards on unseen environments as well as tabular state space to compute the exact bisimulation metric~(\secref{app:bisim}).}} when evaluating zero-shot generalization, we still performed this comparison, unfair to PSEs, to highlight their efficacy. PSEs perform better because, in contrast to bisimulation, PSM is reward agnostic~(\cf Proposition~\ref{thm:mainProposition}) -- the expected return of the jump action is quite different depending on the obstacle position~(\cf ~\figref{fig:policy_bisim_viz} for a visual juxtaposition of PSM and bisimulation). Overall, these results are promising because they place PSEs as an effective generalization method that does not rely on data augmentation.

Nevertheless, PSEs are complementary to data augmentation, which consistently improves generalization in deep RL. 
We compared RandConv combined with PSEs to simply using RandConv. 
Domain-specific augmentation also succeeds in the jumping task. Thus, it is not surprising that RandConv is so effective compared to techniques without augmentation. \tabref{tab:jumping_task_no_data_aug}~($2^{nd}$ row) shows that PSEs substantially improve the performance of RandConv across all grid configurations. \revisionNew{Moreover, \tabref{tab:jumping_task_no_data_aug}~($2^{nd}$ row) illustrates that when combined with RandConv, bisimulation preserving representations~\citep{zhang2020learning} diminish generalization by $30-50\%$ relative to PSEs.}

\revisionNew{Notably, \tabref{tab:jumping_task_no_data_aug}~($1^{st}$ row) indicates that learning-based methods are ineffective on the ``narrow'' grid without data augmentation. That said, PSEs do work quite well when combined with RandConv. However, even with data augmentation, generalization in ``narrow'' grid happens only around the vicinity of training tasks, exhibiting the challenge this grid poses for learning-based methods. We believe this is due to the poor extrapolation ability of neural networks~\citep[\eg][]{haley1992extrapolation, xu2020neural}, which is more perceptible without prior inductive bias from data augmentation.} 

\vspace{-0.2cm}
\subsection{Understanding gains from PSEs: Ablations and Visualizations}\label{sec:jumping_abl_viz}
\vspace{-3pt}

\begin{figure}[t]
	\centering
		\begin{subfigure}{.26\textwidth}
		\vspace{0.1cm}
		\includegraphics[width=\columnwidth]{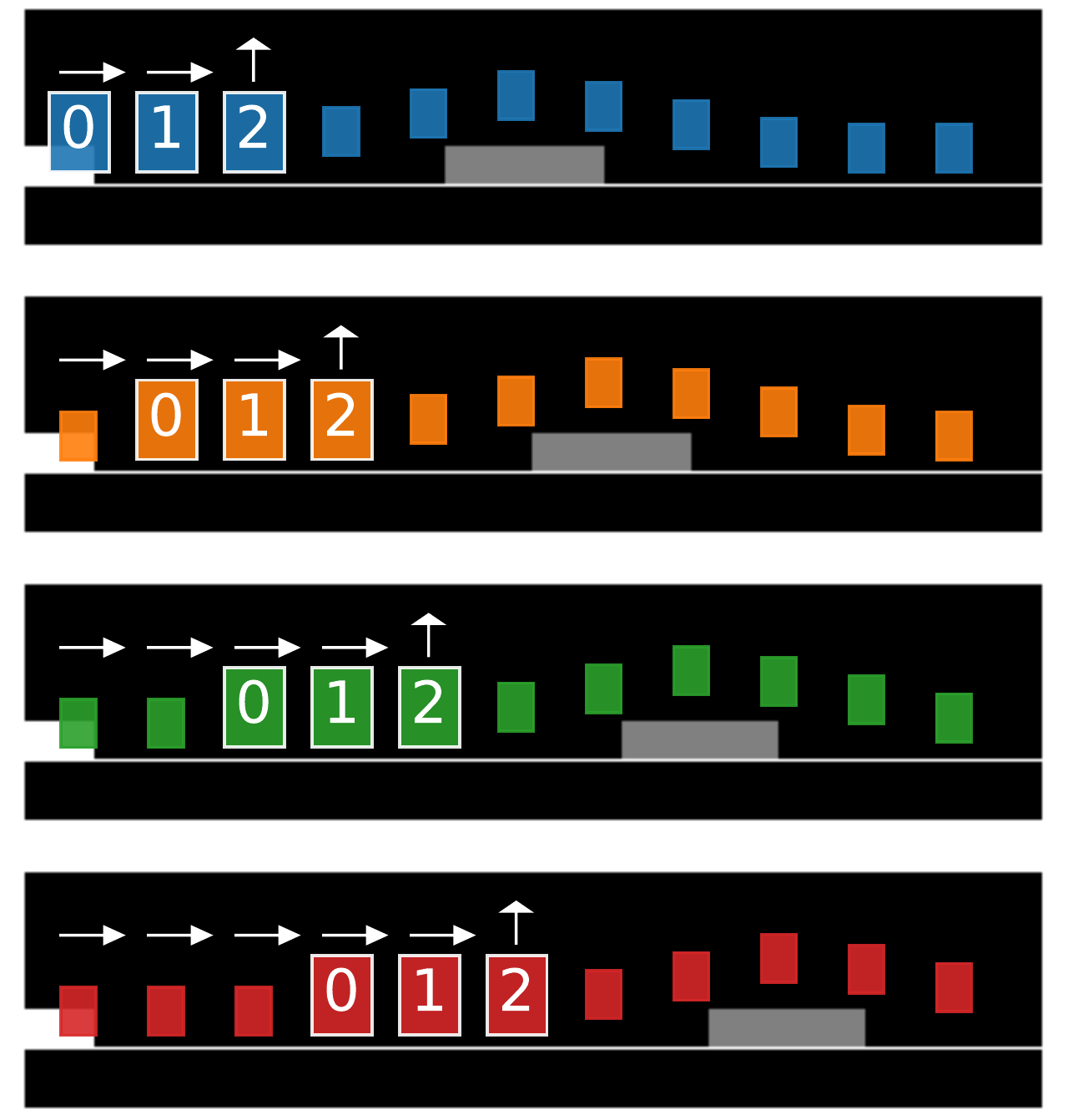}
		\caption{Optimal Trajectories}\label{fig:jumping_diff_policy}
		\vspace{-0.1cm}
	\end{subfigure}\hfill
	\begin{subfigure}{.23\textwidth}
		\includegraphics[width=\columnwidth]{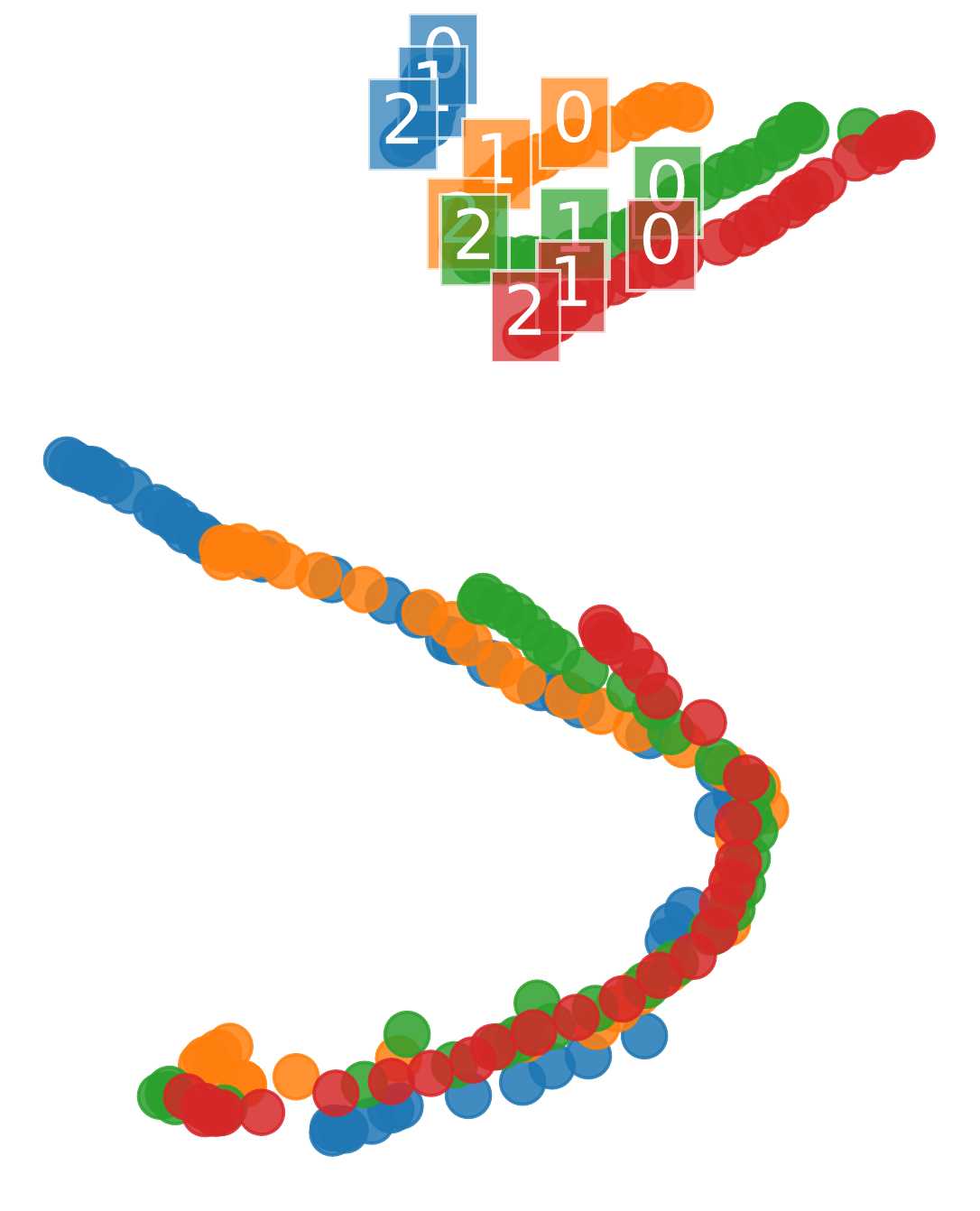}
		\caption{PSEs}\label{fig:visualization_pses}
	\end{subfigure}\hfill
	\begin{subfigure}{.23\textwidth}
		\includegraphics[width=\columnwidth]{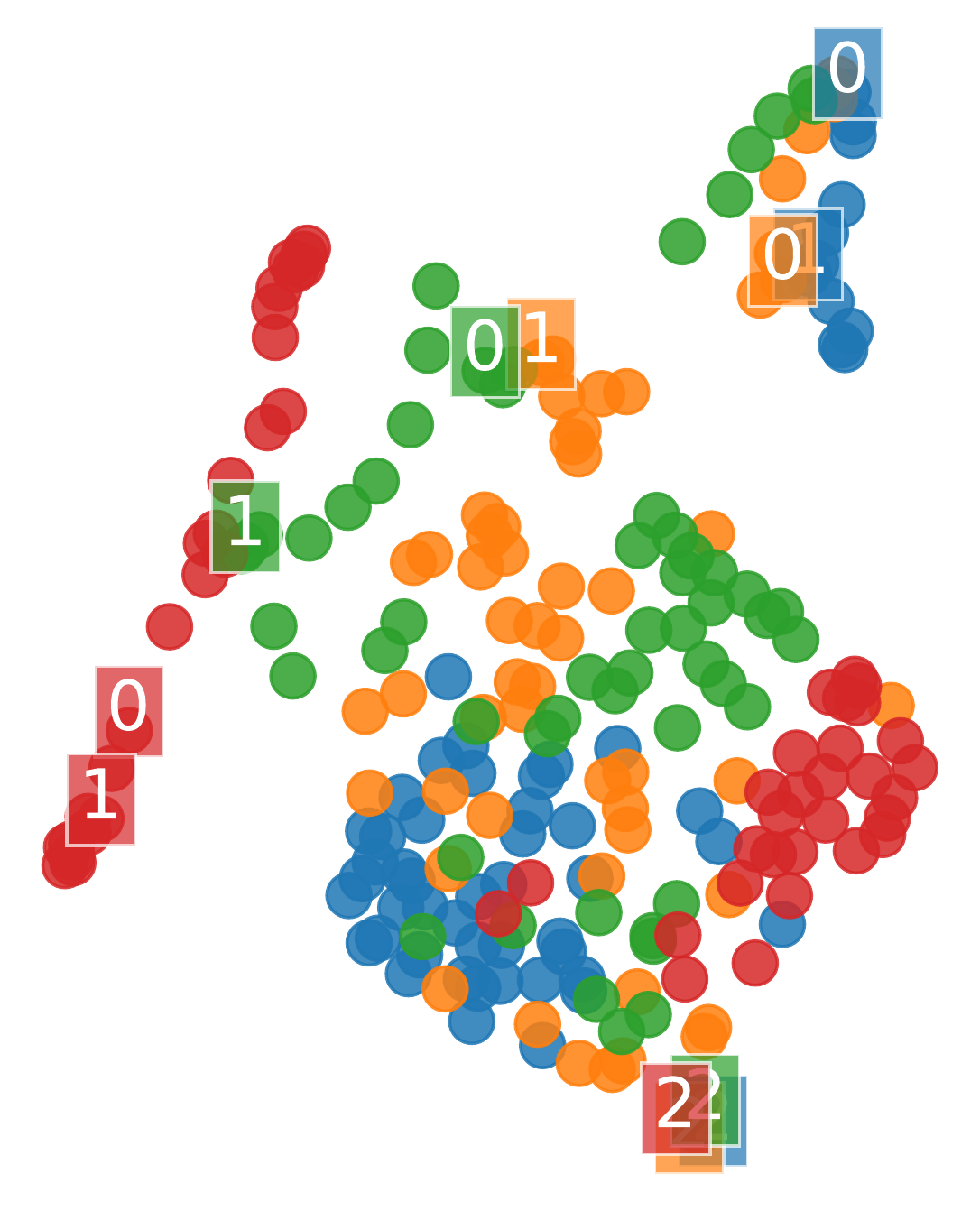}
		\caption{PSM + $\ell_2$ embeddings}\label{fig:visualization_psm_l2}
	\end{subfigure}\hfill
	\begin{subfigure}{.23\textwidth}
		\includegraphics[width=\columnwidth]{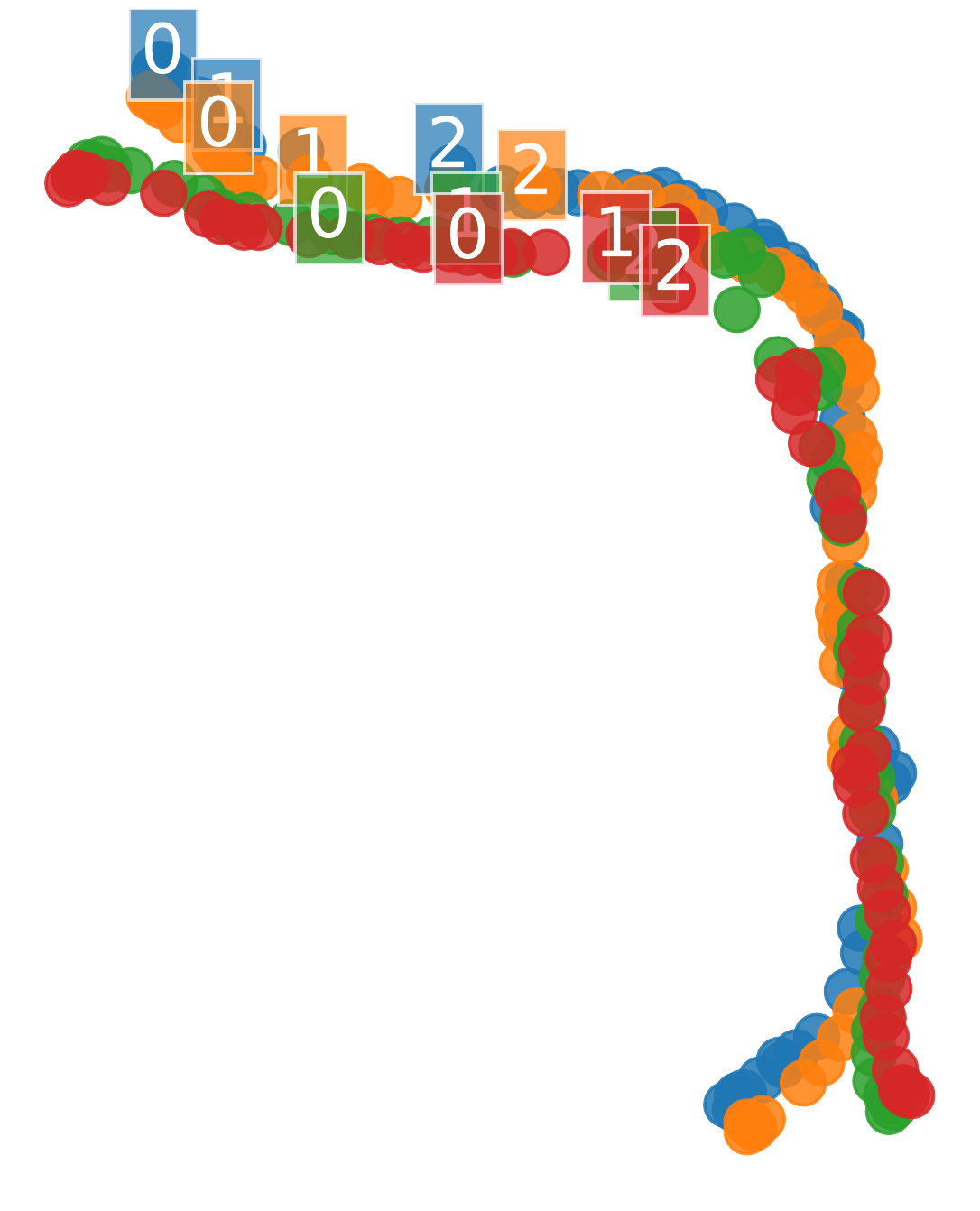}
		\caption{$\pi^*$-bisim. + CMEs}
	\end{subfigure}
	\caption{{\bf Embedding visualization.} (a) Optimal trajectories on original jumping task~(visualized as coloured blocks) with different obstacle positions. We visualize the hidden representations using UMAP, where the color of points indicate the tasks of the corresponding observations. Points with the same number label correspond to same distance of the agent from the obstacle, the underlying optimal invariant feature across tasks.}\label{fig:visualization}
	\vspace{-18pt}
\end{figure}

\begin{wraptable}{r}{0.61\linewidth}
	\centering
	\caption{\textbf{Ablating PSEs}. Percentage~(\%) of test tasks solved when we ablate the similarity metric and learning procedure for metric embeddings in the \revision{data augmentation setting} on wide grid. PSEs, which combine CMEs with PSM, considerably outperform other embeddings. We report the average performance across 100 runs with standard deviation between parentheses. \revisionNew{All ablations except PSEs deteriorate performance compared to just using data augmentation (RandConv), as reported in \tabref{tab:jumping_task_no_data_aug}.}}
	\label{tab:jumping_task_ablation}
	\vspace{-0.15cm}
	\scaletab{
	\begin{tabular}{@{} ccc @{}} 
	 \toprule
	 Metric~/~Embedding    	& $\ell_2$-embeddings			& CMEs             		\\
	 \midrule
	 $\pi^*$-bisimulation  &   \cellcolor{b4}  41.4~(17.6) & \cellcolor{b4}23.1 (7.6)    			\\ 
     PSM    & \cellcolor{b5}17.5 (8.4) & \cellcolor{b3}\textbf{87.0} (10.1)   \\ 
	 \bottomrule
	\end{tabular}}
	\vspace{-0.3cm}
\end{wraptable}
PSEs are contrastive metric embeddings~(CMEs) learned with PSM. We investigate the gains from CMEs~(\secref{sec:learning_cme}) and PSM~(\secref{sec:policy_sim}) by ablating them. CMEs can be learned with any state similarity metric -- we use $\pi^*$-bisimulation~\citep{Castro20} as an alternative. Similarly, PSM can be used with any metric embedding -- we use $\ell_2$-embeddings~(\secref{sec:l2_metric}) as an alternative, which \citet{zhang2020learning} employed with $\pi^*$-bisimulation for learning representations in a single task RL setting. 
For a fair comparison, we tune hyperparameters for 128 trials for each ablation entry in \tabref{tab:jumping_task_ablation}.

\revision{\tabref{tab:jumping_task_ablation} show that PSEs~($=$ PSM + CMEs) generalize significantly better than $\pi^*$-bisimulation with CMEs or $\ell_2$-embeddings, both of which significantly degrade performance~($-60\%$ and $-45\%$, respectively). This is expected, as $\pi^*$-bisimulation imposes the incorrect invariances for the jumping task~(\cf \twofigref{fig:bisim}{fig:true_sim}). Additionally, looking at the rows of \tabref{tab:jumping_task_ablation}, CMEs are superior than $\ell_2$-embeddings for PSM while inferior for $\pi^*$-bisimulation. This result is in line with the hypothesis that CMEs better enforce the invariances encoded by a similarity metric compared to $\ell_2$-embeddings~(\cf \twofigref{fig:visualization_pses}{fig:visualization_psm_l2}).} 

{\bf Visualizing learned representations}. We visualize the metric embeddings in the ablation above by projecting them to two dimensions with UMAP~\citep{2018arXivUMAP}, a popular visualization technique for high dimensional data which better preserves the data's global structure compared to other methods such as t-SNE~\citep{coenen_pearce_2019}.

\figref{fig:visualization} shows that PSEs partition the states into two sets: (1) states where a single suboptimal action leads to failure (all states before \emph{jump}) and (2) states where actions do not affect the final outcome (states after \emph{jump}). Additionally, PSEs align the labeled states in the first set, which have a PSM distance of zero. These aligned states have the same distance from the obstacle, the invariant feature that generalizes across tasks. On the other hand, $\ell_2$-embeddings~\citep{zhang2020learning} with PSM do not align states with zero PSM except the state with the jump action -- poor generalization, as observed empirically, is likely due to states with the same optimal behavior ending up with distant embeddings. CMEs with $\pi^*$-bisimulation align states with $\pi^*$-bisimulation distance of zero -- such states are equidistant from the start state and have different optimal behavior for any pair of tasks with different obstacle positions~(\figref{fig:pi_bisim_app}).

\vspace{-0.32cm}
\subsection{\revisionNew{Effect of Policy Suboptimality on PSEs}}\label{sec:jumping_suboptimality}
\vspace{-3pt}

To understand the sensitivity of learning effective PSEs to the quality of the policies, we compute PSEs using $\epsilon$-suboptimal policies on the jumping task, which take the optimal action with probability $1 - \epsilon$ and the subopotimal action with probability $\epsilon$. 
\begin{wrapfigure}{r}{0.36\linewidth}
	\centering
	\vspace{-0.2cm}
	\includegraphics[width=0.35\columnwidth]{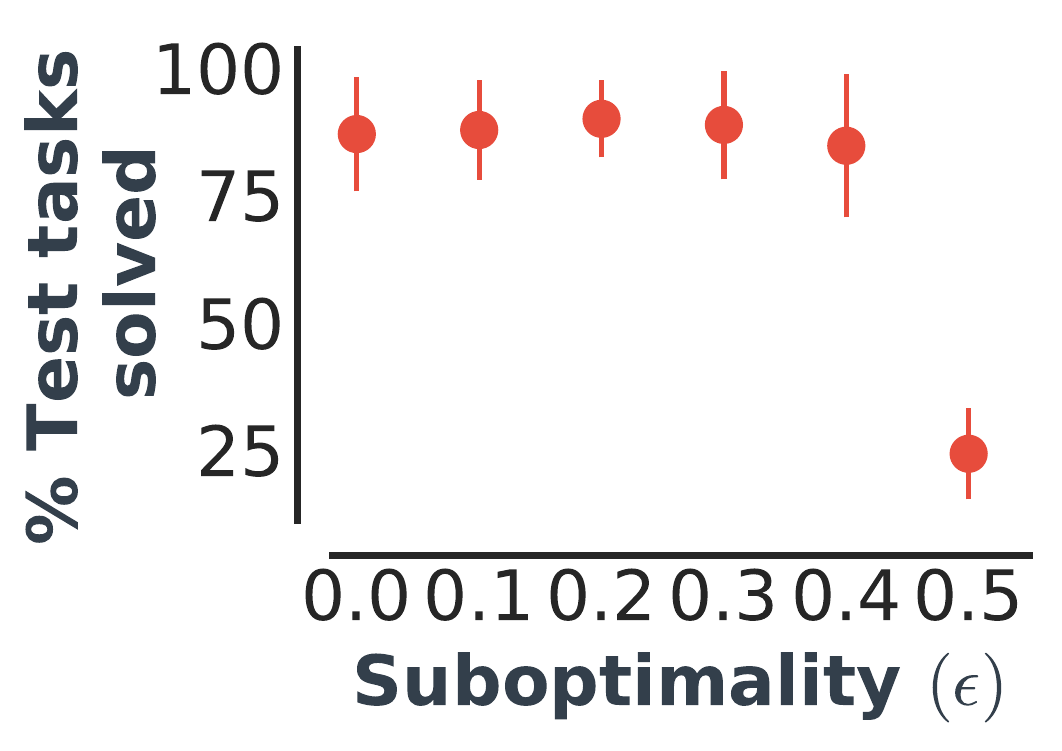}
	\vspace{-0.35cm}
	\caption{\revisionNew{Percentage~(\%) of test tasks solved using PSEs computed using $\epsilon$-suboptimal policies on the ``wide'' grid. We report the mean across 100 runs. Error bars show one standard deviation.}}\label{fig:suboptimality_ablation}
	\vspace{-0.6cm}
\end{wrapfigure}
We evaluate the generalization performance of PSEs for increasingly suboptimal policies, ranging from the optimal policy ($\epsilon=0$) to the uniform random policy ($\epsilon=0.5$). 
To isolate the effect of suboptimality on PSEs, the agent still imitates the optimal actions during training for all $\epsilon$.

\revisionNew{\figref{fig:suboptimality_ablation} shows that PSEs show near-optimal generalization with $\epsilon \leq 0.4$ while degrade generalization with an uniform random policy. This result is well-aligned with \propref{prop:mainProposition}, which shows that for policies with decreasing suboptimality, the PSM approximation becomes more accurate, resulting in improved PSEs. Overall, this study confirms that the utility of PSEs for generalization is robust to suboptimality. One reason for this robustness is that PSEs are likely to align states with similar long-term greedy optimal actions, resulting in good performance even with suboptimal policies that preserve these greedy actions.}

\vspace{-0.32cm}
\subsection{Jumping Task with Colors: Where Task-dependent Invariance Matters}
\label{sec:jumping_orthogonality}
\vspace{-3pt}
\begin{wrapfigure}{r}{0.43\linewidth}
	\centering
	\vspace{-0.5cm} 
	\includegraphics[width=0.41\columnwidth]{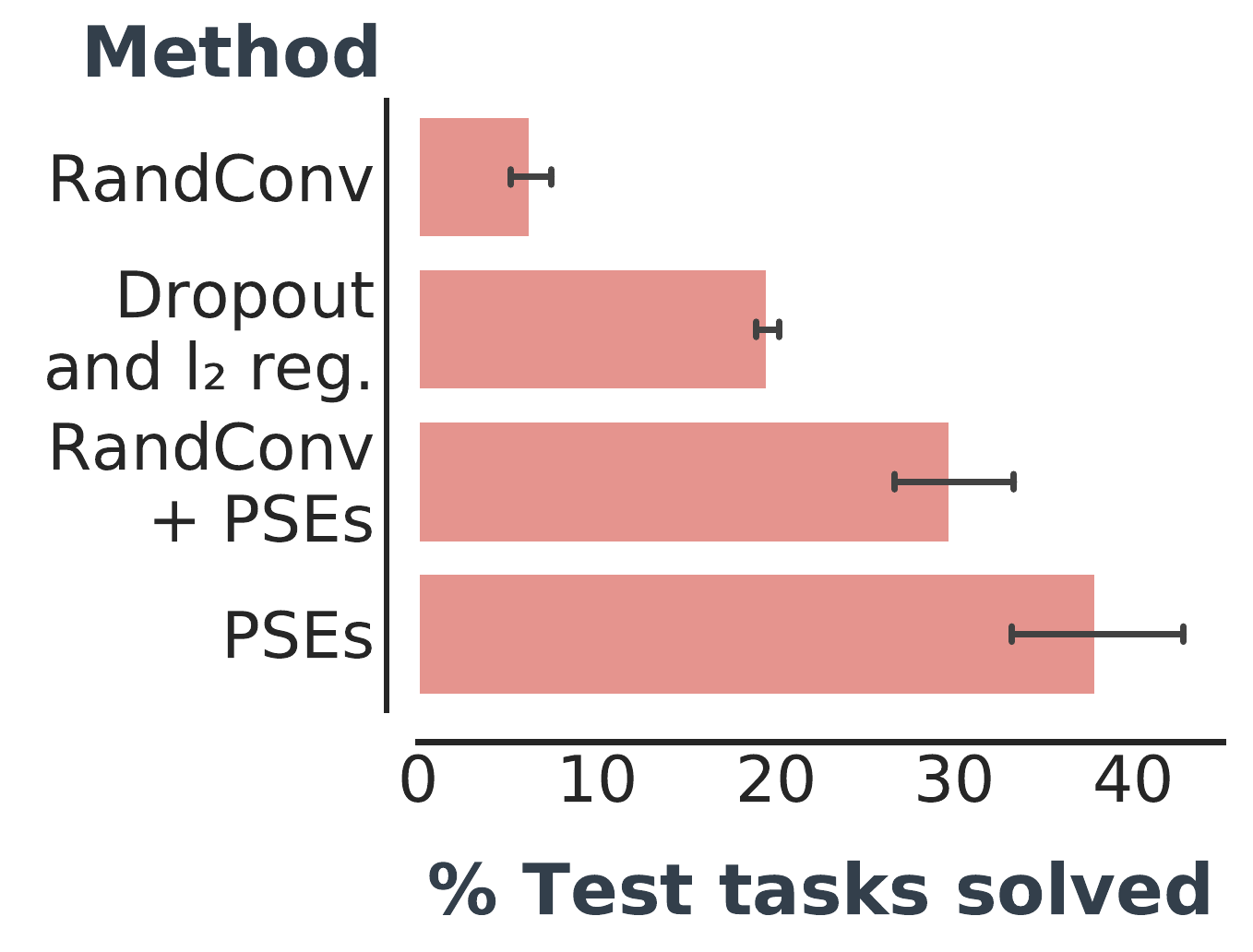}
	\vspace{-0.3cm}
	\caption{Percentage~(\%) of red obstacle test tasks solved when trained, jointly
	with red and green obstacles, on the ``wide'' grid. We report the mean across 100 runs. Error bars show 99\% confidence interval for the mean.}\label{fig:red_jumping}
	\vspace{-0.5cm}
\end{wrapfigure}

The task-dependent invariances captured by PSEs are usually orthogonal to task-agnostic invariances from data augmentation. This difference is important because, for certain RL tasks, data augmentation can erroneously alias states with different optimal behavior. 
Domain knowledge is often required to select appropriate augmentations, otherwise augmentations can even hurt generalization. In contrast, PSEs do not require any domain knowledge but instead exploit the inherent structure of the RL tasks.

\revision{To demonstrate the difference between PSEs and data augmentation, we simply include colored obstacles in the jumping task~(see \figref{fig:jumping_color})}. In this 
modified task, the optimal behavior of the agent depends on the obstacle color: the agent needs to jump over the red obstacle but strike the green obstacle to get a high return. 
The red obstacle task has the same difficulty as the original jumping task while the green obstacle task is easier. 
We jointly train the agent with 18 training tasks each, for both obstacle colors, on the ``wide'' grid and evaluate generalization on unseen red tasks.

 
\figref{fig:red_jumping} shows the large performance gap between PSEs and data augmentation with RandConv.
All methods solve the green obstacle tasks~(\tabref{tab:jumping_task_color}). As opposed to the original jumping task~(\cf~\tabref{tab:jumping_task_no_data_aug}), data augmentation inhibits generalization since RandConv forces the agent to ignore color, conflating the red and green tasks~(\figref{fig:rand_color_invariance}). 
PSEs still outperform regularization and data augmentation. Furthermore, data augmentation performs better when combined with PSEs. \revision{Thus, PSEs are effective even when data augmentation hurts performance.}

\vspace{-0.4cm}
\section{Additional Empirical Evaluation}
\vspace{-0.2cm}
\label{sec:empirical_eval}


\revision{In this section, we exhibit that PSM ignores spurious information for generalization using a LQR task~\citep{song2019observational} with non-image inputs. Then, we demonstrate the scalability of PSEs without explicit access to optimal policies in an RL setting with continuous actions, using Distracting DM Control Suite~\citep{distractingdm2020}.}



\vspace{-6pt}
\subsection{LQR with Spurious Correlations}\label{sec:lqr}
\vspace{-3pt}

We show how representations learned using PSM, when faced with semantically equivalent environments, can learn the main factors of variation and ignore spurious correlations that hinder 
\begin{wrapfigure}{r}{0.44\columnwidth}
	\centering
	\includegraphics[width=0.41\columnwidth]{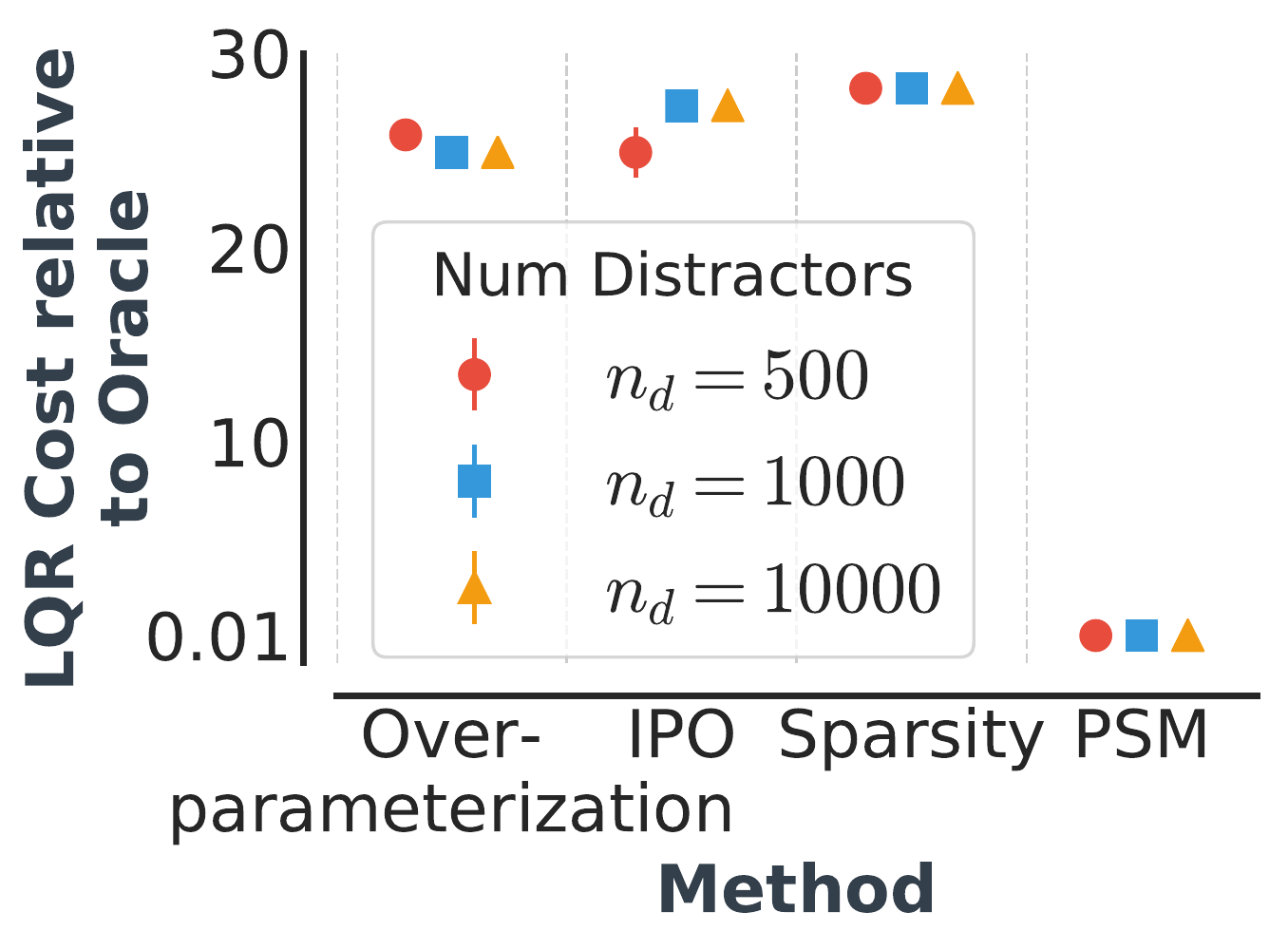}
	\vspace{-0.3cm}
	\caption{\textbf{LQR generalization}: Absolute test error in LQR cost relative to the oracle~(which has access to true state), of various methods trained with $n_d$ distractors on 2 training environments. Lower error is better. PSM achieves near-optimal performance. We report mean across 100 different seeds. 
	Error bars~(non-existent for most methods) show 99\% confidence interval for mean.}\label{fig:lqr_error}
	\vspace{-0.65cm} 
\end{wrapfigure}
generalization. 
We use LQR with distractors~\citep{song2019observational, sonar2020invariant} \revision{to assess generalization in a feature-based RL setting} with linear function approximation.
The distractors are input features that are spuriously correlated with optimal actions and can be used for predicting these actions during training, but hurt generalization. 
The agent learns a linear policy using 2 environments with fixed distractors. This policy
is evaluated on environments with unseen distractors.



We aggregate state pairs in training environments with near-zero PSM. We contrast this approach with (i) IPO~\citep{sonar2020invariant}, a policy optimization method based on
IRM~\citep{Arjovsky19} for out-of-distribution generalization, (ii)
overparametrization, which leads to better generalization via implicit regularization~\citep{song2019observational}, and (iii) weight sparsity using $\ell_1$-regularization since the policy weights which generalize in this task are sparse.

All methods optimally solve the training environments; however, the baselines perform abysmally in terms of generalization compared to state aggregation with PSM~(\figref{fig:lqr_error}), indicating their reliance on distractors.
PSM obtains near-optimal generalization which we corroborate through this conjecture
~(\secref{app:conjecture_psm}): \textit{Assuming zero state aggregation error with PSM, the policy learned using gradient descent is independent of the distractors}. Refer to \secref{app:lqr} for a detailed discussion.

\vspace{-0.15cm}
\subsection{Distracting DM Control Suite}\label{sec:distract_dm}
\vspace{-0.1cm}
\begin{wrapfigure}{r}{0.4\linewidth}
  \centering
  \vspace{-0.4cm}
  \includegraphics[width=\linewidth]{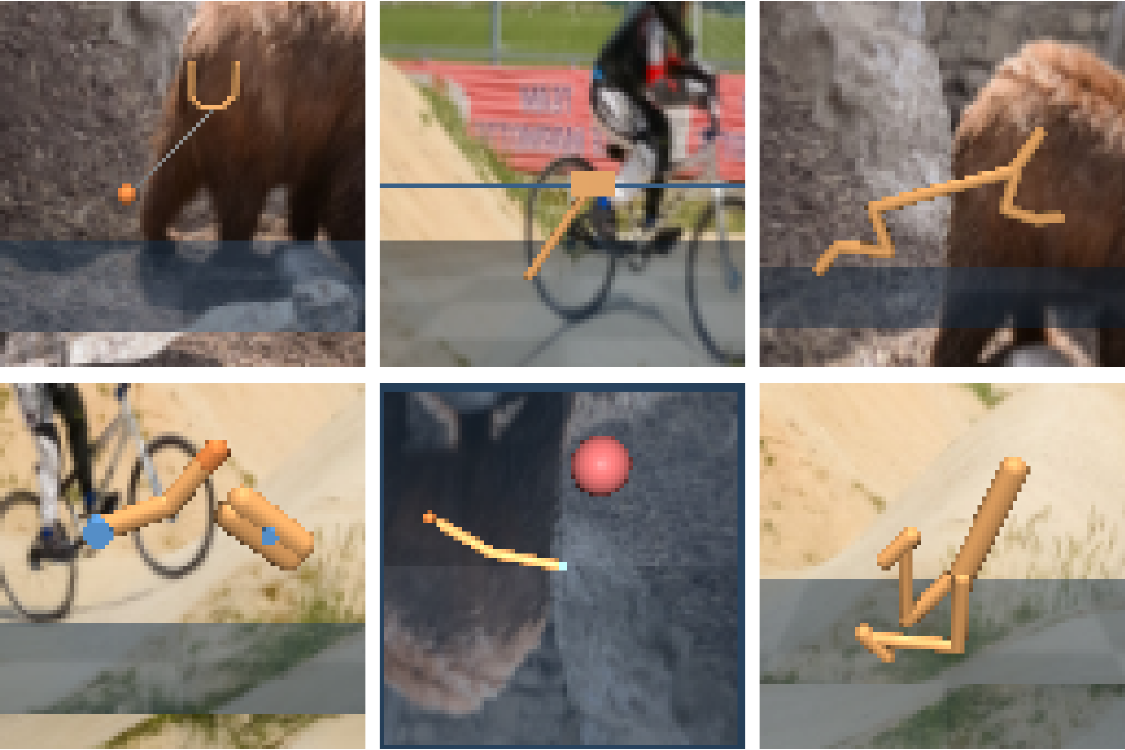} 
  \vspace{-0.47cm}
  \caption{\textbf{Distracting Control Suite}: Snapshots of training environments.}\label{fig:dm_control_envs}
  \vspace{-0.6cm}
\end{wrapfigure}

Finally, we demonstrate scalability of PSEs on the 
Distracting DM Control Suite~(DCS)~\citep{distractingdm2020},
\revision{which tests whether agents can ignore high-dimensional visual distractors irrelevant to the RL task}.
Since we do not have access to optimal training policies, we use 
learned policies as proxy for $\pi^*$ for computing PSM as well as collecting data for 
optimizing PSEs.
Even with this approximation, PSEs outperform 
state-of-the-art data augmentation.

DCS extends DM Control~\citep{tassa2020dm_control} with visual distractions. We use the dynamic background distractions~\citep{distractingdm2020, zhang2018natural} where a video is 
played in the background from a specific frame. The video and the 
frame are randomly sampled every new episode. We use 2 videos during training~(Figure~\ref{fig:dm_control_envs}) and evaluate generalization on 30 unseen videos~(\figref{fig:dm_control_test_envs}). 

All agents are built on top of SAC~\citep{haarnoja2018soft} combined with DrQ~\citep{Kostrikov20}, 
an augmentation method with state-of-the-art performance on DM control.
Without data augmentation on DM control, SAC performs poorly, even during training~\citep{Kostrikov20}. 
We \revision{augment DrQ with an auxiliary loss for learning PSEs} 
and compare it with DrQ~(\tabref{tab:dm_control}).
\revision{Orthogonal to DrQ, PSEs align representations of different states across environments based on PSM~(\cf \figref{fig:arch_cmes})}.
All agents are trained for 500K environment steps with random crop augmentation.
For computing PSM, we use policies learned by DrQ pretrained on training environments for 500K steps.

\begin{table}[t]
  \centering
  \caption{\textbf{Generalization performance} with unseen distractions in the Distracting Suite at 500K steps. We report the average scores across 5 seeds $\pm$ standard error.
  All methods are added to SAC~\citep{haarnoja2018soft}. Pretrained initialization uses DrQ trained for 500K steps.
  \twofigref{fig:pse_vs_drq}{fig:pse_vs_drq_pretrained} show learning curves.
  }\label{tab:dm_control} 
  \vspace{-0.1cm}
    \scaletab{
    \begin{tabular}{@{} cccccccc @{}}
    \toprule
    Initialization & Method & BiC-catch & C-swingup & C-run & F-spin & R-easy & W-walk \\ \midrule
    \multirow{2}{*}{Random} & DrQ & 747$\pm$28 & 582$\pm$42 & 220$\pm$12 & 646$\pm$54 & \textbf{931$\pm$14} & 549$\pm$83 \\ 
    & DrQ + PSEs &  \textbf{821$\pm$17} & \textbf{749$\pm$19} & \textbf{308$\pm$12} & \textbf{779$\pm$49} & \textbf{955$\pm$10} & \textbf{789$\pm$28} \\
    \midrule 
    \multirow{2}{*}{Pretrained} & DrQ & 748$\pm$30 & 689$\pm$22 & 219$\pm$10 & \textbf{764$\pm$48} & \textbf{943$\pm$10} & 709$\pm$29 \\
    & DrQ + PSEs &  \textbf{805$\pm$25} & \textbf{753$\pm$13} & \textbf{282$\pm$8} & \textbf{803$\pm$19} & \textbf{962$\pm$11} & \textbf{829$\pm$21} \\
    \bottomrule
    \end{tabular}}
    \vspace{-0.4cm}
\end{table}

First, we investigate how much better PSEs generalize
relative to DrQ, assuming the 
agent is provided with PSM beforehand. The agent's policy is randomly initialized so that additional gains over DrQ can be attributed to the auxiliary information from PSM. The substantial gains in \tabref{tab:dm_control} indicate that PSEs are more effective than DrQ for encoding invariance to distractors.

Since PSEs utilize PSM approximated using pretrained policies, we also compare to a DrQ agent where we initialize it using these pretrained policies. This comparison provides the same auxiliary information
to DrQ as available to PSEs, thus, the generalization difference stems from how they utilize this information.
\tabref{tab:dm_control} demonstrates that PSEs outperform DrQ with pretrained initialization,
indicating that the additional pretraining steps are
more judiciously utilized for computing PSM as opposed to just longer training with DrQ. More details, including learning curves, are in \secref{app:distracting_dm}.

\vspace{-0.35cm}
\section{Related Work}
\vspace{-0.2cm}
\label{sec:related_work}

PSM~(\secref{sec:policy_sim}) is inspired by bisimulation metrics~(\secref{app:bisim}). However, different than traditional 
bisimulation~\citep[\eg][]{larsen1991bisimulation, givan2003equivalence, ferns2011bisimulation}, PSM is more tractable as it defined with respect to a single policy similar to the 
recently proposed $\pi^*$-bisimulation~\citep{Castro20, zhang2020learning}. However, in contrast to PSM, bisimulation metrics rely on reward information and may not provide a meaningful notion of behavioral similarity in certain environments~(\secref{sec:jumping}). 
For example, states similar under PSM would have similar optimal policies, yet can have arbitrarily large $\pi^*$-bisimulation distance between them~(\propref{thm:mainProposition}). 

PSEs~(\secref{sec:learning_cme}) use contrastive learning to encode behavior similarity~(\secref{sec:policy_sim}) \emph{across} MDPs. Previously, contrastive learning has been applied for imposing state self-consistency~\citep{laskin_srinivas2020curl}, capturing predictive information~\citep{oord2018representation, mazoure2020deep, lee2020predictive} or encoding transition dynamics~\citep{van2020plannable, stooke2020decoupling, schwarzer2020data} \emph{within} an MDP. These methods can be integrated with PSEs to encode additional invariances. Interestingly, in a similar spirit to PSEs, \citet{pacchiano2020learning,moskovitz2021efficient} explore comparing behavioral similarity \emph{between policies} to guide policy optimization within an MDP.

PSEs are complementary to data augmentation methods~\citep{Kostrikov20, Lee20, raileanu2020automatic, Ye20}, which have recently been shown to significantly improve agents' generalization capabilities. 
In fact, we combine PSEs to state-of-the-art augmentation methods including random convolutions~\citep{Lee20, Laskin20} in the jumping task
and DrQ~\citep{Kostrikov20} on Distracting Control Suite, leading to performance improvement. 
Furthermore, for certain RL tasks, it can be unclear what an optimality invariant augmentation would look like~(\secref{sec:jumping_orthogonality}).
PSM can quantify the invariance of such augmentations~(\propref{thm:dataaugproposition}).



\vspace{-0.35cm}
\section{Conclusion}
\vspace{-0.2cm}

This paper advances generalization in RL by two contributions: (1) the \emph{policy similarity metric}~(PSM) which provides a new notion of state similarity based on behavior proximity, and (2) 
\emph{contrastive metric embeddings}, which harness the benefits of contrastive learning for representations based on a similarity metric.
PSEs combine these two ideas to improve generalization.  %
Overall, this paper shows the benefits of exploiting the inherent structure in RL for learning effective representations. 

\medskip

\bibliographystyle{iclr2021_conference}
{\small \bibliography{references.bib}}

\clearpage
\appendix
\numberwithin{equation}{section}
\numberwithin{proposition}{section}
\numberwithin{figure}{section}
\numberwithin{table}{section}
\numberwithin{algorithm}{section}
\part*{Appendix}


\section{Proofs}\label{app:proofs}
We begin by defining some notation which will be used throughout these results:
\begin{itemize}
  \item We denote $\expected_{t\geq 0}[\gamma^t TV(\tilde{\pi}(Y^t_y), \pi^*(Y^t_y))] = \expected_{Y^t_{y}}\left[ \sum_{t\geq 0} \gamma^t TV(\tilde{\pi}(Y^t_y), \pi^*(Y^t_y)) \right]$
  \item For any $y\in Y$, let $Y^t_{y}\sim P^{\tilde{\pi}}(\cdot | Y^{t-1}_y)$, where $Y^0_y=y$.
  \item $TV^{n}(Y^k_y) = \mathbb{E}_{0 \leq t < n}\gamma^t TV(\tilde{\pi}(Y^{k+t}_y), \pi^*(Y^{k+t}_y))$.
\end{itemize}

We now proceed with some technical lemmas necessary for the main result.

\begin{lemma}
  \label{lemma:thirdHalf}
  Given any two pseudometrics\footnote{Pseudometrics are generalization of metrics where the distance between two distinct states can be zero.} $d, d^\prime \in \sM$ and probability distributions $P_\gX,\ P_\gY$ where $\gX, \gY \subset \gS$, we have:
  \[ \gW^{1}(d)(P_\gX, P_\gY) \le \|d - d^\prime\| + \gW^{1}(d^\prime)(P_\gX, P_\gY) \]
\end{lemma}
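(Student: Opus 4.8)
The plan is to work directly with the Kantorovich (primal) formulation of the $1$-Wasserstein distance rather than its dual. Recall that for a base pseudometric $d\in\sM$ and distributions $P_\gX,P_\gY$ on $\gS$,
\[
\gW^{1}(d)(P_\gX,P_\gY)=\min_{\lambda\in\Lambda(P_\gX,P_\gY)}\ \sum_{u,v\in\gS}\lambda(u,v)\,d(u,v),
\]
where $\Lambda(P_\gX,P_\gY)$ denotes the set of couplings (transport plans) with marginals $P_\gX$ and $P_\gY$. Here $\|d-d'\|$ is understood as the sup-norm $\sup_{u,v}\lvert d(u,v)-d'(u,v)\rvert$, the standard metric on the space $\sM$ of pseudometrics. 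The minimum above is attained because $\Lambda(P_\gX,P_\gY)$ is compact and the objective is linear in $\lambda$ (in the finite-state setting used for the dynamic-programming computation of the metric this is immediate).

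First I would fix an optimal coupling $\lambda^{*}\in\Lambda(P_\gX,P_\gY)$ for the base metric $d'$, i.e. one that attains $\gW^{1}(d')(P_\gX,P_\gY)=\sum_{u,v}\lambda^{*}(u,v)\,d'(u,v)$. Because $\lambda^{*}$ is also a feasible coupling for the transport problem associated with $d$, and $\gW^{1}(d)$ is a minimum over all such couplings, we obtain the slack inequality
\[
\gW^{1}(d)(P_\gX,P_\gY)\ \le\ \sum_{u,v}\lambda^{*}(u,v)\,d(u,v).
\]

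Next I would bound the integrand pointwise. For every pair $(u,v)$,
\[
d(u,v)=d'(u,v)+\big(d(u,v)-d'(u,v)\big)\le d'(u,v)+\|d-d'\|.
\]
Substituting this into the previous display and using that $\lambda^{*}$ is a probability measure, so that $\sum_{u,v}\lambda^{*}(u,v)=1$, gives
\[
\gW^{1}(d)(P_\gX,P_\gY)\le \sum_{u,v}\lambda^{*}(u,v)\,d'(u,v)+\|d-d'\|\sum_{u,v}\lambda^{*}(u,v)=\gW^{1}(d')(P_\gX,P_\gY)+\|d-d'\|,
\]
which is exactly the claim.

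I do not expect any real obstacle here: the argument is a one-line application of the fact that the optimal transport cost is monotone and $1$-Lipschitz in the base cost with respect to the sup-norm. The only points needing a word of care are (i) that $\|\cdot\|$ denotes the sup-norm, so that the pointwise bound $d\le d'+\|d-d'\|$ is valid, and (ii) existence of the optimal coupling $\lambda^{*}$, which holds in the finite-state case of interest (and more generally by lower semicontinuity of the cost together with tightness of $\Lambda(P_\gX,P_\gY)$). One could alternatively phrase the whole argument through the Kantorovich--Rubinstein dual, but the primal coupling argument is the most transparent and avoids any Lipschitz bookkeeping.
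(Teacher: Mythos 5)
Your proof is correct and follows essentially the same route as the paper: both work with the coupling (Kantorovich) formulation of $\gW^1$, apply the pointwise bound $d(u,v)\le d'(u,v)+\|d-d'\|$ under the sup-norm, and use that the coupling has total mass one. The only cosmetic difference is that you fix an optimal coupling for $d'$ and use it as a feasible plan for $d$, whereas the paper keeps the minimum over couplings throughout and pulls the constant $\|d-d'\|$ out of it; these are interchangeable one-line manipulations.
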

\begin{proof}
Note that the dual of the linear program for computing $\gW^{1}(d)(P_\gX, P_\gY)$ is given by
\begin{align*}
  & \min_{\Gamma} \sum_{x \in \gX,\ y \in \gY } \Gamma(x, y)\ d(x, y) \\
  & \mathrm{subject\ to\ }  \sum_x \Gamma(x, y) = P_\gY(y)\ \forall y,\quad \sum_y \Gamma(x, y) = P_\gX(x)\ \forall x,\quad \Gamma(x, y) \ge 0\ \forall x, y 
\end{align*}
Using the dual formulation subject to the constraints above, $\gW^{1}(d)$ can be written as
\begin{align*}
  \gW^{1}(d)(P_\gX, P_\gY) \le \|d - d^\prime\|
  &= \gW^{1}(d - d^\prime + d^\prime)(P_\gX, P_\gY) \le \|d - d^\prime\| \\
  &= \min_{\Gamma} \sum_{x \in \gX,\ y \in \gY } \Gamma(x, y)\ (d(x, y) - d^\prime(x, y) + d^\prime(x, y)) \\
  &\le \min_{\Gamma} \sum_{x \in \gX,\ y \in \gY } \Gamma(x, y)\ (\|d - d^\prime\| + d^\prime(x, y)) \\
  &= \|d - d^\prime\| + \min_{\Gamma} \sum_{x \in \gX,\ y \in \gY } \Gamma(x, y)\  d^\prime(x, y) \\
  &= \|d - d^\prime\| + \gW^{1}(d^\prime)(P_\gX, P_\gY)
\end{align*}
\end{proof}

\begin{lemma}
  \label{lemma:firstHalf}
  Given any $y_0\in Y$, we have:
  \begin{align*}
    \sum_{y_1\in Y} \left(P^{\tilde{\pi}}(y_{1}|y_0) - P^{\pi^*}(y_{1} | y_0) \right)TV^{n}(Y^0_{y_{1}}) \leq \frac{2}{1-\gamma} TV (\tilde{\pi}(y_0), \pi^{*}(y_0))
  \end{align*}
\end{lemma}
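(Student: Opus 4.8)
The plan is to separate the left-hand side into a bounded scalar weight $TV^n(Y^0_{y_1}) = TV^n(y_1)$ attached to each $y_1$ (recall $Y^0_{y_1}=y_1$) and the signed mass difference $P^{\tilde{\pi}}(y_1|y_0) - P^{\pi^*}(y_1|y_0)$, and then to relate the mismatch between these two one-step state distributions back to the policy discrepancy at $y_0$.

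First I would bound the weights. Since every total-variation term obeys $TV(\tilde{\pi}(\cdot), \pi^*(\cdot)) \leq 1$, the discounted sum defining $TV^n(y_1)$ is dominated by the geometric series $\sum_{t=0}^{n-1}\gamma^t \leq \tfrac{1}{1-\gamma}$, and it is clearly nonnegative. Because $P^{\tilde{\pi}}(\cdot|y_0)$ and $P^{\pi^*}(\cdot|y_0)$ are both probability distributions and each weight is nonnegative, the signed sum is dominated by the absolute one, giving
\[
  \sum_{y_1}\big(P^{\tilde{\pi}}(y_1|y_0) - P^{\pi^*}(y_1|y_0)\big)\,TV^n(y_1) \;\leq\; \frac{1}{1-\gamma}\sum_{y_1}\big|P^{\tilde{\pi}}(y_1|y_0) - P^{\pi^*}(y_1|y_0)\big| \;=\; \frac{2}{1-\gamma}\,TV\big(P^{\tilde{\pi}}(\cdot|y_0), P^{\pi^*}(\cdot|y_0)\big),
\]
using that the $\ell_1$ distance between two distributions equals twice their total variation.

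The crux is then the one-step contraction $TV\big(P^{\tilde{\pi}}(\cdot|y_0), P^{\pi^*}(\cdot|y_0)\big) \leq TV(\tilde{\pi}(y_0), \pi^*(y_0))$. I would expand each induced transition as a mixture over actions, $P^{\pi}(y_1|y_0) = \sum_a \pi(a|y_0)\,P(y_1|y_0,a)$, so the difference becomes $\sum_a (\tilde{\pi}(a|y_0) - \pi^*(a|y_0))\,P(y_1|y_0,a)$; applying the triangle inequality over $a$, summing over $y_1$, and collapsing $\sum_{y_1}P(y_1|y_0,a) = 1$ leaves exactly $\sum_a|\tilde{\pi}(a|y_0) - \pi^*(a|y_0)| = 2\,TV(\tilde{\pi}(y_0), \pi^*(y_0))$. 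Substituting this contraction into the previous display yields the claimed $\tfrac{2}{1-\gamma}\,TV(\tilde{\pi}(y_0), \pi^*(y_0))$.

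I expect the main obstacle to be the contraction step, since it is the only place the transition dynamics $P$ enter, and the mixture-decomposition-plus-triangle-inequality argument is what turns a statement about next-state distributions into one about the policies themselves. The remaining ingredients — the geometric bound on $TV^n$ and the $\ell_1$-versus-total-variation identity — are elementary, though I would check carefully that dropping the negative contributions in the first step is legitimate, which holds precisely because $TV^n(y_1) \geq 0$ and the two transition rows carry equal total mass.
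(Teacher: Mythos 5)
Your proposal is correct and uses essentially the same argument as the paper: bound the weights $TV^n(Y^0_{y_1})$ by the geometric series $\tfrac{1}{1-\gamma}$, expand each induced kernel as $P^{\pi}(y_1|y_0)=\sum_a \pi(a|y_0)P(y_1|y_0,a)$, apply the triangle inequality, marginalize out $y_1$ via $\sum_{y_1}P(y_1|y_0,a)=1$, and use $\sum_a|\tilde{\pi}(a|y_0)-\pi^*(a|y_0)|=2\,TV(\tilde{\pi}(y_0),\pi^*(y_0))$. The only difference is organizational — you factor the middle steps into an explicit one-step data-processing inequality for $TV$ under the action mixture, whereas the paper interleaves them — which does not change the substance.
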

\begin{proof}
  \begin{align*}
    \sum_{y_1\in Y} \left(P^{\tilde{\pi}}(y_{1}|y_0) - P^{\pi^*}(y_{1} | y_0) \right)TV^{n}(Y^0_{y_{1}}) & \leq \left| \sum_{y_1\in Y} \left(P^{\tilde{\pi}}(y_{1}|y_0) - P^{\pi^*}(y_{1} | y_0) \right)TV^{n}(Y^0_{y_{1}}) \right| \\
    & \leq \sum_{y_1\in Y} \left| \sum_{a\in A} P(y_{1}|y_0, a)\left(\tilde{\pi}(a|y_0) - \pi^*(a | y_0) \right)\right| TV^{n}(Y^0_{y_{1}})  \\
    & \leq \frac{1}{1 - \gamma} \sum_{y_1\in Y} \sum_{a\in A} P(y_{1}|y_0, a)\left| \tilde{\pi}(a|y_0) - \pi^*(a | y_0) \right|  \\
    & = \frac{1}{1 - \gamma} \sum_{a\in A}\left| \tilde{\pi}(a|y_0) - \pi^*(a | y_0) \right| \sum_{y_1\in Y} P(y_{1}|y_0, a) \\
    & = \frac{1}{1 - \gamma} \sum_{a\in A}\left| \tilde{\pi}(a|y_0) - \pi^*(a | y_0) \right| \\
    & = \frac{2}{1-\gamma} TV (\tilde{\pi}(y_0), \pi^{*}(y_0))
  \end{align*}
\end{proof}

\begin{lemma}
  \label{lemma:secondHalf}
  Given any $y_0\in Y$, if $TV^n(Y^0_{y_1})\leq\frac{1+\gamma}{1-\gamma}d^*(\tilde{x}_{y_1}, y_1)$, we have:
   \[ \sum_{y_{1}\in Y} P^{\pi^*}(y_{1} | y_0) TV^{n}(Y^0_{y_{1}}) \leq \frac{1 + \gamma}{1 - \gamma}W_1(d^*)\left(P^{\pi^*}(\cdot | \tilde{x}_{y_0}), P^{\pi^*}(\cdot| y_0)\right) \]
\end{lemma}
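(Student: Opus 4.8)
The plan is to reduce the claim to a statement about the base metric $d^*$ alone and then apply optimal transport. The hypothesis $TV^n(Y^0_{y_1}) \leq \frac{1+\gamma}{1-\gamma}\, d^*(\tilde{x}_{y_1}, y_1)$ holds for every successor $y_1$; multiplying by the nonnegative weight $P^{\pi^*}(y_1 \mid y_0)$ and summing over $y_1 \in \gY$ yields
\[
  \sum_{y_1 \in \gY} P^{\pi^*}(y_1 \mid y_0)\, TV^n(Y^0_{y_1}) \;\leq\; \frac{1+\gamma}{1-\gamma} \sum_{y_1 \in \gY} P^{\pi^*}(y_1 \mid y_0)\, d^*(\tilde{x}_{y_1}, y_1).
\]
It then suffices to bound the $d^*$-weighted sum on the right by $W_1(d^*)\big(P^{\pi^*}(\cdot \mid \tilde{x}_{y_0}), P^{\pi^*}(\cdot \mid y_0)\big)$.

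For this I would introduce an optimal coupling $\Gamma^*$ attaining the minimum in the Kantorovich linear program for $W_1(d^*)\big(P^{\pi^*}(\cdot \mid \tilde{x}_{y_0}), P^{\pi^*}(\cdot \mid y_0)\big)$ — the same program written out in the proof of Lemma~\ref{lemma:thirdHalf} — whose $\gY$-marginal is $P^{\pi^*}(\cdot \mid y_0)$ and whose $\gX$-marginal is $P^{\pi^*}(\cdot \mid \tilde{x}_{y_0})$. Using the marginal identity $\sum_{x_1} \Gamma^*(x_1, y_1) = P^{\pi^*}(y_1 \mid y_0)$, the target sum rewrites as $\sum_{x_1, y_1} \Gamma^*(x_1, y_1)\, d^*(\tilde{x}_{y_1}, y_1)$. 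The decisive step is to compare $d^*(\tilde{x}_{y_1}, y_1)$ with $d^*(x_1, y_1)$: since $\tilde{x}_{y_1} = \arg\min_{x \in \gX} d^*(x, y_1)$ is the nearest neighbour of $y_1$ in $\gX$, we have $d^*(\tilde{x}_{y_1}, y_1) \leq d^*(x_1, y_1)$ for every $x_1 \in \gX$. Substituting termwise and using $\Gamma^*(x_1,y_1)\ge 0$ gives
\[
  \sum_{x_1, y_1} \Gamma^*(x_1, y_1)\, d^*(\tilde{x}_{y_1}, y_1) \;\leq\; \sum_{x_1, y_1} \Gamma^*(x_1, y_1)\, d^*(x_1, y_1) \;=\; W_1(d^*)\big(P^{\pi^*}(\cdot \mid \tilde{x}_{y_0}), P^{\pi^*}(\cdot \mid y_0)\big),
\]
which chains with the first display to close the argument.

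The step I expect to require the most care is the support condition $x_1 \in \gX$ needed for the nearest-neighbour inequality: the bound $d^*(\tilde{x}_{y_1}, y_1) \leq d^*(x_1, y_1)$ only holds when $x_1$ ranges over $\gX$. This is where I would use that $\tilde{x}_{y_0} \in \gX$ and that transitions within $\gM_\gX$ keep the state in $\gX$, so the $\gX$-marginal $P^{\pi^*}(\cdot \mid \tilde{x}_{y_0})$ — and hence the first coordinate of every atom of $\Gamma^*$ — is supported in $\gX$. Apart from this, the only remaining things to verify are that the weights $P^{\pi^*}(y_1 \mid y_0)$ and the coupling entries are nonnegative, so the inequalities survive summation, and that an optimal coupling exists, which holds since the transport problem over the tabular state spaces is a feasible linear program.
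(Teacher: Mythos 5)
Your proof is correct, and it reaches the bound from the primal (coupling) side of the Kantorovich linear program, whereas the paper works on the dual (potential) side: the paper observes that $\sum_{y_1} P^{\pi^*}(y_1\mid y_0)\,TV^n(Y^0_{y_1}) - \sum_{x} P^{\pi^*}(x\mid \tilde{x}_{y_0})\cdot 0$ has exactly the form of the objective of the $\max_{u,v}$ formulation of $\gW_1(d^*)\bigl(P^{\pi^*}(\cdot\mid y_0), P^{\pi^*}(\cdot\mid \tilde{x}_{y_0})\bigr)$, and that the hypothesis makes $u_{y_1}=\tfrac{1-\gamma}{1+\gamma}TV^{n}(Y^0_{y_1})$, $v_x=0$ a feasible pair of potentials, so this objective value is at most the optimum. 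Both arguments ultimately rest on the same inequality $d^*(\tilde{x}_{y_1},y_1)\le d^*(x_1,y_1)$ for all $x_1\in\gX$ --- needed in the paper's version to verify the constraint $u_{y_1}-v_{x_1}\le d^*(x_1,y_1)$, and in yours to compare the integrand against the coupling cost termwise --- so the two proofs are LP-duals of one another rather than genuinely independent. Your version has the merit of making explicit two points the paper leaves implicit, namely the nearest-neighbour step and the fact that the $\gX$-marginal of the coupling is supported in $\gX$, at the small cost of having to invoke existence of an optimal coupling, which the dual-feasibility route sidesteps since any feasible potential pair already yields the bound.
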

\begin{proof}

  Note that we have the following equality, where $\mathbf{0}$ is a vector of zeros:
  \[ \sum_{y_{1}\in Y} P^{\pi^*}(y_{1} | y_0) TV^{n}(Y^0_{y_{1}}) = \sum_{y_{1}\in Y} P^{\pi^*}(y_{1} | y_0) TV^{n}(Y^0_{y_{1}}) - \sum_{x\in X}P^{\pi^*}(x|\tilde{x}_{y_0})\mathbf{0} \]
  which is the same form as the primal LP for $W_1(d^*)(P^{\pi^*}(\cdot|y_0), P^{\pi^*}(\cdot|\tilde{x}_{y_0}))$. By assumption, we have that
  \[ TV^n(Y^0_{y_1})\leq\frac{1+\gamma}{1-\gamma}d^*(\tilde{x}_{y_1}, y_1) \]

  This implies that $\frac{1-\gamma}{1+\gamma}TV^n(Y^0_{\cdot})$ is a feasible solution to $W_1(d^*)(P^{\pi^*}(\cdot|y_0), P^{\pi^*}(\cdot|\tilde{x}_{y_0}))$:
   \[ \sum_{y_{1}\in Y} P^{\pi^*}(y_{1} | y_0) \frac{1-\gamma}{1+\gamma}TV^{n}(Y^0_{y_{1}}) \leq W_1(d^*)\left(P^{\pi^*}(\cdot | \tilde{x}_{y_0}), P^{\pi^*}(\cdot| y_0)\right) \]
   and the result follows.
\end{proof}

\begin{restatable}{proposition}{fixedpoint}\label{thm:fixedpoint} 
  The operator $\gF$ given by:
    \[ \gF(d)(x, y) = \dist(\pi^*(x),  \pi^*(y)) + \gamma \gW_1(d)(P^{\pi^*}_{\gX}(\cdot | x), P^{\pi^*}_{\gY}(\cdot | y)) \]
  is a contraction mapping and has a unique fixed point for a bounded $dist$.
\end{restatable}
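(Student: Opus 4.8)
The plan is to invoke the Banach fixed-point theorem, so the whole argument reduces to exhibiting a complete metric space on which $\gF$ acts as a $\gamma$-contraction. First I would equip the set $\sM$ of bounded pseudometrics on $\gS$ with the supremum norm $\|d - d'\| = \sup_{x,y \in \gS} |d(x,y) - d'(x,y)|$. The space of bounded real-valued functions on $\gS \times \gS$ is a Banach space under this norm, and I would check that $\sM$ sits inside it as a closed subset: the defining properties of a pseudometric (non-negativity, symmetry, $d(x,x) = 0$, and the triangle inequality) are each preserved under pointwise, hence uniform, limits. Thus $\sM$ is itself a complete metric space, which is the setting Banach's theorem requires.

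Next I would verify that $\gF$ maps $\sM$ into $\sM$. Because the term $\dist(\pi^*(x), \pi^*(y))$ is a fixed bounded pseudometric independent of $d$, and $\gamma\,\gW_1(d)$ inherits non-negativity, symmetry, and the triangle inequality from the base metric $d$ (the $1$-Wasserstein metric preserves these properties), the sum $\gF(d)$ is again a pseudometric. Boundedness follows from the hypothesis that $\dist$ is bounded, say by $M$: since $\gW_1(d)(\cdot,\cdot) \le \|d\|$, we obtain $\|\gF(d)\| \le M + \gamma\|d\|$, so $\gF$ indeed stays within the bounded pseudometrics (in fact it maps the ball of radius $M/(1-\gamma)$ into itself).

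The core step is the contraction estimate, and it is where Lemma~\ref{lemma:thirdHalf} does the work. For fixed $x, y$ the $\dist$ term cancels in the difference $\gF(d)(x,y) - \gF(d')(x,y)$, leaving only $\gamma\big(\gW_1(d) - \gW_1(d')\big)$ evaluated at the pair $\big(P^{\pi^*}_\gX(\cdot\,|\,x), P^{\pi^*}_\gY(\cdot\,|\,y)\big)$. Applying Lemma~\ref{lemma:thirdHalf} twice, once with the roles of $d$ and $d'$ swapped, gives $|\gW_1(d)(P, Q) - \gW_1(d')(P, Q)| \le \|d - d'\|$ for any distributions $P, Q$. Hence $|\gF(d)(x,y) - \gF(d')(x,y)| \le \gamma\|d - d'\|$ for every $x, y$, and taking the supremum over $x, y$ yields $\|\gF(d) - \gF(d')\| \le \gamma\|d - d'\|$. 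Since $\gamma \in [0,1)$, $\gF$ is a contraction, and Banach's theorem delivers the unique fixed point $d^*$.

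I expect the only subtlety — rather than a genuine obstacle — to be confirming that $\gW_1$ truly preserves the triangle inequality and that $\sM$ is closed under uniform limits, both of which are standard facts about Wasserstein metrics and pointwise-defined pseudometric properties. Everything else is the mechanical contraction computation, which Lemma~\ref{lemma:thirdHalf} reduces to a one-line estimate.
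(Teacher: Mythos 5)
Your proposal is correct and follows essentially the same route as the paper: cancel the $\dist$ term in the difference $\gF(d)-\gF(d')$, bound the resulting Wasserstein difference via Lemma~\ref{lemma:thirdHalf} to obtain the $\gamma$-contraction estimate, and invoke the Banach fixed-point theorem. You are in fact somewhat more careful than the paper, which omits the verification that $\sM$ is complete under the sup norm and that $\gF$ maps bounded pseudometrics to bounded pseudometrics; these are exactly the details needed to make the Banach argument fully rigorous, and your treatment of them is sound.
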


\begin{proof} We first prove that $\gF$ is contraction mapping. Then, a simple application of the Banach Fixed Point Theorem asserts that $\gF$ has a unique fixed point. Note that for all pseudometrics $d, d^\prime \in \sM$, and for all states $x \in \gX$, $y \in \gY$,
  \begin{align*}
      & \gF(d)(x, y) - \gF(d^\prime)(x, y) \\
      &= \gamma \left( \gW_1(d)(P^{\pi^*_{\gX}}(\cdot | x), P^{\pi^*}_{\gY}(\cdot | y)) - \gW_1(d^\prime)(P^{\pi^*_{\gX}}(\cdot | x), P^{\pi^*}_{\gY}(\cdot | y)) \right) \\
      & \overset{\autoref{lemma:thirdHalf}}{\leq} \gamma \left( \|d - d^\prime\| +  \gW_1(d^\prime)(P^{\pi^*_{\gX}}(\cdot | x), P^{\pi^*}_{\gY}(\cdot | y)) - \gW_1(d^\prime)(P^{\pi^*_{\gX}}(\cdot | x), P^{\pi^*}_{\gY}(\cdot | y)) \right) \\
      &= \gamma\ \|d - d^\prime\| \\
  \end{align*}
Thus, $\|\gF(d) - \gF(d^\prime) \| \le \gamma \|d - d^\prime\|$, so that $\gF$ is a contracting mapping for $\gamma < 1$ and has an unique fixed point $d^*$.
\end{proof}

\mainTheorem*
\begin{proof}
  We will prove this by induction.
  Assuming that the bound holds for $TV^{n}$, we prove the bound holds for $TV^{n+1}$.
  The base case for $n=1$ follows from $TV(\tilde{\pi}(y), \pi^*(y)) = TV(\pi^*(\tilde{x}_{y}), \pi^*(y)) \leq  d^*(\tilde{x}_{y}, y)$.
  Note that $TV^{n} \leq \frac{1 - \gamma^{n+1}}{1 - \gamma}$ since the $TV$ distance per time-step can be at most 1.

  Let $P^{\pi}_t(y'|y)$ denote the probability of ending in state $y'\in Y$ after $t$ steps when following policy $\pi$ and starting from state $y$. We then have:
  \small
  \begin{align*}
    T&V^{n+1}(Y^k_y) = \sum_{y_k\in Y}P^{\tilde{\pi}}_k(y_k|y) TV(\tilde{\pi}(y_k), \pi^*(y_k)) + \gamma TV^n(Y^{k+1}_y) \\
    &= \sum_{y_k\in Y}P^{\tilde{\pi}}_k(y_k|y) \left[ TV(\tilde{\pi}(y_k), \pi^*(y_k)) + \gamma \sum_{y_{k+1}\in Y} P^{\tilde{\pi}}(y_{k+1}|y_k) TV^{n}(Y^0_{y_{k+1}}) \right] \\
    & = \sum_{y_k\in Y}P^{\tilde{\pi}}_k(y_k|y) \left[ TV(\tilde{\pi}(y_k), \pi^*(y_k)) + \gamma\sum_{y_{k+1}\in Y} \left(P^{\tilde{\pi}}(y_{k+1}|y_k) - P^{\pi^*}(y_{k+1} | y_k) \right)TV^{n}(Y^0_{y_{k+1}}) \right. \\
    & \qquad \left. + \gamma\sum_{y_{k+1}\in Y} P^{\pi^*}(y_{k+1} | y_k) TV^{n}(Y^0_{y_{k+1}}) \right] \\
    & \overset{\autoref{lemma:firstHalf}}{\leq} \sum_{y_k\in Y}P^{\tilde{\pi}}_k(y_k|y) \left[ TV(\tilde{\pi}(y_k), \pi^*(y_k)) + \frac{2\gamma}{1-\gamma} TV (\tilde{\pi}(y_k), \pi^{*}(y_k)) + \gamma\sum_{y_{k+1}\in Y} P^{\pi^*}(y_{k+1} | y_k) TV^{n}(Y^0_{y_{k+1}}) \right] \\
    & \overset{\autoref{lemma:secondHalf}}{\leq} \sum_{y_k\in Y}P^{\tilde{\pi}}_k(y_k|y) \left[ TV(\tilde{\pi}(y_k), \pi^*(y_k))  \right. \\
    & \qquad \left. + \gamma \left(\frac{2}{1-\gamma} TV (\tilde{\pi}(y_k), \pi^{*}(y_k)) + \frac{1 + \gamma}{1 - \gamma}W_1(d^*)\left(P^{\pi^*}(\cdot | \tilde{x}_{y_k}), P^{\pi^*}(\cdot| y_k)\right) \right) \right] \\
    & = \sum_{y_k\in Y}P^{\tilde{\pi}}_k(y_k|y) \left[ \frac{1 + \gamma}{1 - \gamma} \left( TV(\pi^*(\tilde{x}_{y_k}), \pi^*(y_k)) + \gamma W_1(d^*)(P^{\pi^*}(\cdot | \tilde{x}_{y_k}), P^{\pi^*}(\cdot| y_k))   \right) \right] \\
    & \leq \sum_{y_k\in Y}P^{\tilde{\pi}}_k(y_k|y)  \frac{1 + \gamma}{1-\gamma} d^*(\tilde{x}_{y_k}, y_k)
  \end{align*}
  \normalsize

  Thus, by induction, it follows that for all $n$:
  \[  TV^n(Y^0_y) \leq \frac{1+\gamma}{1-\gamma}d^*(\tilde{x}_y, y), \]
  which completes the proof.
\end{proof}

\section{Bisimulation metrics}\label{app:bisim}

\textbf{Notation}. We use the notation as defined in \secref{sec:back}. 

Bisimulation metrics~\citep{givan2003equivalence, ferns2011bisimulation} define a pseudometric $d_{\sim}: \gS\times \gS \rightarrow \sR$ where $d_\sim(x, y)$ is defined in terms of distance between immediate rewards and next state distributions. Define
$\Fbisim: \sM \rightarrow \sM$ by
\begin{align}
  \Fbisim(d)(x, y) = \max_{a \in \gA}  |R(x, a) - R(y, a)| + \gamma \gW_1(d)\left(P^{a}(\cdot \, | \, x), P^{a}(\cdot \, | \, y)\right)\label{eq:f2}
\end{align}
then, $\Fbisim$ has a unique fixed point $d_{\sim}$ which is a bisimulation metric. $\Fbisim$ uses the 1-Wasserstein metric $\gW_1:\sM\rightarrow\sM_p$. The 1-Wasserstein distance $\gW_1(d)$ under the pseudometric $d$ can be computed using the dual linear program:
\begin{equation*}
  \label{eqn:primal_lp}
  \max_{\rvu, \rvv} \sum_{x\in\gX} P(x)u_x - \sum_{y\in\gY} P(y)v_y \quad \mathrm{subject\ to\ } \forall x \in\gX, y \in \gY \quad u_{x} - v_{y} \leq d(x, y) 
\end{equation*}
Since we are only interested in computing the coupling between the states in $\gX$ and $\gY$, the above formulation assumes that $P_\gX(y) = 0$ for all $y \in \gY$ and $P_\gY(x) = 0$ for all $x \in \gX$.
The computation of $d_{\sim}$ is expensive and requires a tabular representation of the states, rendering it impractical for large state spaces. On-policy bisimulation~\citep{Castro20}~(\eg\ $\pi^*$-bisimulation) is tied to specific behavior policies and is much easier to approximate than bisimulation. 


\section{Policy Similarity Metric}

\subsection{Computing PSM}\label{app:psm_compute}

In general, PSM for a given $\dist$ across MDPs $\gM_\gX$ and $\gM_\gY$ is given by
\begin{align}\label{psm_orig}
    d^{*}(x, y) = \dist\big(\pi^*_{\gX}(x),  \pi^*_{\gY}(y)\big) +  \gamma \gW_1 \big(d^{*})(P^{\pi^*}_\gX(\cdot \, | \, x), P^{\pi^*}_\gY(\cdot \, | \, y)\big) .
\end{align}

Since our main focus is showing the utility of PSM for generalization, we simply use environments where PSM can be computed using dynamic programming. Using a similar observation to \citet{Castro20}, we assert that the recursion for $d^*$ takes the following form in deterministic environments:
\begin{align}\label{eq_psm_easy}
    d^{*}(x, y) = \dist\big(\pi^*_{\gX}(x),  \pi^*_{\gY}(y)\big) +  \gamma d^{*}(x^\prime, y^\prime \big) .
\end{align}
where $x^\prime=P^{\pi^*}_\gX(x)$, $y^\prime=P^{\pi^*}_\gY(y)$ are the next states from taking actions $\pi^*_{\gX}(x)$, $\pi^*_{\gX}(y)$ from states $x,\ y$ respectively.  Furthermore, we assume that $\dist$ between terminal states from $\gM_\gX$ and $\gM_\gY$ is zero. Note that
the form of \eqref{eq_psm_easy} closely resembles the update rule in Q-learning, and as such, can be efficiently computed with samples using approximate dynamic programming. Given access to optimal trajectories $\tau^{*}_\gX = \{x_t\}_{t=1}^{N}$ and $\tau^{*}_\gY= \{y_t\}_{t=1}^{N}$, where $x_{t+1}= P^{\pi^*}_\gX(x_t)$ and $y_{t+1} = P^{\pi^*}_\gY(y_t)$, \eqref{eq_psm_easy} can be solved using exact dynamic programming; we provide pseudocode in \secref{dyna_psm}. 

There are other ways to approximate the Wasserstein distance in bisimulation metrics~\citep[\eg][]{ferns06methods, ferns2011bisimulation, Castro20, zhang2020learning}. That said, approximating bisimulation~(or PSM) for stochastic environments remains an exciting research direction~\citep{Castro20}. Investigating other distance metrics for long-term behavior difference in PSM is also interesting for future work.

\subsection{PSM Connections to Data Augmentation and Bisimulation}

{\bf Connection to bisimulation}. Although bisimulation metrics have appealing properties such as bounding value function differences~(\eg~\citep{ferns2014bisimulation}), they rely on reward information and may not provide a meaningful notion of behavioral similarity in certain environments.  Proposition~\ref{thm:mainProposition} implies that states similar under PSM would have similar \comment{temporally-extended}optimal policies yet can have arbitrarily large bisimulation distance between them. 

\begin{restatable}{proposition}{mainProposition}\label{thm:mainProposition}
There exists environments $M_\gX$ and $M_\gY$ such that $\forall (x, y) \in \gL$ where $\gL = \{(x, y)\ | x \in \gX,\ y \in \gY,\  d^{*}(x, y) = 0\}$, $d^{*}_{\sim}(x, y) = \frac{|R_{\mathrm{max}} - R_{\mathrm{min}}|}{1-\gamma} - \epsilon\ $ 
  for any given $\epsilon > 0$.
\end{restatable}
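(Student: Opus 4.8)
The plan is to prove the proposition by explicit construction: exhibit a pair of MDPs in which two states share identical optimal behavior (so that PSM vanishes) yet accumulate the largest admissible discrepancy in rewards (so that the bisimulation metric $d_\sim$ of \eqref{eq:f2} is driven toward its maximal value $\frac{|R_{\mathrm{max}}-R_{\mathrm{min}}|}{1-\gamma}$). The cleanest instance that isolates the phenomenon is to let $\gM_\gX$ and $\gM_\gY$ each consist of a single state, $x$ and $y$ respectively, equipped with one action whose transition is a self-loop, and with constant rewards $R_\gX \equiv a$ and $R_\gY \equiv b$ where $a,b\in[R_{\mathrm{min}},R_{\mathrm{max}}]$. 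More generally one can take two isomorphic chains (as in \figref{fig:example_bisim}) whose optimal policies coincide state-by-state while their per-step rewards differ; I would keep the minimal self-looping version as the base case because it turns every subsequent computation into a one-line fixed-point solve.

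First I would verify that PSM vanishes on the relevant pair. Since the single action forces $\pi^*_\gX(x)=\pi^*_\gY(y)$, the local term satisfies $\dist(\pi^*(x),\pi^*(y))=0$, and the transition kernels under $\pi^*$ are the point masses $\delta_x,\delta_y$, so the defining recursion \eqref{eq:f1} collapses to $d^*(x,y)=\gamma\,\gW_1(d^*)(\delta_x,\delta_y)=\gamma\,d^*(x,y)$, forcing $d^*(x,y)=0$ since $\gamma<1$; uniqueness of this solution is guaranteed by \propref{thm:fixedpoint}. Hence $\gL=\{(x,y)\}$, and the universal quantifier in the statement ranges over a single pair, which neatly sidesteps any need to make the bisimulation value agree across many pairs.

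Second, I would solve the bisimulation recursion on the same pair. Equation \eqref{eq:f2} reduces to $d_\sim(x,y)=|a-b|+\gamma\,\gW_1(d_\sim)(\delta_x,\delta_y)=|a-b|+\gamma\,d_\sim(x,y)$, whose unique fixed point is $d_\sim(x,y)=\frac{|a-b|}{1-\gamma}$. Choosing $a=R_{\mathrm{max}}$ and $b=R_{\mathrm{min}}+\epsilon(1-\gamma)$ (both of which lie in $[R_{\mathrm{min}},R_{\mathrm{max}}]$ whenever $\epsilon\le\frac{R_{\mathrm{max}}-R_{\mathrm{min}}}{1-\gamma}$, the only regime in which the claimed value is nonnegative) gives $|a-b|=(R_{\mathrm{max}}-R_{\mathrm{min}})-\epsilon(1-\gamma)$, so that $d_\sim(x,y)=\frac{|R_{\mathrm{max}}-R_{\mathrm{min}}|}{1-\gamma}-\epsilon$, exactly as required.

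The step needing the most care is the bookkeeping around what $R_{\mathrm{max}},R_{\mathrm{min}}$ denote and whether the construction must realize both extremes exactly. Reading them as the fixed a priori reward bounds makes the one-state example sufficient; if instead one insists that both $R_{\mathrm{max}}$ and $R_{\mathrm{min}}$ be attained as genuine rewards somewhere in $\gM_\gX\cup\gM_\gY$, then I would augment each MDP with a suboptimal branch leading to an absorbing state of reward $R_{\mathrm{min}}$, and the delicate point becomes confirming that this extra state does not introduce a second PSM-zero pair carrying a different $d_\sim$ value. Concretely, one must ensure that the absorbing states have optimal actions distinct from those of $x,y$, so that $\dist$, and therefore $d^*$, is strictly positive on the spurious cross-pairs and they stay out of $\gL$. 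I expect this ``non-contamination of $\gL$'' verification, rather than either fixed-point computation, to be the main obstacle; the richer multi-action and chain variants that more closely mirror \figref{fig:example_bisim} demand the same vigilance, which is why I would lead with the singleton-$\gL$ construction.
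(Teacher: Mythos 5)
Your proof is correct, and it takes a cleaner route than the paper. The paper's own argument is the worked example following the proposition: the two three-state chains of \figref{fig:example_bisim} with identical optimal actions but per-step rewards $r_x\neq r_y$, for which it computes $d^*_\sim(x_0,y_0)=(1+\gamma)\,|r_y-r_x|$ and observes that this is large while $d^*(x_0,y_0)=0$. Your single-state self-loop construction differs in two ways that actually matter for the literal statement. First, it attains the exact claimed value: solving $d_\sim(x,y)=|a-b|+\gamma\,d_\sim(x,y)$ gives $\frac{|a-b|}{1-\gamma}$, and your choice $a=R_{\mathrm{max}}$, $b=R_{\mathrm{min}}+\epsilon(1-\gamma)$ hits $\frac{|R_{\mathrm{max}}-R_{\mathrm{min}}|}{1-\gamma}-\epsilon$ for every admissible $\epsilon$, whereas the paper's chain yields at most $(1-\gamma^2)$ times the maximal value $\frac{R_{\mathrm{max}}-R_{\mathrm{min}}}{1-\gamma}$, so small $\epsilon$ is out of reach there. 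Second, you correctly worry about the universal quantifier over $\gL$: in the paper's chains both $(x_0,y_0)$ and $(x_1,y_1)$ lie in $\gL$ but carry \emph{different} $\pi^*$-bisimulation distances ($(1+\gamma)|r_y-r_x|$ versus $|r_y-r_x|$), so that example does not literally satisfy ``$d^*_\sim(x,y)=c$ for all $(x,y)\in\gL$'' with a single constant; your singleton-$\gL$ construction sidesteps this. Your observations that the claim only makes sense for $\epsilon\le\frac{R_{\mathrm{max}}-R_{\mathrm{min}}}{1-\gamma}$ and that with a single action the metrics $d_\sim$ of Equation~\ref{eq:f2} and $d^*_\sim$ coincide are both apt. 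In short, what your approach buys is a proof of the proposition as stated; what the paper's buys is a more evocative illustration (two genuinely distinct actions, a wrong nearest-neighbour match under bisimulation) at the cost of not quite matching the quantifiers and the exact constant in its own claim.
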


For example, consider the two semantically equivalent environments in Figure~\ref{fig:example_bisim} with $\pi^*_\gX(x_0) = \pi^*_\gY(y_0) = a_0$ and $\pi^*_\gX(x_1) = \pi^*_\gY(y_1) = a_1$ but different rewards $r_x, r_y$ respectively.  Whenever $r_y > (1+ 1/\gamma)\ r_x$, bisimulation metrics incorrectly imply that $x_0$ is more behaviorally similar to $y_1$ than $y_0$.

For the MDPs shown in Figure~\ref{fig:example_bisim}, to determine which $y$ state is behaviorally equivalent to $x_0$, we look at the distances computed by bisimulation metric $d_{\sim}$ and $\pi^*$-bisimulation metric $d^{*}_{\sim}$:
\begin{align*}
 d_{\sim}(x_0, y_0) &= d^{*}_{\sim}(x_0, y_0) = (1 + \gamma)|r_y - r_x| \\
 d_{\sim}(x_0, y_1) &= \mathrm{max}\left((1 + \gamma)\ r_x, r_y \right),\ d^{*}_{\sim}(x_0, y_1) = |r_y - r_x| + \gamma r_x
\end{align*}
Thus, $r_y > (1+ 1/\gamma)\ r_x$ implies that $d_{\sim}(x_0, y_1) <  d_{\sim}(x_0, y_0)$ as well as $d^*_{\sim}(x_0, y_1) <  d^*_{\sim}(x_0, y_0)$.

{\bf Connection to data augmentation.} Data augmentation often assumes access to optimality invariant transformations, \eg\ random crops or flips in image-based benchmarks~\citep{Laskin20, Kostrikov20}. However, for certain RL tasks, such augmentations can erroneously alias states with different optimal behavior and hurt generalization. For example, if the image observation is flipped in a goal reaching task with left and right actions, the optimal actions would also be flipped to take left actions instead of right and vice versa. Proposition~\ref{thm:dataaugproposition} states that PSMs can precisely quantify the invariance of such augmentations.

\begin{restatable}{proposition}{dataaugproposition}\label{thm:dataaugproposition}
For an MDP $M_\gX$ and its transformed version $M_{\psi(\gX)}$ for the data augmentation $\psi$, $d^{*}(x, \psi(x))$ indicates the optimality invariance of $\psi$ for any $x \in \gX$.
\end{restatable}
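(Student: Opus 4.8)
The plan is to make the qualitative phrase ``optimality invariance'' precise and then read the statement off directly from the PSM recursion together with \theoref{thm:mainTheorem}. I would declare $\psi$ to be \emph{optimality invariant at $x$} exactly when reusing the original optimal policy $\pi^*_\gX$ on the augmented observation $\psi(x)$ reproduces the optimal behavior of $M_{\psi(\gX)}$ at $\psi(x)$, and likewise along the induced optimal trajectory; equivalently, $\dist\big(\pi^*_\gX(x_t), \pi^*_{\psi(\gX)}(\psi(x)_t)\big) = 0$ for every $t$, where $x_t$ and $\psi(x)_t$ denote the states visited by the optimal policies of $M_\gX$ and $M_{\psi(\gX)}$ starting from $x$ and $\psi(x)$ respectively. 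The claim to establish is then twofold: $d^*(x, \psi(x))$ vanishes precisely under this condition, and it upper-bounds the suboptimality incurred when $\psi$ is \emph{not} invariant.

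First I would instantiate the PSM recursion \eqref{eq:f1} on the pair $M_\gX$ and $M_{\psi(\gX)}$ with $y = \psi(x)$. In the deterministic setting this is \eqref{eq_psm_easy}, which unrolls cleanly into
\begin{equation*}
  d^*(x, \psi(x)) = \sum_{t \geq 0} \gamma^t\, \dist\big(\pi^*_\gX(x_t),\, \pi^*_{\psi(\gX)}(\psi(x)_t)\big).
\end{equation*}
Since $\dist \geq 0$ and $\gamma \in [0,1)$, this discounted sum vanishes iff every summand vanishes, giving the equivalence $d^*(x, \psi(x)) = 0 \iff \psi$ is optimality invariant at $x$. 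This is the qualitative content: the metric detects exactly the augmentations that leave optimal behavior unchanged at $x$ and all reachable states.

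For the quantitative half I would invoke \theoref{thm:mainTheorem} with $\gY = \psi(\gX)$ and the nearest-neighbor transfer $\tilde{\pi}(\psi(x)) = \pi^*_\gX(\tilde{x}_{\psi(x)})$. Because $\tilde{x}_{\psi(x)}$ minimizes $d^*(\cdot, \psi(x))$ over $\gX$, we have $d^*(\tilde{x}_{\psi(x)}, \psi(x)) \leq d^*(x, \psi(x))$, and the theorem then yields
\begin{equation*}
  \expected\left[ \sum_{t \geq 0} \gamma^t\, TV\big(\tilde{\pi}(Y^t), \pi^*(Y^t)\big) \right] \leq \frac{1+\gamma}{1-\gamma}\, d^*(x, \psi(x)),
\end{equation*}
where the chain $Y^t$ starts at $\psi(x)$. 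Thus $d^*(x, \psi(x))$ controls the cumulative discounted deviation between the behavior induced by reusing $\pi^*_\gX$ on augmented inputs and the true optimal behavior on $M_{\psi(\gX)}$: a small value certifies that $\psi$ is near-optimality-invariant, while a large value flags that the augmentation can meaningfully corrupt optimal behavior (as in the left/right flip example).

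The main obstacle is definitional rather than technical: the proposition is stated qualitatively, so the real work is fixing a definition of ``optimality invariance'' that both matches the intended meaning (the augmentation must not alias states with different optimal actions) and is exactly characterized by $d^*$. Once that definition is pinned down, both directions follow with essentially no further computation --- the vanishing characterization by unrolling the recursion, and the quantitative bound directly from \theoref{thm:mainTheorem}. A secondary care point is the stochastic case, where the clean unrolling of \eqref{eq_psm_easy} is replaced by the Wasserstein recursion \eqref{eq:f1}; there I would recover the same equivalence by noting that $d^*(x, \psi(x)) = 0$ forces the $\dist$ term to vanish at $x$ and the $\gW_1$ term to vanish on the support of the optimal next-state distributions, then inducting over reachable states.
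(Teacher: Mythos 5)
Your proposal is correct, and it is essentially the argument the paper intends --- though you should know the paper never actually writes a proof for this proposition: it is stated informally (``indicates the optimality invariance'') and left to follow from the definition of the PSM, with only the surrounding prose (the left/right flip example) as justification. Your formalization is the natural one: pinning down ``optimality invariance at $x$'' as the vanishing of $\dist\big(\pi^*_\gX(x_t), \pi^*_{\psi(\gX)}(\psi(x)_t)\big)$ along matched optimal trajectories, and then unrolling the deterministic recursion to exhibit $d^*(x,\psi(x))$ as the discounted sum of exactly these terms, is precisely the reading the authors rely on implicitly. The quantitative half --- invoking \theoref{thm:mainTheorem} with $\gY = \psi(\gX)$ and the observation that $d^*(\tilde{x}_{\psi(x)}, \psi(x)) \leq d^*(x,\psi(x))$ --- goes beyond anything in the paper and is a genuine strengthening, turning ``indicates'' into an explicit suboptimality bound; the only caveat is that the bound controls the nearest-neighbour transfer policy rather than the literal reuse of $\pi^*_\gX(x)$ at $\psi(x)$, which is the policy data augmentation implicitly trains, so the two notions coincide only up to that substitution. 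Your handling of the stochastic case via the $\gW_1$ support argument is also sound and again more careful than the paper.
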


\vspace{-0.65cm}
\revisionNew{
\subsection{PSM with Approximately-Optimal Policies}
{\bf Generalized Policy Similarity Metric for arbitrary policies}.
For a given $\dist$, we define a generalized PSM $d: (\gS \times \Pi) \times (\gS \times \Pi) \rightarrow \mathbb{R}$ where $\Pi$ is the set of all policies over $\gS$.
$d$ satisfies the recursive equation:
\begin{equation}
    d\big((x, \pi_1), (y, \pi_2)\big) = \dist\big(\pi_1(x),  \pi_2(y)\big) +  \gamma \gW_1(d)\big(P^{\pi_1}(\cdot \, | \, x), P^{\pi_2}(\cdot \, | \, y)\big).\label{eq:generalized_psm}
\end{equation}
Since $\dist$ is assumed to be a pseudometric and $\gW_1$ is a probability metric, it implies that $d$ is a pseudometric as 
(1) $d$ is non-negative, that is, $d\big((x, \pi_1), (y, \pi_2)\big) \geq 0$, (2) $d$ is symmetric, that is, $ d\big((x, \pi_1), (y, \pi_2)\big) = d\big((x, \pi_1), (y, \pi_2)\big)$, and $d$ satisfies 
the triangle inequality, that is, $ d\big((x, \pi_1), (y, \pi_2)\big) < d\big((x, \pi_1), (z, \pi_3)\big) + d\big((z, \pi_3), (y, \pi_2)\big)$.\newline \newline
Using this notion of generalized PSM, we show that the approximation error in PSM from using a suboptimal policy is bounded by the policy’s suboptimality. Thus, for policies with decreasing suboptimality, the PSM approximation becomes more accurate, resulting in improved PSEs.
\begin{restatable}{proposition}{mainProposition}
[Approximation error in PSM]\label{prop:mainProposition} Let $\hat{d}:\gS\times \gS\rightarrow\mathbb{R}$ be the approximate PSM computed using a suboptimal policy $\hat{\pi}$ defined over $\gS$, that is, 
$\hat{d}(x, y) = \dist\big(\hat{\pi}(x),  \hat{\pi}(y)\big) +  \gamma \gW_1(\hat{d})\big(P^{\hat{\pi}}
(\cdot \, | \, x), P^{\hat{\pi}}(\cdot \, | \, y)\big)$ . We have: 
$$ |d^*(x, y) - \hat{d}(x, y)| \ < \underbrace{d\big((x, \pi^*), (x, \hat{\pi})\big)}_{\substack{\normaltext{Long-term suboptimality}\\ \normaltext{difference\ from \ x}}} + \underbrace{d\big((y, \hat{\pi}), (y, \pi^*)\big)}_{\substack{\normaltext{Long-term suboptimality}\\ \normaltext{difference\ from \ y}}}.$$
\end{restatable}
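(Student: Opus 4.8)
The plan is to recognize that both the exact PSM $d^*$ and the approximate PSM $\hat{d}$ are nothing but restrictions of the single generalized pseudometric $d$ on $(\gS\times\Pi)^2$ to ``constant-policy'' slices, after which the claimed bound reduces to the triangle inequality for $d$.

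First I would establish the two identifications $d^*(x,y) = d\big((x,\pi^*),(y,\pi^*)\big)$ and $\hat{d}(x,y) = d\big((x,\hat{\pi}),(y,\hat{\pi})\big)$. To do this, fix a single policy $\pi$ in both coordinates and consider the map $(x,y)\mapsto d\big((x,\pi),(y,\pi)\big)$. Reading off the defining recursion \eqref{eq:generalized_psm} with $\pi_1=\pi_2=\pi$, this map satisfies exactly the PSM recursion grounded at $\pi$, because the Wasserstein term couples the successor distributions $P^{\pi}(\cdot\,|\,x)$ and $P^{\pi}(\cdot\,|\,y)$ while carrying the common policy label $\pi$ on both sides. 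Since \propref{thm:fixedpoint} guarantees that this recursion has a unique fixed point, the restricted map must coincide with the PSM for $\pi$; specializing to $\pi=\pi^*$ and to $\pi=\hat{\pi}$ yields the two identities.

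Next I would invoke that $d$ is a pseudometric on $\gS\times\Pi$, in particular symmetric and satisfying the triangle inequality (as established in the paragraph preceding the statement). Writing $p=(x,\pi^*)$, $p'=(x,\hat{\pi})$, $q=(y,\pi^*)$, $q'=(y,\hat{\pi})$, two applications of the triangle inequality give $d(p,q)\le d(p,p')+d(p',q')+d(q',q)$, hence $d(p,q)-d(p',q')\le d(p,p')+d(q',q)$. By symmetry of $d$, the same argument started from $d(p',q')$ gives $d(p',q')-d(p,q)\le d(p,p')+d(q',q)$. Combining these and using the identifications of the first step produces
$$|d^*(x,y)-\hat{d}(x,y)|\ \le\ d\big((x,\pi^*),(x,\hat{\pi})\big)+d\big((y,\hat{\pi}),(y,\pi^*)\big),$$
which is the desired bound; the strict inequality holds under the natural condition that the two grounding policies genuinely differ on the relevant states.

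The main obstacle I anticipate is the first step: justifying rigorously that the generalized recursion, with both policy slots pinned to the same policy, reduces to the ordinary PSM recursion and therefore—via uniqueness of the fixed point—recovers $d^*$ and $\hat{d}$ exactly. This hinges on interpreting the Wasserstein term $\gW_1(d)\big(P^{\pi}(\cdot\,|\,x),P^{\pi}(\cdot\,|\,y)\big)$ as transporting mass between successor states while carrying along the fixed label $\pi$, so that the base metric seen by the transport plan is precisely $(x',y')\mapsto d\big((x',\pi),(y',\pi)\big)$; once this bookkeeping is made explicit, the contraction and uniqueness argument of \propref{thm:fixedpoint} closes the gap. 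The remaining steps are routine metric-space manipulations.
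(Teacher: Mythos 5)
Your proposal is correct and follows essentially the same route as the paper's proof: identify $d^*$ and $\hat{d}$ as the constant-policy slices of the generalized PSM of \eqref{eq:generalized_psm}, then apply the triangle inequality twice. If anything you are more careful than the paper---you justify the slice identification via uniqueness of the fixed point rather than asserting it, and you correctly obtain a non-strict $\le$ (the paper's use of strict $<$ in the triangle inequality for a pseudometric is not actually warranted without further assumptions).
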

\begin{proof}
  The PSM $d^*$ and approximate PSM $\hat{d}$ are instantiations of the generalized PSM~(\eqref{eq:generalized_psm}) with both input policies as $\pi^*$ and $\hat{\pi}$ respectively.
  \small
  \begin{align*}
  d^*(x, y) = d\left((x, \pi^*), (y, \pi^*)\right) &< \ d\big((x, \pi^*), (x, \hat{\pi})\big) + d\big((x, \hat{\pi}), (y, \pi^*)\big) \\
  &< \ d\big((x, \pi^*), (x, \hat{\pi})\big) + d\big((y, \hat{\pi}), (y, \pi^*)\big) + d\big((x, \hat{\pi}), (y, \hat{\pi})\big) \\
  d^*(x, y) - \hat{d}(x, y) &< d\big((x, \pi^*), (x, \hat{\pi})\big) + d\big((y, \hat{\pi}), (y, \pi^*)\big) \quad \because \hat{d}(x, y) = d\big((x, \hat{\pi}), (y, \hat{\pi})\big) \\
  \text{Similarly}, \ \hat{d}(x, y) - d^*(x, y)  &< d\big((x, \pi^*), (x, \hat{\pi})\big) + d\big((y, \hat{\pi}), (y, \pi^*)\big)
  \end{align*}
\end{proof}
}
\vspace{-1.0cm}

\section{L2 Metric Embeddings}\label{sec:l2_metric}
\vspace{-0.1cm}

Another common choice~\citep{zhang2020learning} for learning metric embeddings is to use the squared loss~(\ie\ $l2$-loss) for matching the euclidean distance between the representations of a pair of states to the metric distance between those states. Concretely, for a given $d^{*}$ and representation $f_\theta$, the loss $\gL(\theta) = \expected_{s_i, s_j} [ (\|f_{\theta}(s_i) - f_{\theta}(s_j)\|_2 - d^{*}(s_i, s_j))^2]$ is minimized. However, it might be too restrictive to match the exact metric distances, which we demonstrate empirically by comparing $l2$ metric embeddings with CMEs~(\secref{sec:jumping_abl_viz}).

\vspace{-0.1cm}
\section{Extended Related Work}\label{app:related}
\vspace{-0.1cm}



Generalization across different tasks used to be described as \emph{transfer learning}. In the past, most transfer learning approaches relied on fixed representations and tackled different problem formulations (e.g., assuming shared state space). \citet{Taylor09} present a comprehensive survey of the techniques at the time, before representation learning became so prevalent in RL. Recently, the problem of performing well in a different, but related task, started to be seen as a problem of \emph{generalization}; with the community highlighting that deep RL agents tend to overfit to the environments they are trained on~\citep{Cobbe19a, Witty18, Farebrother18, Juliani19, Kostrikov20, song2019observational, justesen2018illuminating, packer2018assessing}.  

Prior generalization approaches are typically adapted from supervised learning, including regularization~\citep{Cobbe19a,Farebrother18}, stochasticity~\citep{Zhang18b}, noise injection~\citep{igl2019generalization, Zhang18a}, more diverse training conditions~\citep{Rajeswaran17,Witty18} and self-attention architectures~\citep{tang2020neuroevolution}. In contrast, PSEs exploits behavior similarity~(\secref{sec:policy_sim}), a property related to the sequential aspect of RL.

Meta-learning is also related to generalization. Meta-learning methods try to find a parametrization that requires a small number of gradient steps to achieve good performance on a new task~\citep{Finn17}. In this context, various meta-learning approaches capable of zero-shot generalization have been proposed~\citep{Li18, agarwal2019learning, balaji2018metareg}. These approaches typically consist in minimizing the loss in the environments the agent is while adding an auxiliary loss for ensuring improvement in the other (validation) environments available to the agent. Nevertheless, \citet{Combes18} has shown that meta-learning approaches fail in the jumping task which we also observed empirically. Others have also reported similar findings~\cite[e.g.,][]{Farebrother18}.

There are several other approaches for tackling zero-shot generalization in RL, but they often rely on domain-specific information. Some examples include knowledge about equivalences between entities in the environment~\citep{Oh17} and about what is under the agent's control~\citep{Ye20}. Causality-based methods are a different way of tackling generalization, but current solutions do not scale to high-dimensional observation spaces~\cite[e.g.,][]{Killian17, Perez20, zhang2020invariant}.

\vspace{-0.1cm}
\section{Jumping Task with Pixels}\label{app:psm_vs_bisim}
\vspace{-0.1cm}

\begin{figure}[H]
\centering
\vspace{-0.1cm}
\begin{subfigure}{0.98\linewidth}
\includegraphics[width=\linewidth]{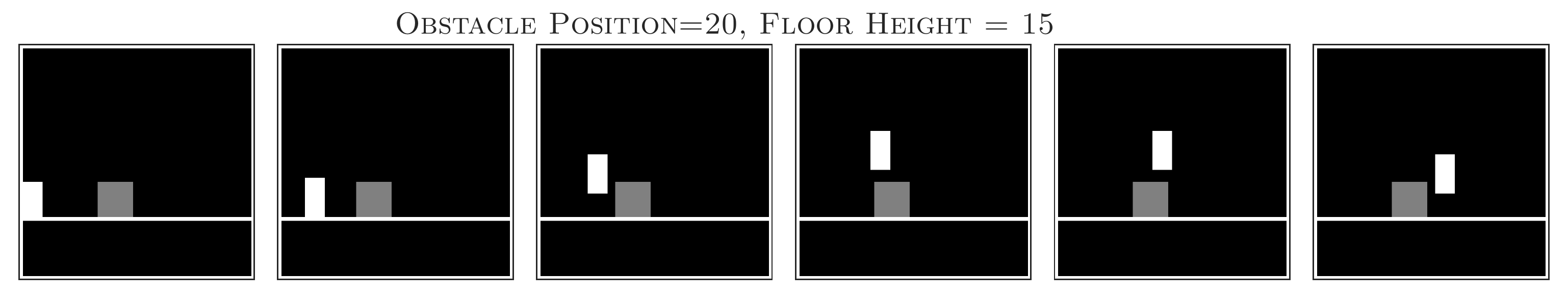}
\end{subfigure}
\begin{subfigure}{.98\linewidth}
\includegraphics[width=\linewidth]{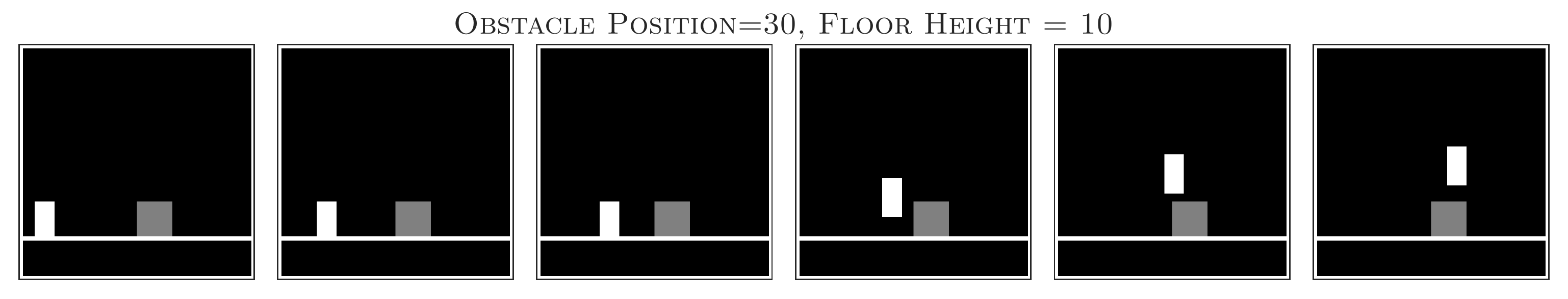}
\end{subfigure}
\caption{Optimal trajectories on the jumping tasks for two different environments. Note that the optimal trajectory is a sequence of \emph{right} actions, followed by a single \emph{jump} at a certain distance
from the obstacle, followed by \emph{right} actions.}
\end{figure}

\textbf{Detailed Task Description}. The jumping task consists of an agent trying to jump over an obstacle on a floor. The environment is deterministic with the agent observing a reward of $+1$ at each time step. If the agent successfully reaches the rightmost side of the screen, it receives a bonus reward of $+100$; if the agent touches the obstacle, the episode terminates. The observation space is the pixel representation of the environment, as depicted in \figref{fig:intro_jumping}. The agent has access to two actions: \emph{right} and \emph{jump}. The \emph{jump} action moves the agent vertically and horizontally to the right. 

\begin{figure}[t]
\vspace{-0.5cm}
\centering
\begin{subfigure}{0.45\textwidth}
\includegraphics[width=\columnwidth]{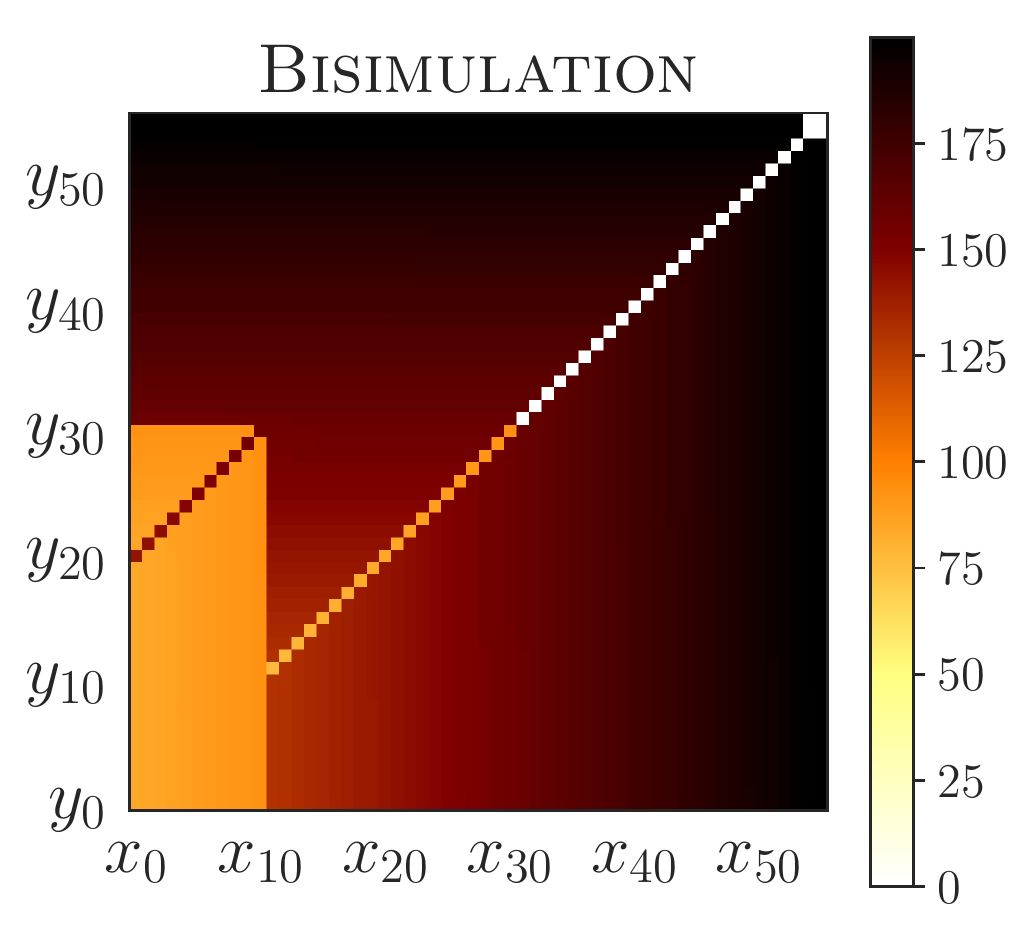}
\caption{Bisimulation metric}
\label{fig:bisim}
\end{subfigure}
\begin{subfigure}{.45\textwidth}
\includegraphics[width=\columnwidth]{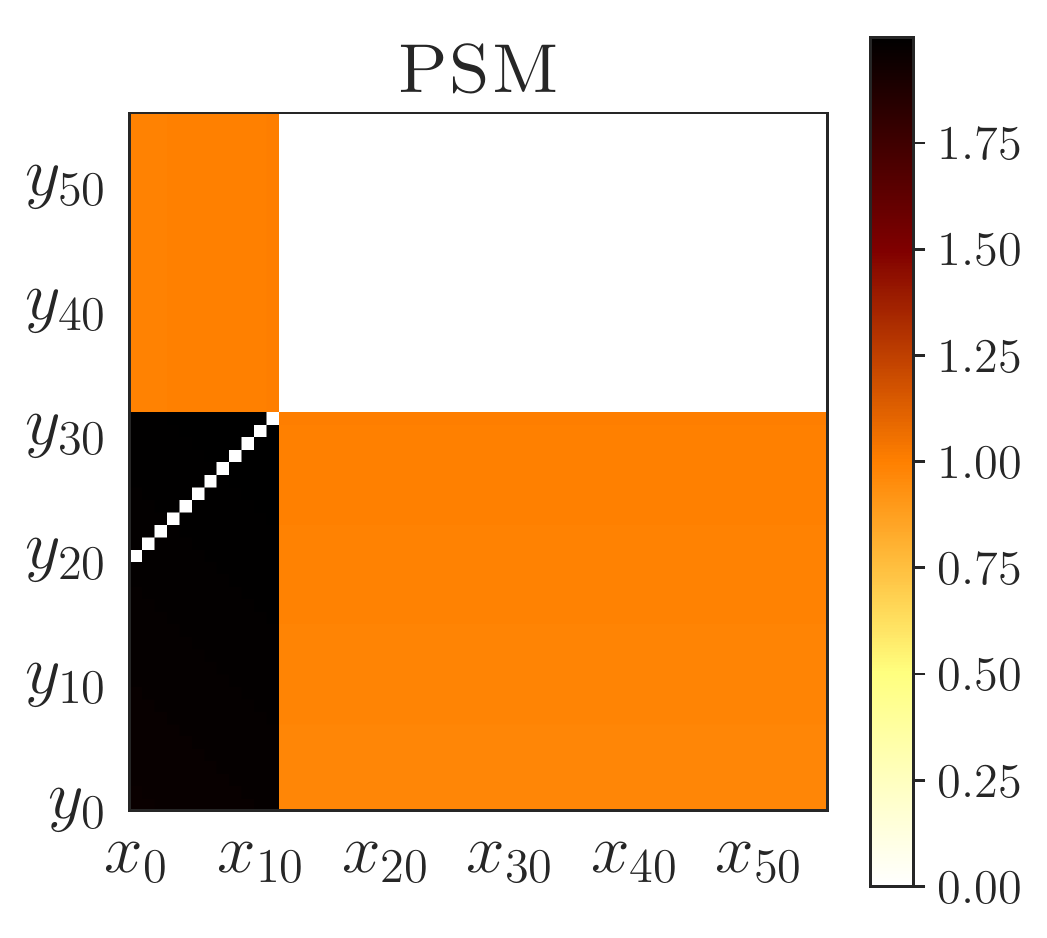}
\caption{Policy Similarity metric}
\label{fig:psm}
\end{subfigure}
\begin{subfigure}{0.45\textwidth}
\includegraphics[width=\columnwidth]{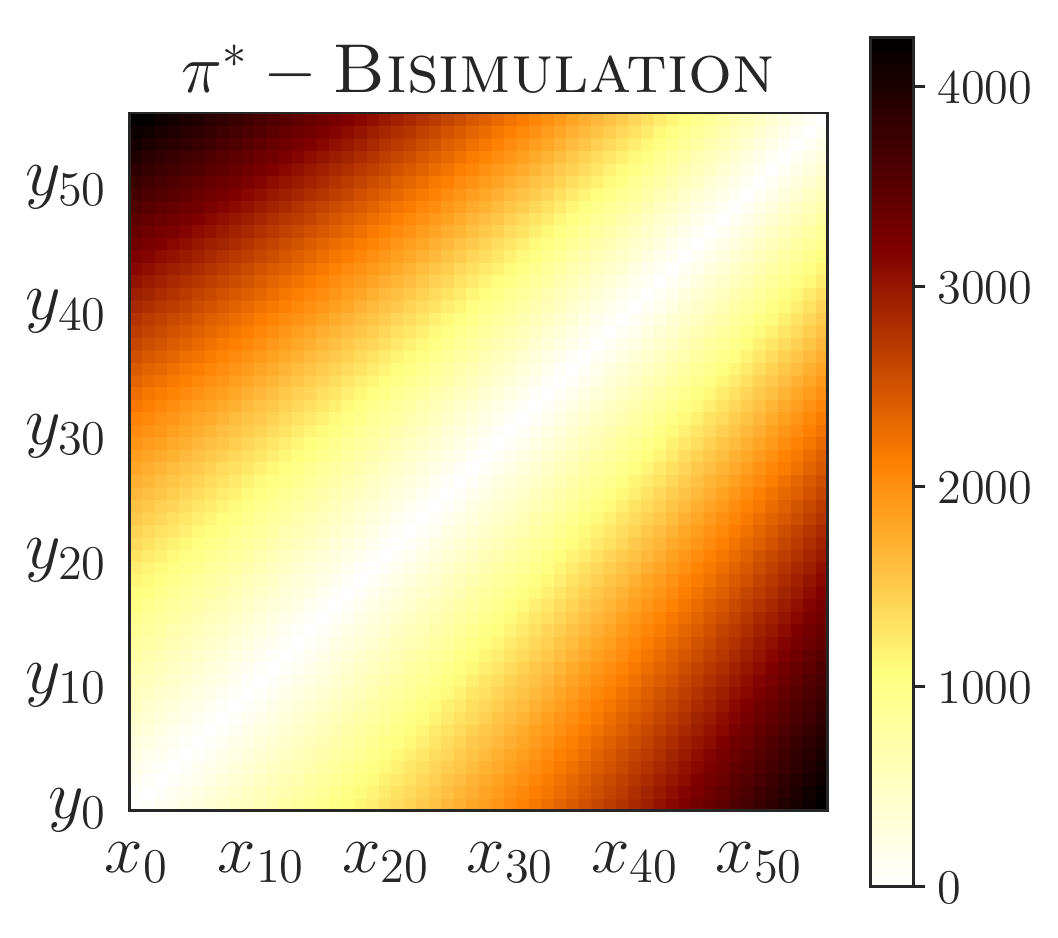}
\caption{$\pi^*$-Bisimulation metric}
\label{fig:pi_bisim_app}
\end{subfigure}
\begin{subfigure}{.45\textwidth}
\includegraphics[width=\columnwidth]{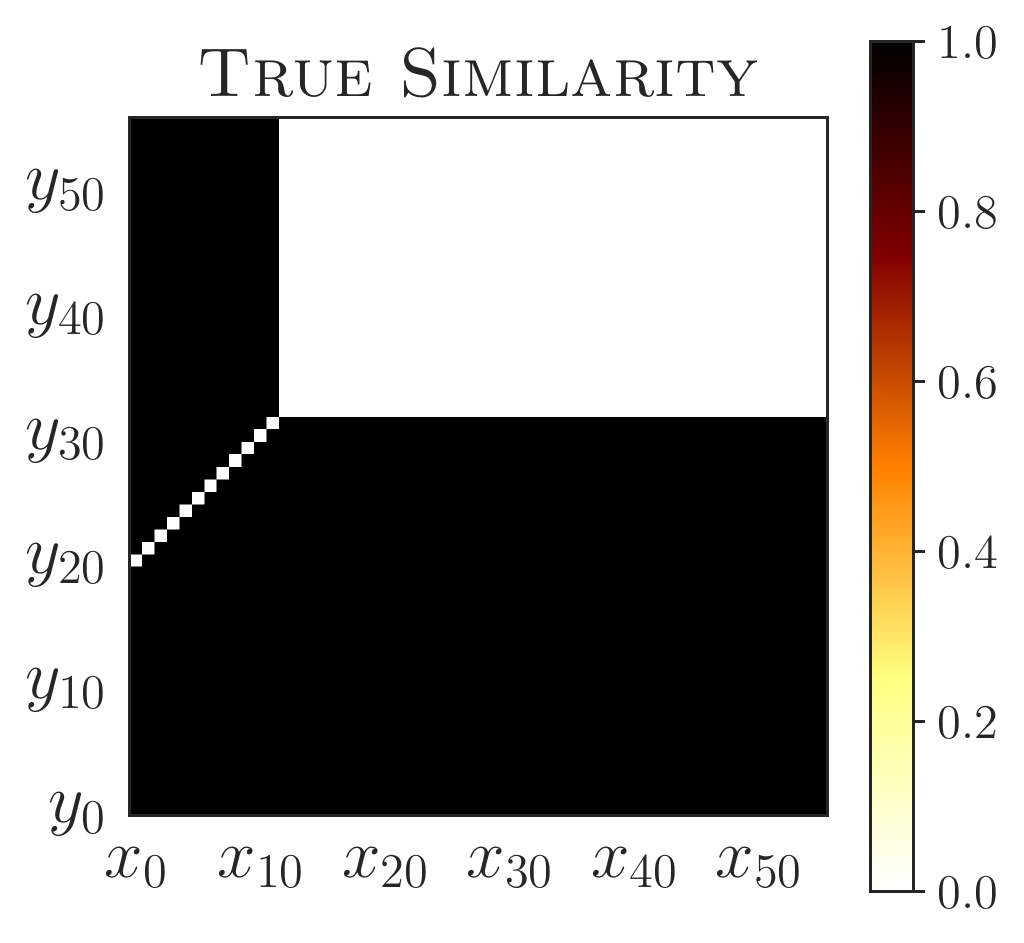}
\caption{True State Similarity}
\label{fig:true_sim}
\end{subfigure}
  \caption{\textbf{PSM \emph{vs}. Bisimulation}. Visualizing PSM and bisimulation metrics with discount factor $\gamma=0.99$. $x_i$ and $y_i$ correspond to the states visited by optimal policies in the two environments with obstacle at positions 25 and 45, respectively. For each grid, the $(i, j)^{th}$ location shows the distance assigned by the metric to the states $x_i$ and $y_j$. Lower distances~(lighter shades) imply higher similarity between two states. The bottom right figure shows which states are equivalent to each other. The ranges
  are different for each metric as bisimulation metrics utilize reward differences while PSM uses the total variation~($TV$) distance between policies. Note that the large bisimulation distances
  are due to the fact that the reward at the terminal state is set to 100.}
  \label{fig:policy_bisim_viz}
  \vspace{-0.6cm}
\end{figure}



\textbf{Architecture}. The neural network used for Jumping Task experiment is adapted from the Nature DQN architecture. Specifically, the network consists of 3 convolutional layers of sizes 32, 64, 64 with filter sizes $8 \times 8$, $4 \times 4$ and $3 \times 3$ and strides 4, 2, and 1, respectively. The output of the convnet is fed into a single fully connected layer of size 256 followed by 'ReLU' non-linearity. Finally,
this FC layer output is fed into a linear layer which computes the policy which outputs the probability of the \emph{jump} and \emph{right} actions.

\textbf{Contrastive Embedding}. For all our experiments, we use a single ReLU layer with $k=64$ units for the non-linear projection to obtain the embedding $z_\theta$~(\figref{fig:arch_cmes}). We compute the embedding using the penultimate layer in the jumping task network. Hyperparameters are reported in \tabref{jumping:hparams}.

\textbf{Total Loss}. For jumpy world, the total loss is given by 
$\mathscr{L}_\textrm{IL} + \alpha \mathscr{L}_{\mathrm{CME}}$ where  $\mathscr{L}_\textrm{IL}$ is the imitation learning loss,
$\mathscr{L}_{\mathrm{CME}}$ is the auxiliary loss for learning PSEs with the coefficient
$\alpha$.



\begin{figure}[H]
  \vspace{-0.4cm}
  \centering
  \includegraphics[width=.33\textwidth]{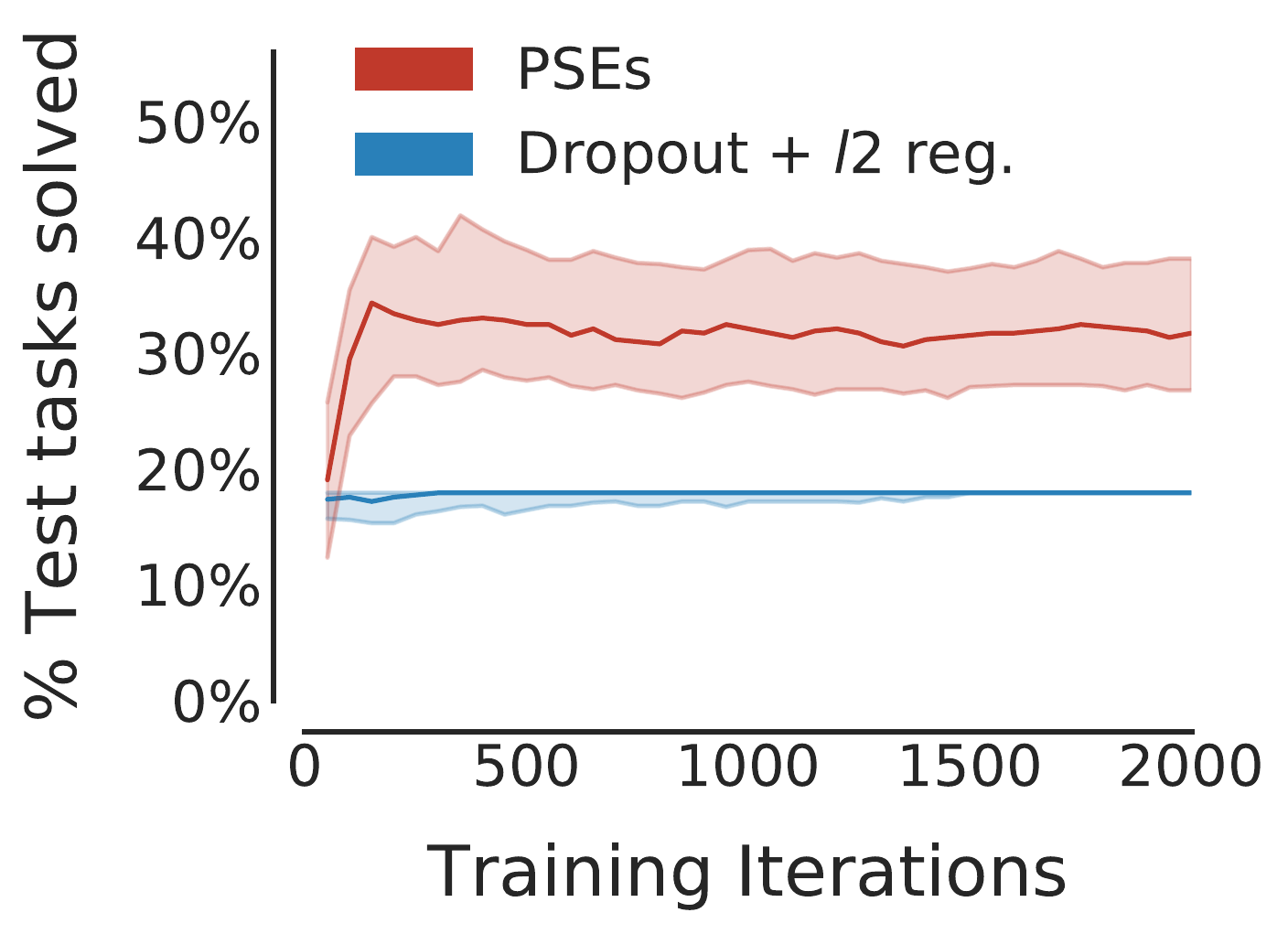}
  \includegraphics[width=.33\textwidth]{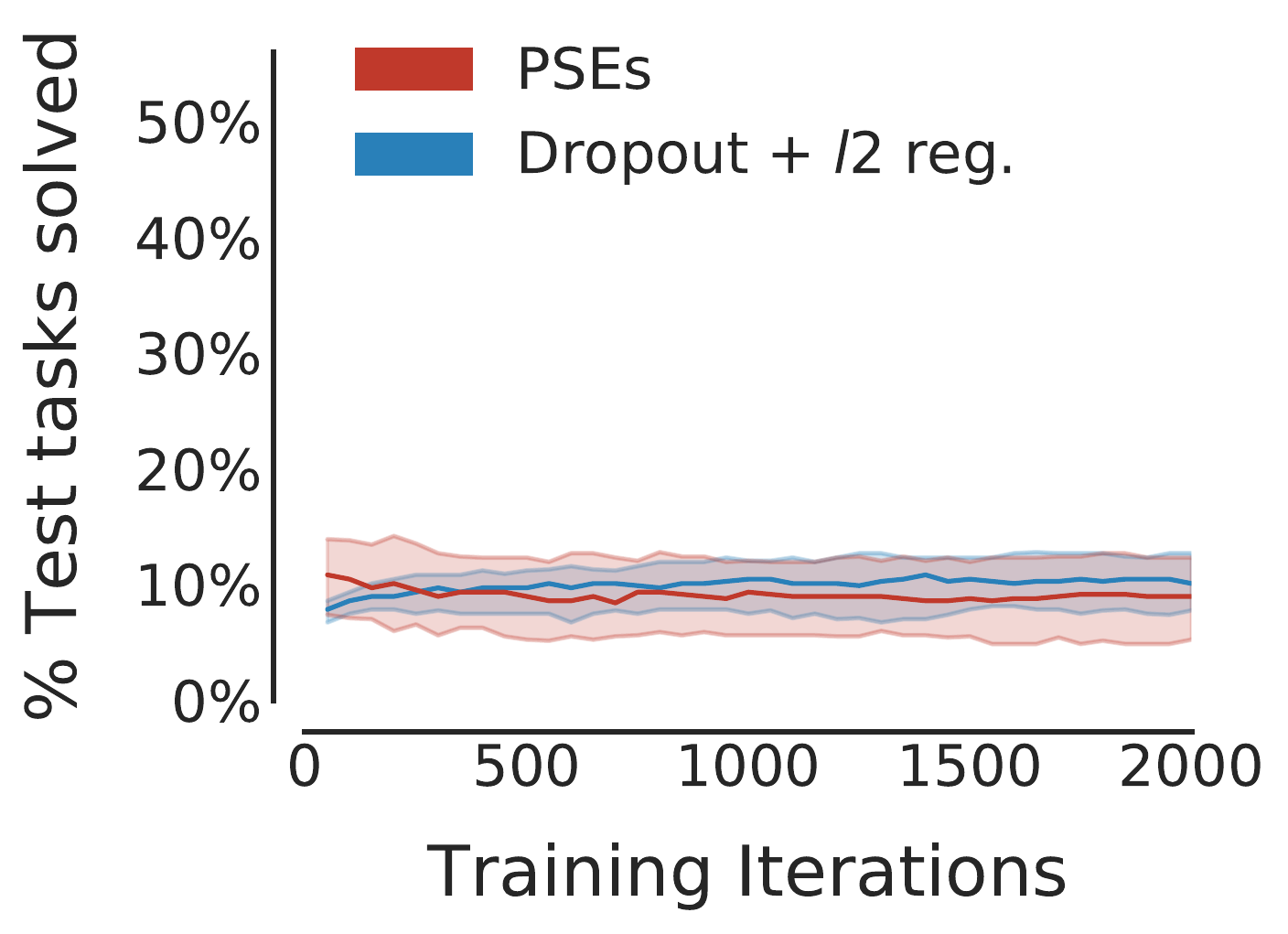}
  \includegraphics[width=.33\textwidth]{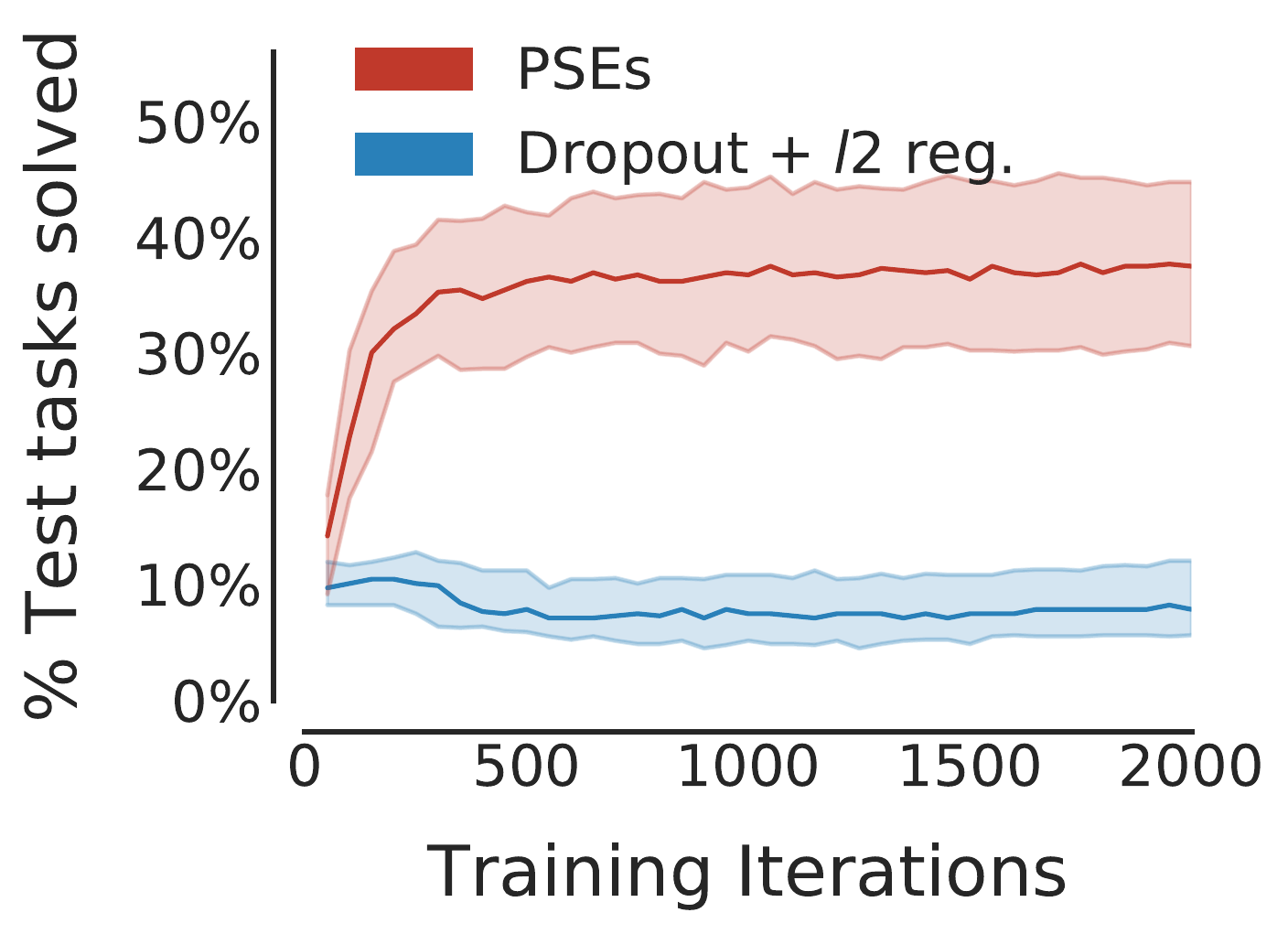}
  \caption{  Test performance curves in the setting \textbf{without data augmentation}  on the ``wide'', ``narrow'', and random grids described in Figure~\ref{fig:jumping_task}. We plot the median performance across 100 runs. Shaded regions show 25 and 75 percentiles.}\label{fig:pse_vs_standard_reg}
\end{figure}

\begin{figure}[H]
  \centering
  \includegraphics[width=.33\textwidth]{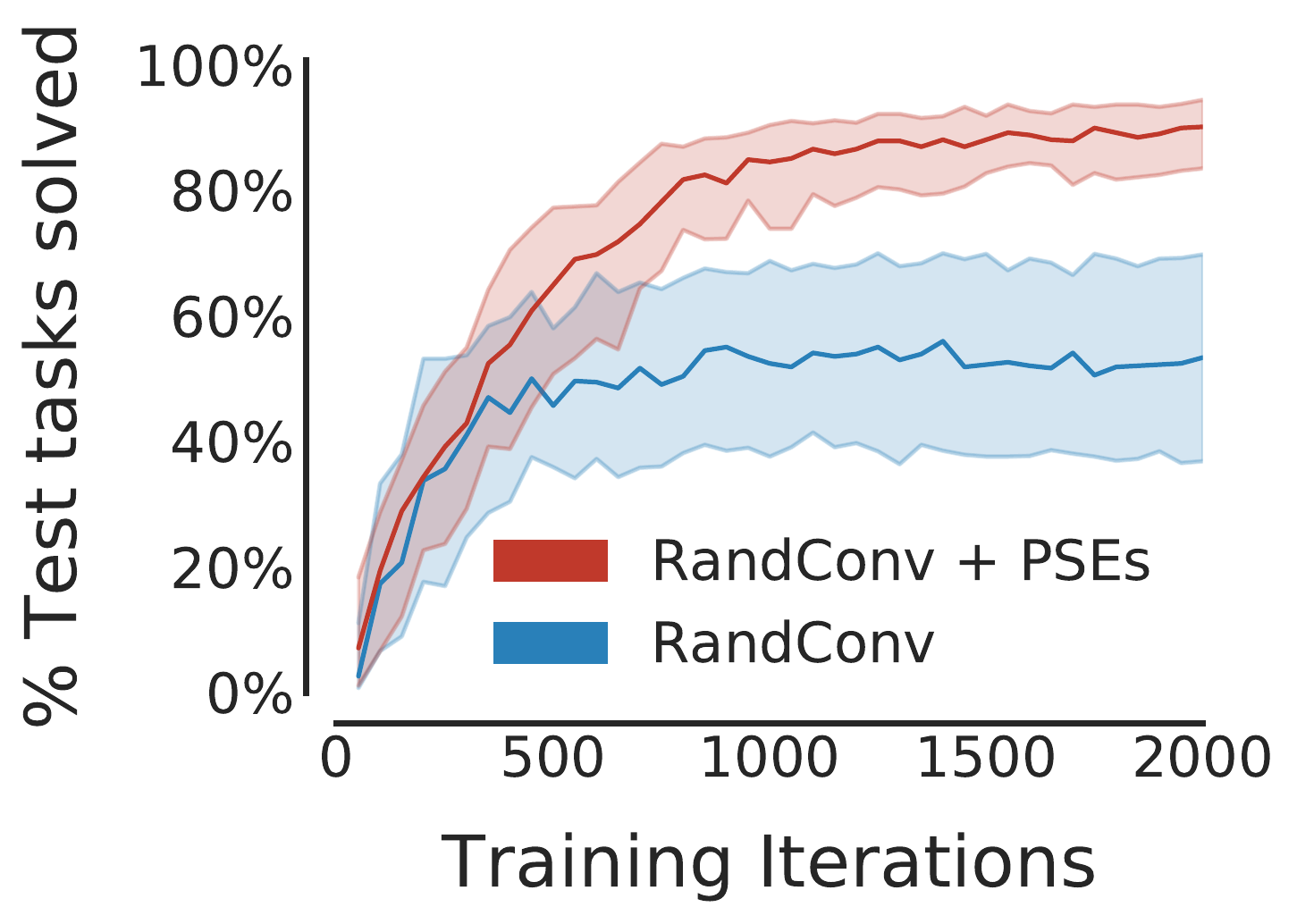}
  \includegraphics[width=.33\textwidth]{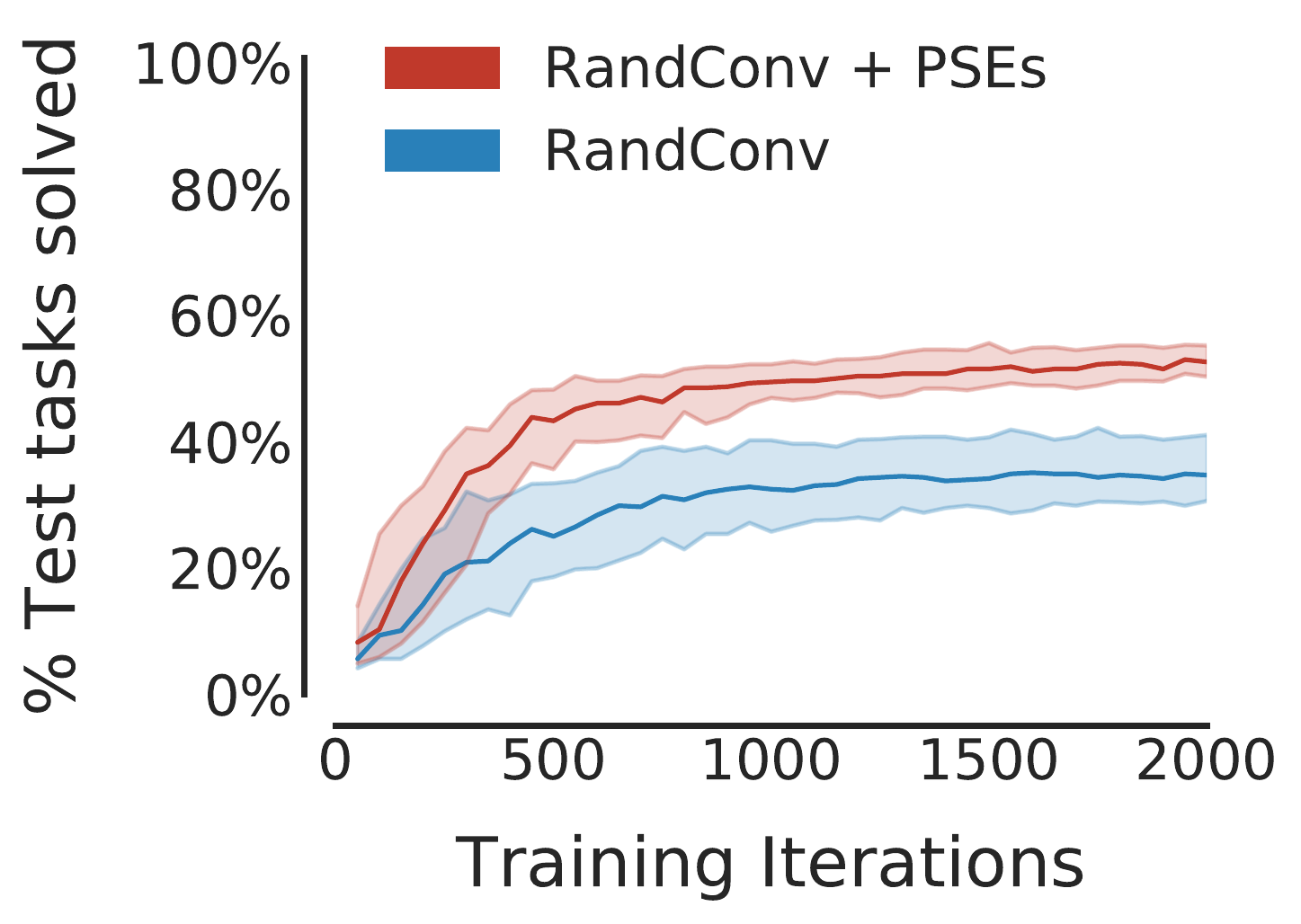}
  \includegraphics[width=.33\textwidth]{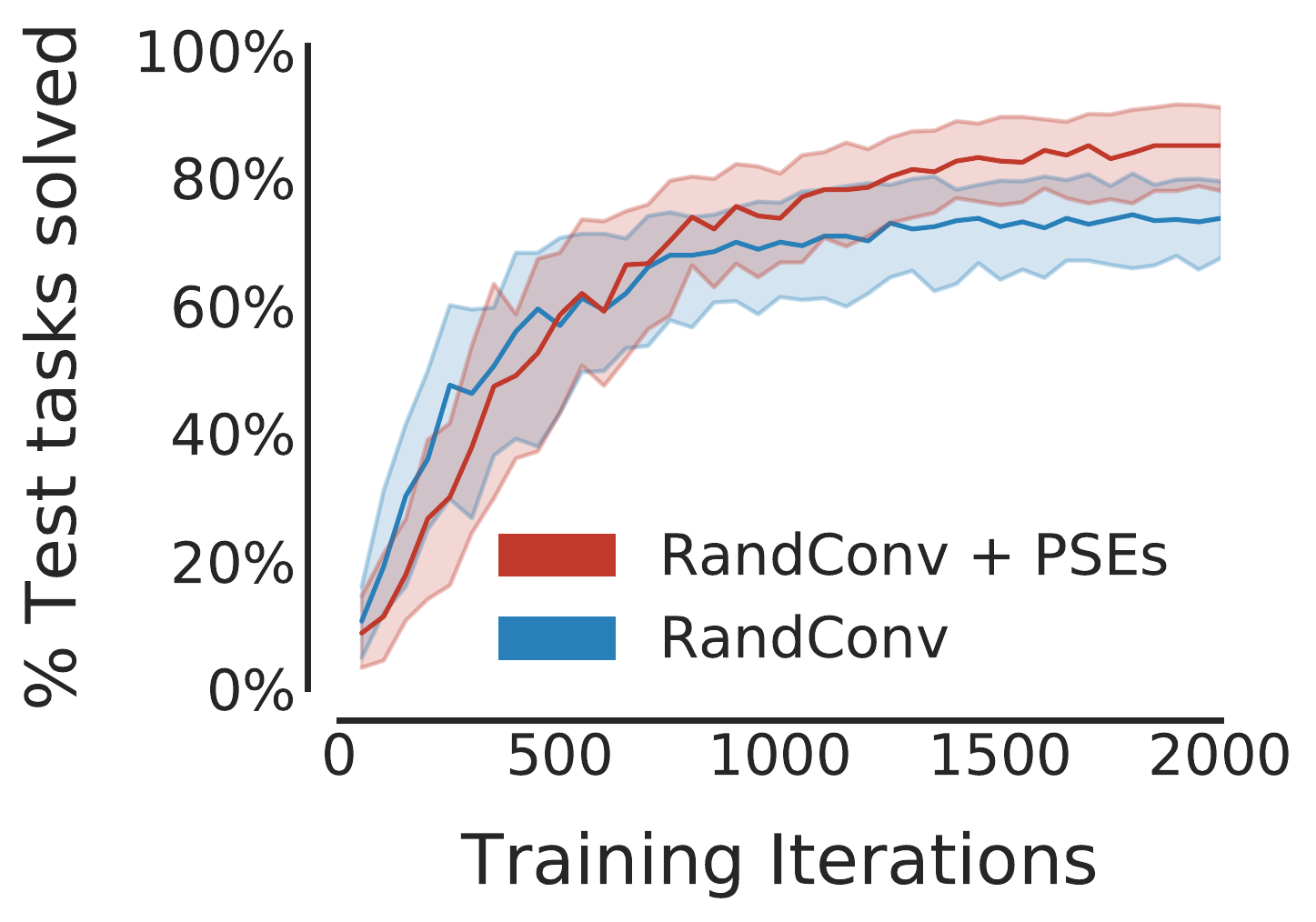}
  \caption{Test performance curves in the setting \textbf{with data augmentation} on the ``wide'', ``narrow'', and random grids described in \figref{fig:jumping_task}. We plot the median performance across 100 runs. Shaded regions show 25 and 75 percentiles.}\label{fig:pse_vs_randconv}
  \vspace{-0.4cm}
\end{figure}

\subsection{Jumping Task from Colors}

\begin{figure}[H]
    \centering
    \begin{minipage}{0.4\textwidth}
      \centering
      \begin{subfigure}{.45\textwidth}
      \includegraphics[width=\columnwidth]{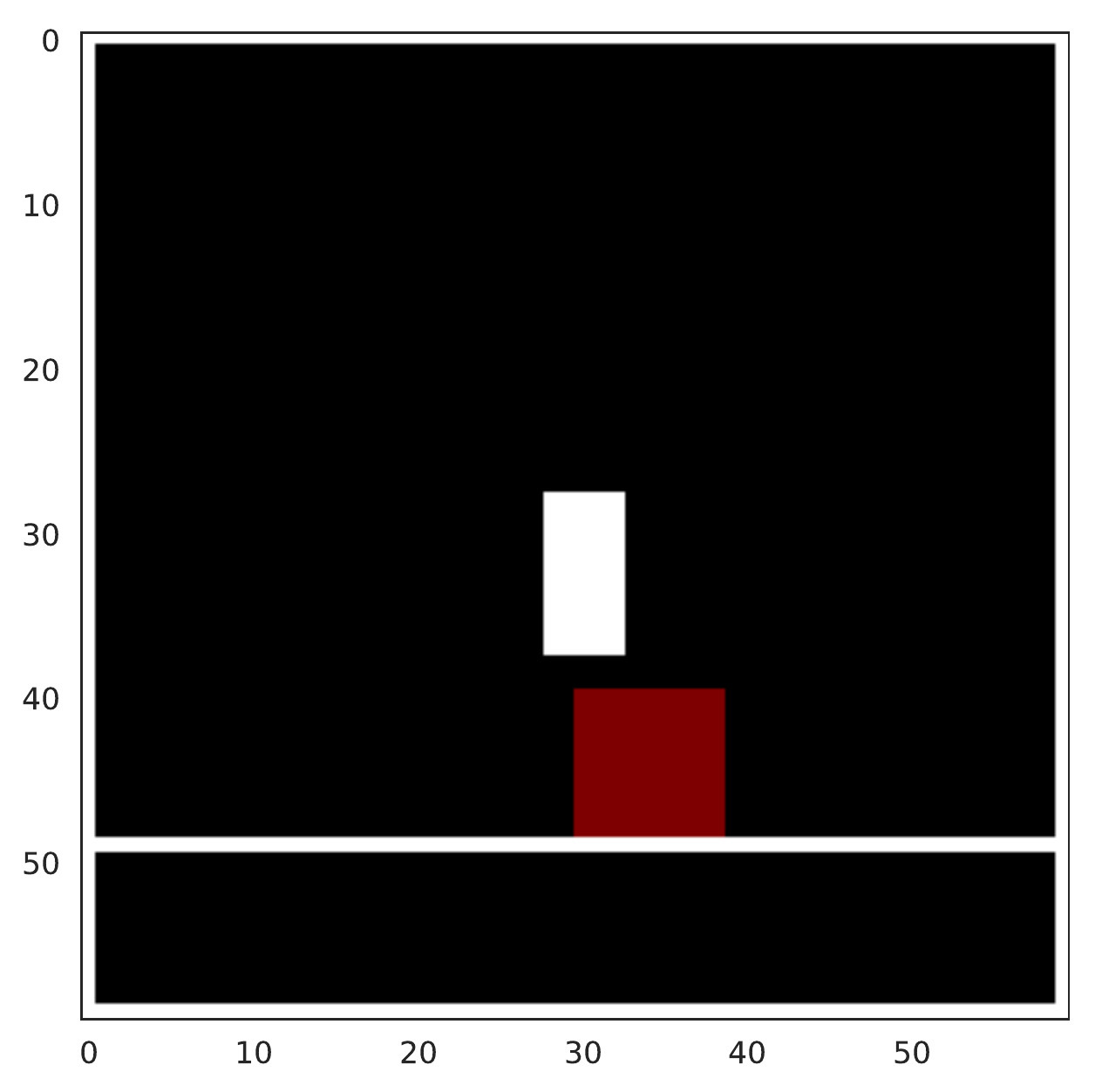}
      \end{subfigure}
      \begin{subfigure}{.45\textwidth}
      \includegraphics[width=\columnwidth]{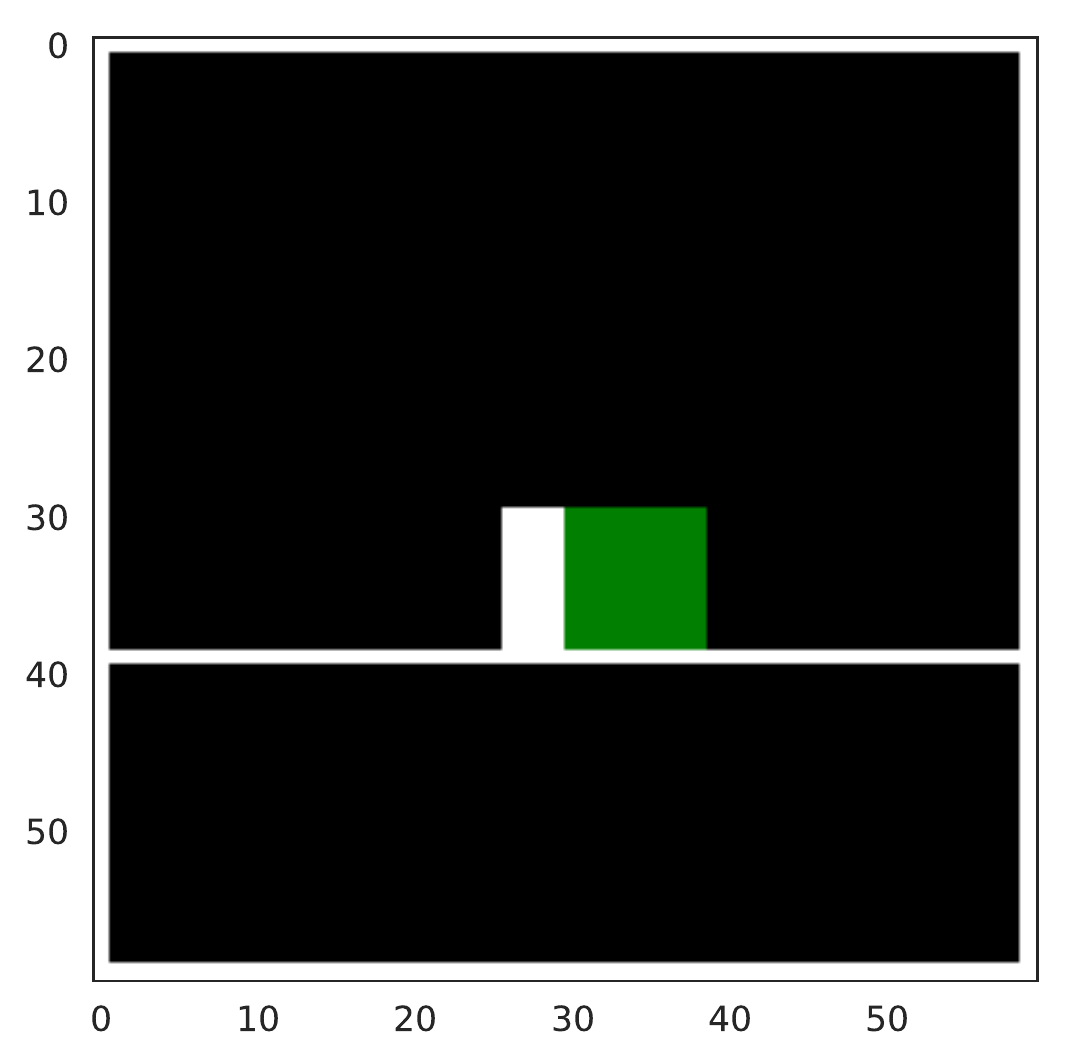}
    \end{subfigure}
    \caption{\textbf{Jumping task with colored obstacles}. The agent needs to jump over the red obstacle but strike the green obstacle.}\label{fig:jumping_color}
    \end{minipage}\hfill
    \begin{minipage}{0.55\textwidth}
        \centering
    \captionof{table}{Percentage~(\%) of test tasks solved when trained on the ``wide'' grid
    with both red and green obstacles. The numbers we report are averaged across 100 runs. Standard error is reported between parentheses.}\label{tab:jumping_task_color}
    \begin{tabular}{l l@{}c l@{}c}
    \toprule
     Method       & \multicolumn{2}{c}{Red~(\%)}             & \multicolumn{2}{c}{Green~(\%)} \\ 
     \midrule
     RandConv             & ~6.2  & (0.4)   & 99.6 & (0.2)  \\ 
     Dropout and $l_2$ reg.     & 19.5 & (0.2)    & 100.0 & ~(0.0)  \\ 
     RandConv + PSEs          & 29.8 & ~(1.3)    & 99.6 & (0.2)  \\ 
     PSEs                       & \textbf{37.9} & ~(1.9) & 100.0 & ~(0.0) \\
     \bottomrule
    \end{tabular}
    \end{minipage}
    \vspace{-0.1cm}
\end{figure}

\begin{figure}[H]
\centering
\includegraphics[width=\columnwidth]{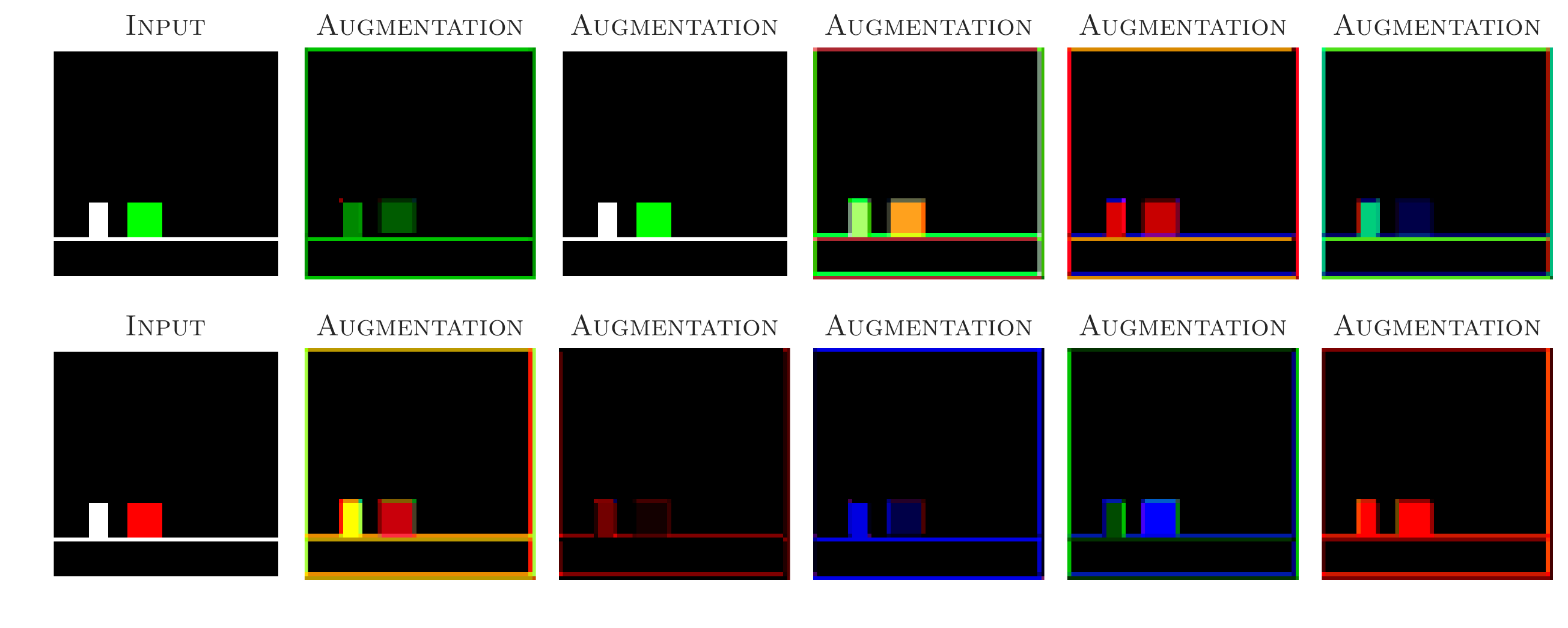}
\vspace{-0.4cm}
\caption{\textbf{Randconv enforces color invariance}. The first column shows the original observation where the top row corresponds to
the observation from task with green obstacle jumping task while the bottom row corresponds to the red obstacle jumping task.
Columns 2-6 in each row show 5 augmentations by applying RandConv. The augmentations show that RandConv tries to encode invariance with respect to the obstacle color.}
\label{fig:rand_color_invariance}
\end{figure}

\subsection{Hyperparameters}\label{jumping:hparams}

For hyperparameter selection, we evaluate all agents on a validation set containing 54 unseen tasks in the ``wide'' grid and pick the parameters with the best validation performance. The validation set~(\figref{fig:val_tasks}) was selected by using the environments nearby to the training environments whose floor height differ by 1 or whose obstacle position differ by 1.

\begin{figure}[H]
    \centering
    \begin{minipage}{0.4\textwidth}
      \centering
      \captionof{table}{Common hyperparameters across all methods for all jumping task experiments.}
      \begin{tabular}{ lc }
      \toprule
       {Hyperparameter} & {Value} \\ \midrule
       Learning rate decay & 0.999 \\
       Training epochs & 2000 \\
       Optimizer & {Adam} \\
       Batch size~(Imitation) & 256 \\ 
       Num training tasks & 18 \\
       $\Gamma$-scale Parameter~($\beta$) & 0.01 \\
       Embedding size~($k$) & 64 \\
       Batch Size ($\mathscr{L}_\mathrm{CME}$) & 57 \\
       $|\tau_\gX^*|$ & 57 \\
       \bottomrule
      \end{tabular}
    \end{minipage}\hfill
    \begin{minipage}{0.55\textwidth}
        \centering
        \includegraphics[width=\columnwidth]{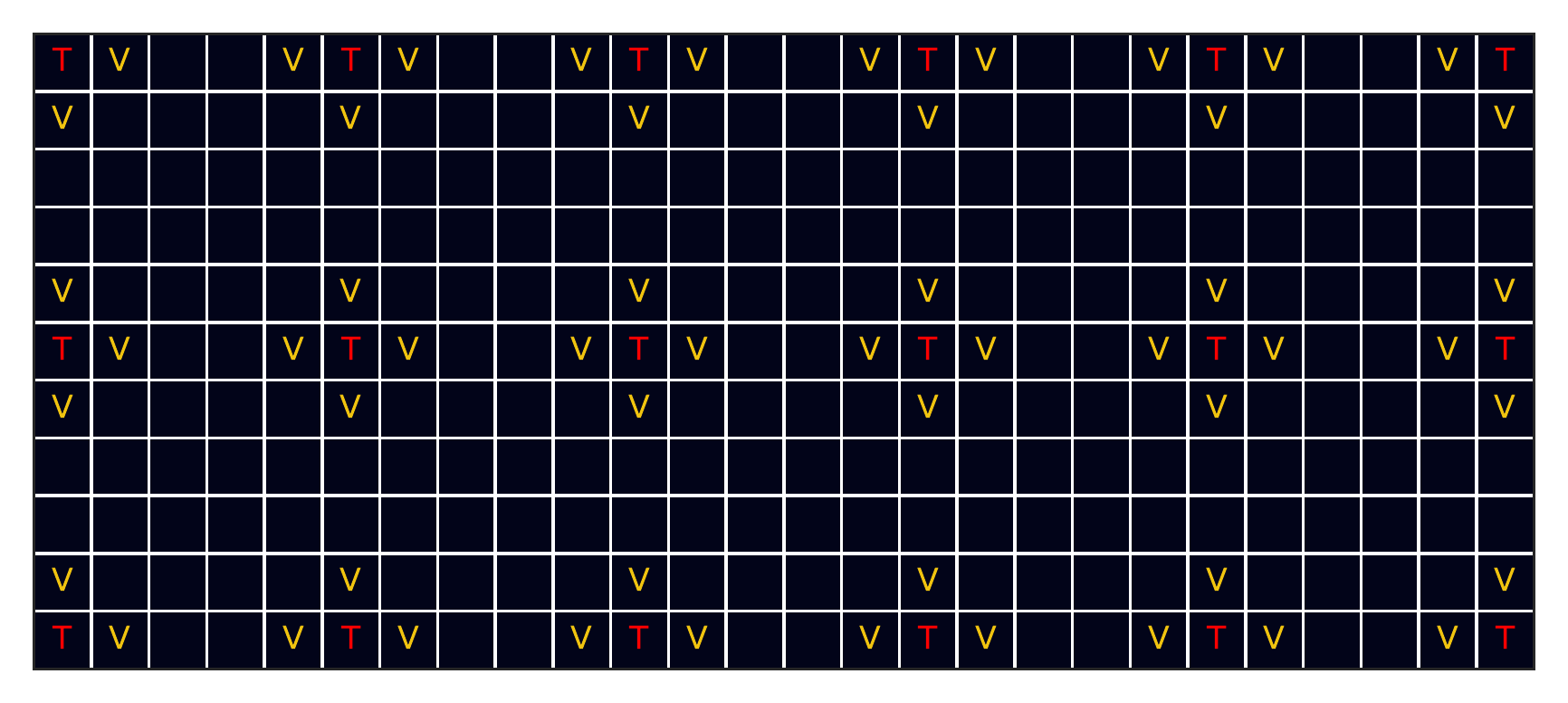}
        \caption{Unseen 56 validation tasks, labeled with \textcolor{red}{V}, used for hyperparameter selection.}\label{fig:val_tasks}
    \end{minipage}
\end{figure}

\begin{table}[H]
 \caption{Optimal hyperparameters for reporting results in \tabref{tab:jumping_task_no_data_aug}. These hyperparameters are selected 
 using the ``wide'' grid by maximizing final performance on a validation set containing 56 unseen tasks. All grid configurations in \tabref{tab:jumping_task_no_data_aug} use these hyperparameters.
 }\label{table:jumpy_hparams}
  \centering
  \begin{tabular}{@{} l@{} *{4}{S@{}} @{}}
  \toprule
   Hyperparameter & {Dropout and $\ell_2$-reg.}  & {PSEs} &  {RandConv}  &  {RandConv + PSEs} \\\midrule
   Learning Rate  &  4e-3    & 3.2e-3 & 7e-3 & 2.6e-3 \\
   $\ell_2$-reg. coefficient & 4.3e-4 & 1e-5 & \NA & \NA \\
   Dropout coefficient & 3e-1 & \NA & \NA & \NA \\
   Contrastive Temperature~($1/\lambda$) & \NA & 1.0e0 & \NA & 5e-1 \\
   Auxiliary loss coefficient~($\alpha$) & \NA & 1e1 &  \NA & 5.0e0 \\
   \bottomrule
  \end{tabular}
\vspace{-0.1cm}
\end{table}

\begin{table}[H]
 \caption{Optimal hyperparameters for reporting results in \figref{sec:jumping_orthogonality}. These hyperparameters are selected 
 using the ``wide'' grid by maximizing final performance on a validation set 
 containing 56 unseen tasks.}\label{table:jumpy_ablation_hparams}
  \centering
  \begin{tabular}{@{} l@{} *{4}{S@{}} @{}}
  \toprule
   Hyperparameter & {Dropout and $\ell_2$-reg.}  & {PSEs} &  {RandConv}  &  {RandConv + PSEs} \\\midrule
   Learning Rate  &  4e-3    & 6e-3 & 5e-3 & 2.6e-3 \\
   $\ell_2$-reg. coefficient & 4.3e-4 & 7e-5 & \NA & \NA \\
   Dropout coefficient & 3e-1 & \NA & \NA & \NA \\
   Contrastive Temperature~($1/\lambda$) & \NA & 5e-1 & \NA & 5e-1 \\
   Auxiliary loss coefficient~($\alpha$) & \NA & 5.0e0 &  \NA & 5.0e0 \\
   \bottomrule
  \end{tabular}
\vspace{-0.1cm}
\end{table}

\begin{table}[H]
 \caption{Optimal hyperparameters for reporting ablation results in \tabref{tab:jumping_task_ablation}. These hyperparameters are selected using the ``wide'' grid by maximizing final performance on a validation set 
 containing 56 unseen tasks.}\label{table:jumpy_colored_hparams}
  \centering
  \begin{tabular}{@{} l@{} *{4}{S@{}} @{}}
  \toprule
   \multirow{2}{*}{Hyperparameter} &  \multicolumn{2}{c}{PSM}  & \multicolumn{2}{c}{$\pi^{*}$-bisimulation} \\\cmidrule(l){2-5}
   &  {CMEs}  & {$\ell_2$-embeddings} &  {CMEs}  &  {$\ell_2$-embeddings} \\\midrule
   Learning Rate  &  4e-3    & 5e-4 & 4.7e-4 & 1e-4 \\
   Contrastive Temperature~($1/\lambda$) & 1.0 & \NA & 5e-1 & \NA \\
   Auxiliary loss coefficient~($\alpha$) & 5.0 & 1e-1 &  1e-1 & 1e-6 \\
   \bottomrule
  \end{tabular}
\vspace{-0.1cm}
\end{table}

\revisionNew{Please note that \tabref{table:jumpy_hparams} and \tabref{table:jumpy_ablation_hparams} correspond to two different tasks: one uses the standard jumping task with white obstacles, while the other uses colored obstacles where the optimal policies depend on color. For fair comparison, we tune hyperparameters for all the methods using Bayesian optimization~\citep{golovin2017google}. We use the best parameters among these tuned hyperparameters and the ones found in \tabref{table:jumpy_hparams}, leading to different parameters for both PSEs as well as RandConv. Evaluating PSEs with the jumping task hyperparameters from \tabref{table:jumpy_hparams} instead of the ones in \tabref{table:jumpy_ablation_hparams} leads to a small drop (-4\%) on the jumping task with colors~(\secref{sec:jumping_orthogonality}). Nevertheless, PSEs still outperform other methods in \secref{sec:jumping_orthogonality}.}

\section{LQR: Additional Details}\label{app:lqr}

\begin{table}[t]
 \caption{\textbf{LQR generalization performance}: Absolute error in LQR cost, w.r.t. the oracle solver (which has access to true state), of various methods trained with $n_d$ distractors on $N = 2$ environments. The reported mean and standard deviations are across 100 different seeds. Lower error is better.}\label{table:lqr}
  \centering
  \begin{tabular}{@{} l*{3}{c} @{}}
  \toprule
  \multirow{2}{*}{Method} & \multicolumn{3}{c@{}}{Number of Distractors~($n_d$)}\\ 
  \cmidrule(l){2-4}
  & 500 &   1000          & 10000  \\ \midrule
   Overparametrization~\citep{song2019observational}    & 25.8~(1.5)     & 24.9 (1.1)       & 24.9~(0.4) \\ 
   IPO~\citep{sonar2020invariant}~(IRM + Policy opt.) & 32.6~(5.0)   & 27.3~(2.8)    & 24.8~(0.4) \\ 
   Weight Sparsity~($\ell_1$-reg.)   & 28.2~(0.0) & 28.2~(0.0) & 28.2~(0.0) \\ 
   PSM~(State aggregation)   & \textbf{0.03} (0.0)      &\textbf{0.03} (0.0)        &  \textbf{0.02}~(0.0) \\
   \bottomrule
  \end{tabular}
\vspace{-0.2cm}
\end{table}

Optimal control with linear dynamics and quadratic cost, commonly known as LQR, has been increasingly used as a simplified surrogate for deep RL problems~\citep{Recht_2019}.
Following \citet{song2019observational, sonar2020invariant}, we analyze the following LQR problem for assessing generalization:
\begin{equation}\label{eq:lqr-prob}
\begin{array}{ll}
\mbox{minimize} \, & \E_{s_0 \sim \gD} \left[\frac{1}{2}\sum_{t=0}^\infty s_t^TQ s_t + a_t^T R a_t \right], \\
\mbox{subject to} & s_{t+1} = A s_t+ B a_t, o_t =  \begin{bmatrix} 0.1\ W_c \\ W_d  \end{bmatrix} s_t,  a_t = K o_t,
\end{array}
\end{equation}
where $\gD$ is the initial state distribution, $s_t \in \sR^{n_s}$ is the (hidden) true state at time t, $a_t \in \sR^{n_a}$ is the control action, and $K$ is the linear policy matrix. The agent receives the input observation $o_t$, which is a linear transformation of the state $s_t$. $W_c$ and $W_d$ are semi-orthogonal matrices to prevent information loss for predicting optimal actions. An environment corresponds to a particular choice of $W_d$; all other system parameters ($A, B, Q, R, W_c$) are fixed matrices which are shared across environments and unknown to the agent. The agent learns the policy matrix $K$ using $N$ training environments based on \eqref{eq:lqr-prob}. At test time, the learned policy is evaluated on environments with unseen $W_d$. 

The generalization challenge in this setup is to ignore the distractors: $W_c s_t \in \sR^{n_s}$ represents the state features invariant across environments while $W_d s_t \in \mathbb{R}^{n_d}$ is a high-dimensional distractor of size $n_s, n_d$, respectively, such that $n_s << n_d$.
Furthermore, the policy matrix which generalizes across all environments is $K_{\star} = \begin{bmatrix} 10\ W_c {P_{\star}}^T \\ 0 \end{bmatrix}^T$, where $P_{\star}$ corresponds to the optimal LQR solution with access to state $s_t$. 
However, for a single environment with distractor $W_d$, multiple solutions exist, for instance, ${K_{\star}}^\prime = \begin{bmatrix} 10 \alpha\ W_c {P_{\star}}^T \\ (1 - \alpha)\ W_d {P_{\star}}^T \end{bmatrix}^T\quad \forall \alpha \in [0, 1]$. 
Note that the distractors are an order of magnitude larger than the invariant features in $o_t$ and dependence on them is likely to cause the agent to act erratically on inputs with unseen distractors, resulting in poor generalization.

We use overparametrized policies with two linear layers, \ie\ $K=K_1 K_2$, where $K_1(o)$ is the learned representation for observation $o$.\comment{of size $10 n_a$} We learn $K$ using gradient descent using the combined cost on 2 training environments with varying number of distractors. We aggregate observation pairs with near-zero PSM by matching their representations using a squared loss. We use the \href{https://github.com/irom-lab/Invariant-Policy-Optimization}{open-source code} released by \citet{sonar2020invariant} for our experiments.

\begin{table}[h]
\centering
\caption{\label{table:hyper_params} An overview of hyper-parameters for LQR.}
\begin{tabular}{lc}
\toprule
Parameter        & Setting \\
\midrule
A & Orthogonal matrix, scaled 0.8 \\ 
B &  $I_{20 \times 20}$ \\
$n_x$ & 20 \\
$n_a$ & 20 \\
Q & $I_{20 \times 20}$ \\
R & $I_{20 \times 20}$ \\
$K_i \, \forall i$ & Orthogonal Initialization, scaled 0.001 \\
$W_d$ & Random semi-orthogonal matrix \\ 
\bottomrule
\end{tabular}
\end{table}
 
The reliance on distractors for IPO also highlights a limitation of IRM: if a model can achieve a solution with zero training error, then any such solution is acceptable by IRM regardless of its generalization ability -- a common scenario with overparametrized deep neural nets~\citep{jin2020domain}.

\subsection{Near-optimality of PSM Aggregation}\label{app:conjecture_psm}

\begin{restatable}{conjecture}{lqrTheorem}\label{thm:lqr_theorem}
   Assuming zero state aggregation error with policy similarity metric~(PSM), the policy matrix $K$ learned using gradient descent is independent of the distractors.
\end{restatable}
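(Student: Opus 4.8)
\medskip
\noindent\textbf{Proof plan.}
The plan is to first reduce the conjecture to a statement about the distractor block of the representation map, and then to attribute the vanishing of the remaining distractor weights to the implicit bias of gradient descent. Write each observation in block form as $o = (o^{(c)}, o^{(d)})$ with invariant part $o^{(c)} = 0.1\,W_c s$ and distractor part $o^{(d)} = W_d s$, and split the first layer conformally, $K_1 = [\,K_1^{(c)} \mid K_1^{(d)}\,]$, so that the representation is $K_1 o = K_1^{(c)} o^{(c)} + K_1^{(d)} o^{(d)}$. Since the action is obtained by mapping this representation linearly through $K_2$, the distractor block of the effective policy matrix $K$ equals $K_2 K_1^{(d)}$, and ``$K$ is independent of the distractors'' is exactly the statement $K_2 K_1^{(d)} = 0$. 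Establishing this is the goal.

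The first real step is to convert ``zero state aggregation error'' into linear algebra. Two observations have zero PSM precisely when they are the observations of one common underlying state $s$ in the two training environments, with distractor matrices $W_{d,1}$ and $W_{d,2}$: by semantic equivalence such states share optimal behaviour, so their PSM is zero. Demanding that their representations coincide and subtracting the (shared) invariant contribution yields $K_1^{(d)}(W_{d,1} - W_{d,2})\,s = 0$ for every state $s$ along the training trajectories. Writing $\Delta = W_{d,1} - W_{d,2}$ and $V = \mathrm{span}\{\Delta s : s \text{ on-trajectory}\} \subseteq \mathbb{R}^{n_d}$, this says exactly that $K_1^{(d)}$ annihilates $V$. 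The same manipulation applied to the requirement that the shared policy output the optimal action $P_\star s$ in both training environments gives the parallel constraint $K_2 K_1^{(d)}\,\Delta s = 0$, so the data constrains the distractor block of $K$ only on the low-dimensional subspace $V$, of dimension at most $n_s \ll n_d$.

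This exposes the crux. Training optimality together with zero aggregation error pins the distractor weights down only on $V$, whereas full distractor-independence requires $K_2 K_1^{(d)} = 0$ on all of $\mathbb{R}^{n_d}$, i.e.\ also on the large complement $V^{\perp}$, where neither loss term exerts any gradient. The plan here is to invoke the implicit regularization of gradient descent on the bilinear parametrization $K = K_2 K_1$ from a small (or balanced) initialization: the aggregation loss supplies an explicit restoring force proportional to $K_1^{(d)}\Delta\Sigma\Delta^{\top}$, with $\Sigma = \sum_s s s^{\top}$, that drives $K_1^{(d)}$ to zero on $V$, while no term in the objective rewards growing $K_1^{(d)}$ along $V^{\perp}$; combined with the conservation law for balanced gradient flow, the distractor weights never accumulate mass off $V$, so the learned $K$ inherits vanishing distractor columns.

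I expect this last step to be the main obstacle, and it is precisely why the statement is offered as a conjecture rather than a theorem. Rigorously characterizing which global minimizer gradient descent selects on this non-convex composite landscape, and certifying that it is the distractor-free one, is an instance of the still-open problem of pinning down the implicit bias of gradient descent in deep linear networks. A complete argument would likely require restricting to vanishing or exactly balanced initialization, so that the gradient-flow trajectory provably remains in the subspace seeded at initialization, or else adding an explicit weight penalty to break the degeneracy in $K_1^{(d)}$ restricted to $V^{\perp}$ in favour of the minimum-norm representative with $K_2 K_1^{(d)} = 0$.
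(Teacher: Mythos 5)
Your proposal follows essentially the same route as the paper: the same block split of $K_1$ into invariant and distractor columns, the same reduction of zero aggregation error to the linear constraint $K_1^{(d)}(W_{d,1}-W_{d,2})s=0$ on visited states, and the same appeal to the implicit (low-rank / minimum-norm) bias of gradient descent on the two-layer linear factorization for the final, conjectural step. Your explicit identification of the unconstrained subspace $V^{\perp}$ is a slightly sharper framing of why that last step cannot currently be made rigorous, but the argument is the paper's.
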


\begin{proof}
  For LQR domains $x, y$, an observation pair (${o_t}^x$, ${o_t}^y$) has zero PSM iff the underlying state $s_t$ is same for both the observations in the pair. This is true, as (a) both domains has the same transition dynamics, as specified by
  \eqref{eq:lqr-prob}, and (b) the optimal policy is deterministic and is completely determined by the current state $s_t$ at any time $t$.

  Assume ${o_t}^x = \begin{bmatrix} 0.1\ W_c \\ W_{d^x}  \end{bmatrix} s_t$ and ${o_t}^{y} = \begin{bmatrix} 0.1\ W_c \\ W_{d^y}  \end{bmatrix} s_t$ for distractor semi-orthogonal matrices $W_{d^\gX}$ and $W_{d^\gY}$, respectively. Furthermore, the representation
  is given by $K_1({o_t}^x)$ and $K_1({o_t}^y)$ respectively. Assume that $K_1 = \begin{bmatrix} K_s & K_d \\ \end{bmatrix}$ where $K_s \in \sR^{h \times  n_s}$ and $K_d \in \sR^{h \times  n_d}$ and $K_1 \in \sR^{h \times  (n_s + n_d)}$.

  Zero state-aggregation error with squared loss implies that for pair (${o_t}^x$, ${o_t}^y$) corresponding to $s_t$,
  \begin{equation}\label{lqe:eq}
    K_1({o_t}^x - {o_t}^y) = K_1  \begin{bmatrix} 0 \\ W_{d^x} - W_{d^y}  \end{bmatrix} s_t = 0 \implies K_d (W_{d^x} - W_{d^y}) s_t = 0
  \end{equation}
  As \eqref{lqe:eq} holds for all states visited by the optimal policy in an infinite horizon LQR, it follows that $K_d (W_{d^x} - W_{d^y}) = 0$. 

  Furthermore, it is well-known that gradient descent tends to find low-rank solutions due to implicit regularization~\citep{arora2019implicit, gunasekar2017implicit}, \eg with small enough step sizes and initialization close enough to the origin, 
  gradient descent on matrix factorization converges to the minimum nuclear norm solution for 2 layer linear networks~\citep{gunasekar2017implicit}. Based on this, we conjecture that $K_d = 0$ which we found to be true in practice. 
\end{proof}

\section{Distracting Control Suite}\label{app:distracting_dm}

\begin{figure}
  \centering
  \includegraphics[width=0.99\linewidth]{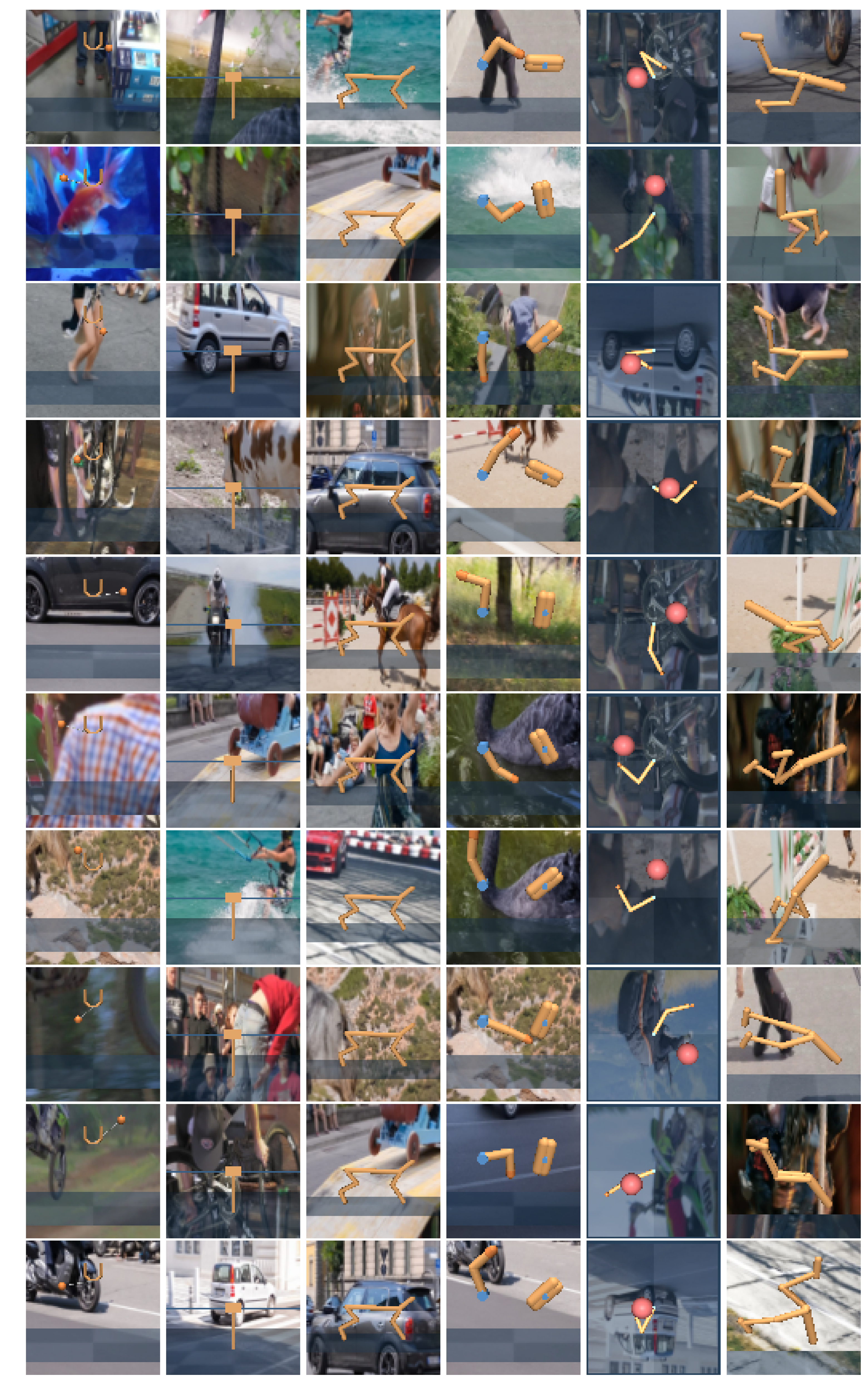} 
  \vspace{-0.45cm}
  \caption{\textbf{DCS Test Environments}: Snapshots of test environments used for evaluating generalization on Distracting Control Suite. Random backgrounds are projected from scenes of the first 30 videos of DAVIS 2017 validation dataset.}\label{fig:dm_control_test_envs}
\end{figure}

We use the same setup as \citet{Kostrikov20, distractingdm2020} for implementation details and training protocol. For completeness, we describe the details below.

{\bf Dynamic Background Distractions}. In Distracting Control Suite~\citep{distractingdm2020}, random backgrounds are projected from scenes of the DAVIS 2017 dataset~\citep{pont20172017} onto the skybox
of the scene. To make these backgrounds visible for all tasks and views, the floor grid is semi-transparent
with transparency coefficient 0.3. We take the first 2 videos in the DAVIS 2017 training set and randomly sample a scene and a frame from those at the
start of every episode. In the dynamic setting, the video plays forwards or backwards until the last or first frame is reached at which point the video
is played backwards. This way, the background motion is always smooth and without ``cuts''. 

\textbf{Soft Actor-Critic}. Soft Actor-Critic (SAC)~\citep{haarnoja2018soft} learns a state-action value function $Q_\theta$, a stochastic policy $\pi_\theta$  and a temperature $\alpha$ to find an optimal policy for an MDP $(\gS, \gA, p, r, \gamma)$ by optimizing a $\gamma$-discounted maximum-entropy objective.
$\theta$ is used generically to denote the parameters updated through training in each part of the model. The actor policy $\pi_\theta(a_t|s_t)$ is a parametric $\mathrm{tanh}$-Gaussian that given $s_t$ samples  $a_t = \mathrm{tanh}(\mu_\theta(s_t)+ \sigma_\theta(s_t) \epsilon)$, where $\epsilon \sim \gN(0, 1)$ and $\mu_\theta$ and $\sigma_\theta$ are parametric mean and standard deviation.

The policy evaluation step  learns  the critic $Q_\theta(s_t, a_t)$ network by optimizing a single-step of the soft Bellman residual
\begin{align*}
    J_Q(\gD) &= \E_{\substack{( s_t,a_t, s'_t) \sim \gD \\ a_t' \sim \pi(\cdot|s_t')}}[(Q_\theta(s_t, a_t) - y_t)^2] \\
    y_t &= r(s_t, a_t) + \gamma [Q_{\theta'}(s'_t, a'_t) - \alpha \log \pi_\theta(a'_t|s'_t)] ,
\end{align*}
where $\gD$ is a replay buffer of transitions, $\theta'$ is an exponential moving average of the weights. SAC uses clipped double-Q learning, which we omit for simplicity but employ in practice.

The policy improvement step then fits the actor policy $\pi_\theta(a_t|s_t)$ network by optimizing the objective
\begin{align*}
    J_\pi(\gD) &= \E_{s_t \sim \gD}[ \KL(\pi_\theta(\cdot|s_t) || \exp\{\frac{1}{\alpha}Q_\theta(s_t, \cdot)\})].
\end{align*}
Finally, the temperature $\alpha$ is learned with the loss
\begin{align*}
    J_\alpha(\gD) &= \E_{\substack{s_t \sim \gD \\ a_t \sim \pi_\theta(\cdot|s_t)}}[-\alpha \log \pi_\theta(a_t|s_t) - \alpha \bar\gH],
\end{align*}
where $\bar\gH \in \mathbb{R}$ is the target entropy hyper-parameter that the policy tries to match, which in practice is usually set to  $\bar{\gH}=-|\gA|$.

\subsection{Actor and Critic Networks}
Following \citet{Kostrikov20}, we use clipped double Q-learning for the critic, where each $Q$-function is parametrized as a 3-layer MLP with \texttt{ReLU} activations after each layer except of the last. The actor is also a 3-layer MLP with \texttt{ReLU}s that outputs mean and covariance for the diagonal Gaussian that represents the policy. The hidden dimension is set to $1024$ for both the critic and actor.

\subsection{Encoder Network} 
We employ the encoder architecture from~\cite{Kostrikov20}. This encoder consists of four convolution layers with $3\times 3$ kernels and $32$ channels. The \texttt{ReLU} activation is applied after each convolutional layer. We use stride to $1$ everywhere, except of the first convolutional layer, which has stride $2$. The output of the convnet is feed into a single fully-connected layer normalized by \texttt{LayerNorm}. Finally, we apply \texttt{tanh} nonlinearity to the $50$ dimensional output of the fully-connected layer. We initialize the weight matrix of fully-connected and convolutional layers with the orthogonal initialization and set the bias to be zero. The actor and critic networks both have separate encoders, although we share the weights of the conv layers between them. Furthermore, only the critic optimizer is allowed to update these weights~(\ie we stop the gradients from the actor before they propagate to the shared convolutional layers).

\vspace{-0.2cm}
\subsection{Contrastive Metric Embedding Loss}
For all our experiments, we use a single ReLU layer with $k=256$ units for the non-linear projection to obtain the embedding $z_\theta$~(\figref{fig:arch_cmes}). We compute the embedding using the penultimate layer in the actor network. 
For picking the hyperparameters, we used 3 temperatures $[0.1, 0.01, 1.0]$ and 3 auxiliary $\mathscr{L}_{\mathrm{CME}}$ loss coefficients $[1, 3, 10]$ using ``Ball In Cup Catch'' as the validation environment. All other hyperparameters are the same as prior work~(see \tabref{table:hyper_params}).

We approximate optimal policies with the policies obtained after training a DrQ agent for 500K environment steps. Since a given action sequence from this approximate policy has the same performance across different training environments, we compute the PSM across training environments, via dynamic programming~(see \secref{dyna_psm} for pseudo-code), using such action sequences.

\textbf{Total Loss}. The total loss is given by 
$\mathscr{L}_\textrm{RL} + \alpha \mathscr{L}_{\mathrm{CME}}$ where  $\mathscr{L}_\textrm{RL}$ is the reinforcement learning loss which combines $J_\pi(\gD)$, $J_\pi(\gD)$, and $J_\alpha(\gD)$), while $\mathscr{L}_{\mathrm{CME}}$ is the auxiliary loss for learning PSEs with the coefficient
$\alpha$.

\subsection{Training and Evaluation Setup}
For evaluation, we use the first 30 videos from the DAVIS 2017 validation dataset~(see \figref{fig:dm_control_test_envs}). Each checkpoint is evaluated by computing the average episode return over 100 episodes from the unseen environments.  All experiments are performed with five random seeds per task used to compute means and standard deviations/errors of their evaluations. We use $K=2, M=2$ as prescribed by \citet{Kostrikov20} for DrQ. Following \citet{Kostrikov20} and \citet{distractingdm2020}, we use a different action repeat hyper-parameter for each task, which we summarize in \tabref{table:action_repeat}.
We construct an observational input as a $3$-stack of consecutive frames~\citep{Kostrikov20}, where each frame is an RGB rendering of size $ 84 \times 84$ from the $0$th camera. We then divide each pixel by $255$ to scale it down to $[0, 1]$ range.
For data augmentation, we maintain temporal consistency by using the same crop augmentation across consecutive frames.

\begin{table}[H]
 \caption{Optimal hyperparameters for PSE auxiliary loss for reporting results in \tabref{tab:dm_control}.}\label{table:distracting_dm_hparams}
  \centering
  \begin{tabular}{l@{}c}
  \toprule
   Hyperparameter &  {Setting} \\
   \midrule
   Contrastive temperature~($1/\lambda$) & 0.1  \\
   Auxiliary loss coefficient~($\alpha$) & 1.0  \\
   $\Gamma$-scale parameter~($\beta$) & 0.1 \\
   Batch Size ($\mathscr{L}_\mathrm{CME}$) & 128 \\
   $|\tau_\gX^*|$ & 1000 // Action Repeat \\
   \bottomrule
  \end{tabular}
\vspace{-0.1cm}
\end{table}

\begin{table}[H]
\centering
\caption{\label{table:hyper_params} Hyper-parameters taken from \citet{Kostrikov20} in the Distracting Control Suite experiments.}
\begin{tabular}{lc}
\toprule
Parameter        & Setting \\
\midrule
Replay buffer capacity & $100,000$ \\
Seed steps & $1,000$ \\
Batch size~(DrQ) & $512$ \\
Discount $\gamma$ & $0.99$ \\
Optimizer & Adam \\
Learning rate & $10^{-3}$ \\
Critic target update frequency & $2$ \\
Critic Q-function soft-update rate $\tau$ & $0.01$ \\
Actor update frequency & $2$ \\
Actor log stddev bounds & $[-10, 2]$ \\
Init temperature & $0.1$ \\
\bottomrule
\end{tabular}
\vspace{-0.1cm}
\end{table}

\begin{table}[H]
\centering
\caption{\label{table:action_repeat} The action repeat hyper-parameter used for each task in the Distracting Control Suite benchmark.}
\begin{tabular}{@{}lc@{}}
\toprule
Task name        & Action repeat \\
\midrule
Cartpole Swingup &  $8$ \\
Reacher Easy &  $4$ \\
Cheetah Run & $4$ \\
Finger Spin & $2$ \\
Ball In Cup Catch & $4$ \\
Walker Walk & $2$ \\
\bottomrule
\end{tabular}
\end{table}

\subsection{Generalization Curves}

\begin{figure}[H]
\centering
\includegraphics[width=\columnwidth]{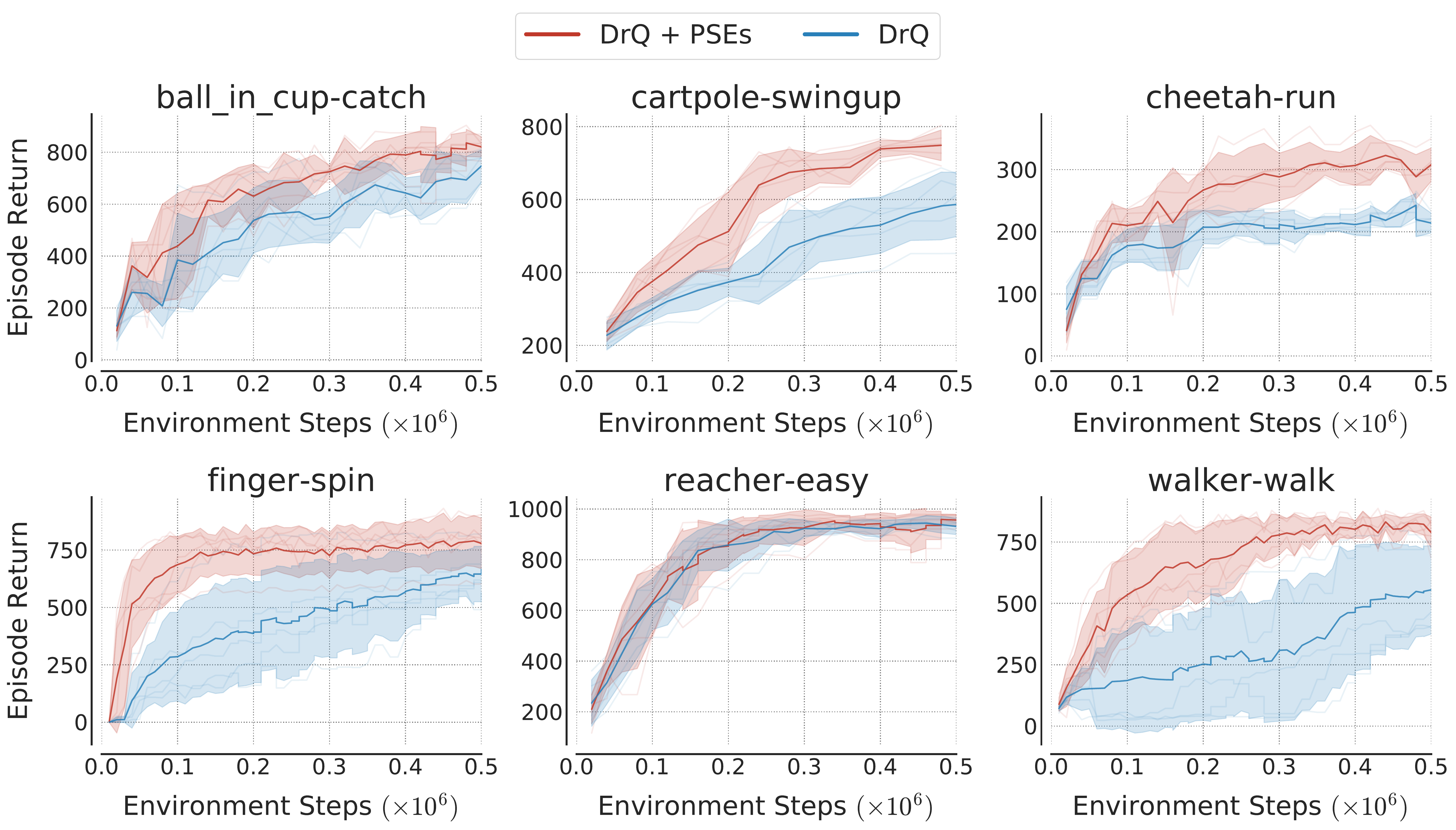}
\caption{\textbf{Random Initialization}. Generalization performance on unseen environments over the course of training. The agent is initialized randomly. DrQ augmented with PSEs outperforms DrQ. We plot the average episode return across 5 seeds and the shaded region shows the standard deviation. Each checkpoint is evaluated using 100 episodes with unseen backgrounds.}\label{fig:pse_vs_drq}
\end{figure}

\begin{figure}[H]
\centering
\includegraphics[width=\columnwidth]{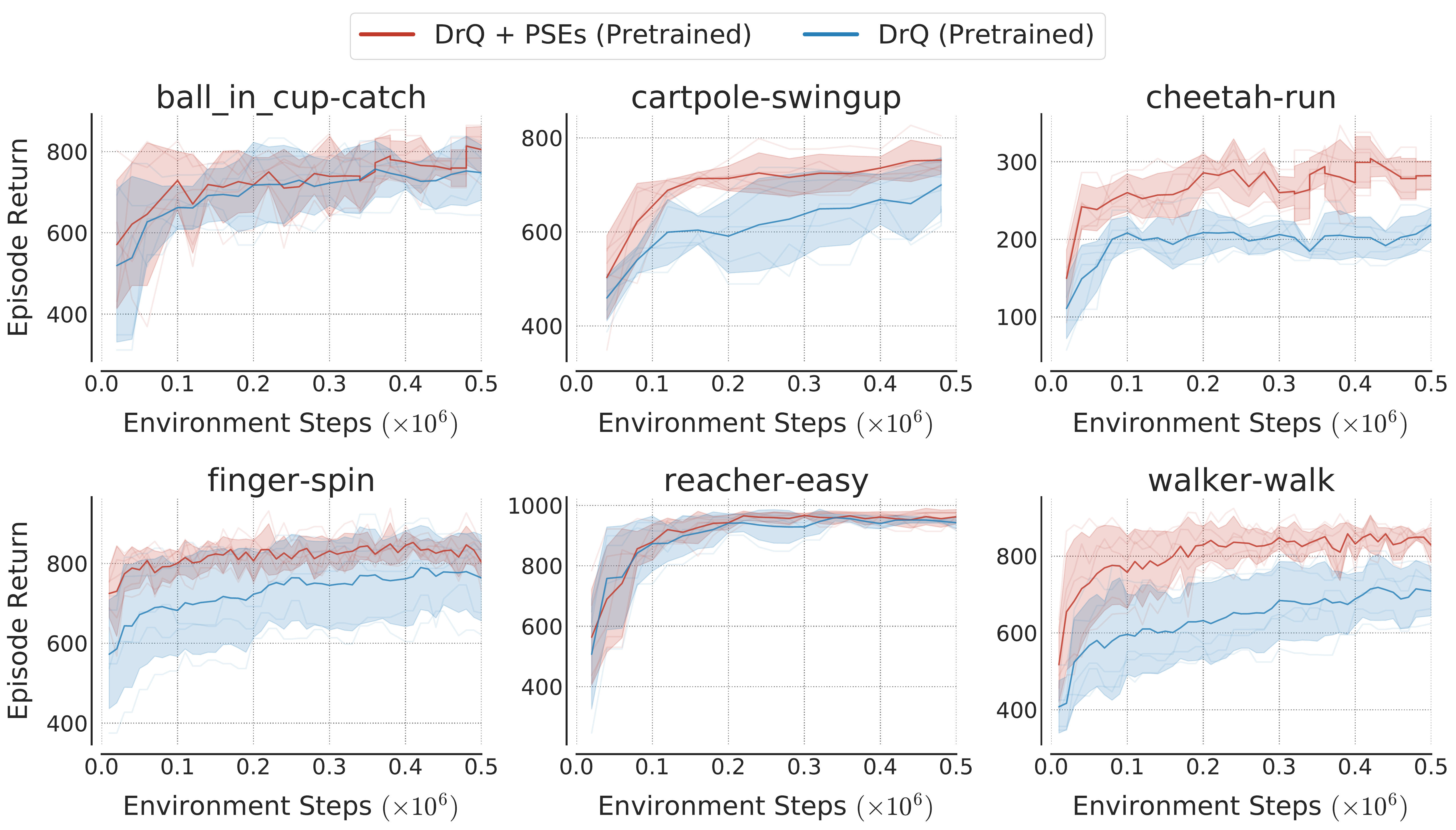}
\caption{\textbf{Pretrained Initialization}. Generalization performance on unseen environments over the course of training.
The agent is initialized using a pretrained DrQ agent. DrQ augmented with PSEs outperforms DrQ on most of the environments. We plot the average return across 5 seeds and the shaded region shows the standard deviation. Each checkpoint is evaluated using 100 episodes with unseen backgrounds.}\label{fig:pse_vs_drq_pretrained}
\end{figure}





\section{Pseudo code}

\subsection{Dynamic Programming for Computing PSM}\label{dyna_psm}
\lstinputlisting[language=Python]{code/fixed_point_iter.py}

\subsection{Contrastive Loss}\label{app:contrastive_loss_code}
\lstinputlisting[language=Python]{code/contrastive_loss.py}


a

\end{document}